\documentclass[a4paper,11pt]{article}

\usepackage{times}
\usepackage{amsmath}
\usepackage{algorithmic}
\usepackage{algorithm}
\usepackage{amsthm}
\usepackage{amsfonts}
\usepackage{comment}
\usepackage{graphicx}
\usepackage{hyperref,fullpage}
\usepackage{amsmath,amsfonts,amsthm,amssymb}
\usepackage{color}
\def\shmak{{\it C\ }}
\newcommand{\SRC}{\mathcal{SRW}}
\newcommand{\polyring}[1]{\mathbb{R}[x]_{#1}}
\newcommand{\Po}{P_{\Omega}}

\newcommand{\gNorm}[1]{\left|#1\right|_{1}}
\newcommand{\sos}{\textup{sos}}

\newcommand{\ggNorm}[1]{{\left\vert\kern-0.25ex\left\vert\kern-0.25ex\left\vert #1 		\right\vert\kern-0.25ex\right\vert\kern-0.25ex\right\vert}}

\newcommand{\mper}{\,.}

\newcommand{\R}{\mathbb{R}}

\newcommand{\Var}{\textup{Var}}

\newcommand{\Exp}{\mathop{\mathbb{E}}}

\newcommand{\mat}{\mathcal{M}}

\newcommand{\pca}{\textup{pca}}
\newcommand{\spd}{\textup{sa}}

\newcommand{\cH}{\mathcal{H}}
\newcommand{\tp}[1]{^{\otimes #1}}
\newcommand{\op}{\textup{op}}
\newcommand{\tsop}{\textup{op}}
\newcommand{\cD}{\D}

\newcommand{\inner}[1]{\langle #1 \rangle }
\newcommand{\Norm}[1]{\left\|#1\right\|}

\newcommand{\pE}{\tilde{\mathbb{E}}}











\usepackage[american]{babel}






\usepackage{amsthm}

\usepackage{algorithm}
\usepackage{algorithmic}

\newtheorem*{fact*}{Fact}
\newtheorem*{hypothesis*}{Hypothesis}

\theoremstyle{remark}

\newtheorem*{claim*}{Claim}

\newtheorem*{remark*}{Remark}

\newtheorem*{observation*}{Observation}

%


\usepackage{textcomp} 
\usepackage[scr=rsfso]{mathalfa}
\usepackage{bm} 





\usepackage{prettyref}


\newcommand{\savehyperref}[2]{\texorpdfstring{\hyperref[#1]{#2}}{#2}}

\newrefformat{eq}{\savehyperref{#1}{\textup{(\ref*{#1})}}}
\newrefformat{eqn}{\savehyperref{#1}{\textup{(\ref*{#1})}}}
\newrefformat{lem}{\savehyperref{#1}{Lemma~\ref*{#1}}}
\newrefformat{def}{\savehyperref{#1}{Definition~\ref*{#1}}}
\newrefformat{thm}{\savehyperref{#1}{Theorem~\ref*{#1}}}
\newrefformat{cor}{\savehyperref{#1}{Corollary~\ref*{#1}}}
\newrefformat{cha}{\savehyperref{#1}{Chapter~\ref*{#1}}}
\newrefformat{sec}{\savehyperref{#1}{Section~\ref*{#1}}}
\newrefformat{app}{\savehyperref{#1}{Appendix~\ref*{#1}}}
\newrefformat{tab}{\savehyperref{#1}{Table~\ref*{#1}}}
\newrefformat{fig}{\savehyperref{#1}{Figure~\ref*{#1}}}
\newrefformat{hyp}{\savehyperref{#1}{Hypothesis~\ref*{#1}}}
\newrefformat{alg}{\savehyperref{#1}{Algorithm~\ref*{#1}}}
\newrefformat{rem}{\savehyperref{#1}{Remark~\ref*{#1}}}
\newrefformat{item}{\savehyperref{#1}{Item~\ref*{#1}}}
\newrefformat{step}{\savehyperref{#1}{step~\ref*{#1}}}
\newrefformat{conj}{\savehyperref{#1}{Conjecture~\ref*{#1}}}
\newrefformat{fact}{\savehyperref{#1}{Fact~\ref*{#1}}}
\newrefformat{prop}{\savehyperref{#1}{Proposition~\ref*{#1}}}
\newrefformat{prob}{\savehyperref{#1}{Problem~\ref*{#1}}}
\newrefformat{claim}{\savehyperref{#1}{Claim~\ref*{#1}}}
\newrefformat{relax}{\savehyperref{#1}{Relaxation~\ref*{#1}}}
\newrefformat{red}{\savehyperref{#1}{Reduction~\ref*{#1}}}
\newrefformat{part}{\savehyperref{#1}{Part~\ref*{#1}}}


\newcommand{\Sref}[1]{\hyperref[#1]{\S\ref*{#1}}}

\usepackage{nicefrac}









\newcommand{\Paren}[1]{\left(#1\right)}





\newcommand{\Set}[1]{\left\{#1\right\}}









\newcommand{\Psymb}{\mathbb{P}}

\DeclareMathOperator*{\ProbOp}{\Psymb}

\renewcommand{\Pr}{\ProbOp}










\newcommand{\textparen}[1]{\text{(#1)}}

\ifx\because\undefined
\newcommand{\because}[1]{\textparen{because #1}}
\else
\renewcommand{\because}[1]{\textparen{because #1}}
\fi






















\newcommand{\mcom}{\,,}

\newcommand\bdot\bullet


\ifx\mathds\undefined 

\else
}
\fi






\DeclareMathOperator{\argmax}{argmax}














\newcommand{\cA}{\mathcal A}

\newcommand{\cM}{\mathcal M}

\renewcommand{\leq}{\leqslant}
\renewcommand{\le}{\leqslant}

\renewcommand{\ge}{\geqslant}



\let\epsilon=\varepsilon

\numberwithin{equation}{section}



\newcommand\MYcurrentlabel{xxx}

\newcommand{\MYstore}[2]{%
  \global\expandafter \def \csname MYMEMORY #1 \endcsname{#2}%
}

\newcommand{\MYload}[1]{%
  \csname MYMEMORY #1 \endcsname%
}

\newcommand{\MYnewlabel}[1]{%
  \renewcommand\MYcurrentlabel{#1}%
  \MYoldlabel{#1}%
}

\newcommand{\MYdummylabel}[1]{}

\newcommand{\torestate}[1]{%
  \let\MYoldlabel\label%
  \let\label\MYnewlabel%
  #1%
  \MYstore{\MYcurrentlabel}{#1}%
  \let\label\MYoldlabel%
}

\newcommand{\restatetheorem}[1]{%
  \let\MYoldlabel\label
  \let\label\MYdummylabel
  \begin{theorem*}[Restatement of \prettyref{#1}]
    \MYload{#1}
  \end{theorem*}
  \let\label\MYoldlabel
}

\newcommand{\restatelemma}[1]{%
  \let\MYoldlabel\label
  \let\label\MYdummylabel
  \begin{lemma*}[Restatement of \prettyref{#1}]
    \MYload{#1}
  \end{lemma*}
  \let\label\MYoldlabel
}

\newcommand{\restateprop}[1]{%
  \let\MYoldlabel\label
  \let\label\MYdummylabel
  \begin{proposition*}[Restatement of \prettyref{#1}]
    \MYload{#1}
  \end{proposition*}
  \let\label\MYoldlabel
}

\newcommand{\restatefact}[1]{%
  \let\MYoldlabel\label
  \let\label\MYdummylabel
  \begin{fact*}[Restatement of \prettyref{#1}]
    \MYload{#1}
  \end{fact*}
  \let\label\MYoldlabel
}

\newcommand{\restate}[1]{%
  \let\MYoldlabel\label
  \let\label\MYdummylabel
  \MYload{#1}
  \let\label\MYoldlabel
}





\let\origparagraph\paragraph
\renewcommand{\paragraph}[1]{\origparagraph{#1.}}


\allowdisplaybreaks


\sloppy



\usepackage{paralist}


\usepackage{comment}


\newtheorem{theorem}{Theorem}[section]
\newtheorem{definition}{Definition}[section]

\newtheorem{proposition}[theorem]{Proposition}
\newtheorem{corollary}[theorem]{Corollary}
\newtheorem{lemma}[theorem]{Lemma}

\def\Y{{\mathcal Y}}

\newcommand{\D}{{\mathcal D}}
\renewcommand{\H}{{\mathcal H}}
\newcommand{\X}{{\mathcal X}}

\def\L{{\mathcal L}}

\def\reals{{\mathbb R}}
\def\R{{\mathbb R}}

\def\norm#1{\mathopen\| #1 \mathclose\|}

\newcommand{\conv}{\mathop{\mbox{\rm conv}}}

\newcommand{\poly}{\mathop{\mbox{\rm poly}}}

\newcommand{\loss}{\mathop{\mbox{\rm loss}}}
\newcommand{\E} {\mathop{\mathbb E}}

\DeclareMathOperator*{\argmin}{arg\,min}

\newcommand{\ignore}[1]{}

\def\trace{{\bf Tr}}

\def\reals{{\mathbb R}}

\def\bold0{\mathbf{0}}

\def\trace{{\bf Tr}}
\def\Exp{\mathop{\mathbb E}}

\renewcommand{\deg}{\mathop{\mbox{\rm deg}}}
\newcommand{\eps}{\varepsilon}






\def\eps{\varepsilon}
\def\epsilon{\varepsilon}

\def\R{\ensuremath{\mathbb R}}

\usepackage{times}

\title{A Non-generative Framework and Convex Relaxations for Unsupervised Learning}
\author{Elad Hazan \and Tengyu Ma}

\def\shownotes{0} 
 \ifnum\shownotes=1
\newcommand{\authnote}[2]{$\ll$\textsf{\footnotesize #1 notes: #2}$\gg$}
\else
\newcommand{\authnote}[2]{}
\fi

\begin{document}
\maketitle

\begin{abstract}
We give a novel formal theoretical framework for unsupervised learning with two distinctive characteristics. First, it does not assume any generative model and based on a worst-case performance metric. Second, it is comparative, namely performance is measured with respect to a given hypothesis class. This allows to avoid known computational hardness results and improper algorithms based on convex relaxations.  We show how several families of unsupervised learning models, which were previously only analyzed under probabilistic assumptions and are otherwise provably intractable, can be efficiently learned in our framework by convex optimization.  
\end{abstract}


\section{Introduction}

Unsupervised learning is the task of learning structure from unlabelled examples.  Informally, the main goal of unsupervised learning is to extract structure from the data in a way that will enable efficient learning from future labelled examples for potentially numerous independent tasks.  

It is useful to recall the Probably Approximately Correct (PAC) learning theory for supervised learning \cite{valiant1984theory}, based on Vapnik's statistical learning theory \cite{Vapnik1998}. In PAC learning, the learning can access labelled examples from an unknown distribution. On the basis of these examples, the learner constructs a hypothesis that generalizes to unseen data.  A concept is said to be learnable with respect to a hypothesis class if there exists an (efficient) algorithm that outputs a generalizing hypothesis with high probability after observing polynomially many examples in terms of the input representation. 

The great achievements of PAC learning that made it successful are its generality and algorithmic applicability:   PAC learning does not restrict the input domain in any way, and thus allows very general learning, without generative or distributional assumptions on the world.  Another important feature is the restriction to specific hypothesis classes,  without which there are simple impossibility results such as the ``no free lunch" theorem. This allows {\it comparative} and {\it improper} learning of  computationally-hard concepts.

The latter is a very important point which is often understated. Consider the example of sparse regression, which is a canonical problem in high dimensional statistics. 
Fitting the best sparse vector to linear prediction is an NP-hard problem~\cite{DBLP:journals/siamcomp/Natarajan95}. 
However, this does not prohibit improper  learning, since we can use a $\ell_1$ convex relaxation for the sparse vectors (famously known as LASSO~\cite{10.2307/2346178}).

Unsupervised learning, on the other hand, while extremely applicative and well-studied, has not seen such an inclusive theory. The most common approaches, such as restricted Boltzmann machines, topic models, dictionary learning, principal component analysis and metric clustering,  are based almost entirely on generative assumptions about the world.  This is a strong restriction which makes it very hard to analyze such approaches in scenarios for which the assumptions do not hold.   A more discriminative approach is based on compression, such as the Minimum Description Length criterion. This approach gives rise to provably intractable problems and doesn't allow improper learning.

\paragraph{Main results} 

We start by proposing a rigorous framework for unsupervised learning which allows data-dependent, comparative learning without  generative assumptions.  It is general enough to encompass previous methods such as PCA, dictionary learning and topic models. Our main contribution are optimization-based relaxations and efficient algorithms that are shown to improperly probably learn previous models, specifically: 
\begin{enumerate}
\item
We consider the classes of hypothesis known as dictionary learning. We give a more general hypothesis class which encompasses and generalizes it according to our definitions. We proceed to give novel polynomial-time algorithms for learning the broader class. These algorithms are based on new techniques in unsupervised learning, namely sum-of-squares convex relaxations. 

As far as we know, this is the first result for efficient improper learning of dictionaries without generative assumptions. Moreover, our result handles polynomially over-complete dictionaries, while previous works~\cite{DBLP:conf/colt/AroraGMM15,DBLP:conf/stoc/BarakKS15} apply to at most constant factor over-completeness.

\item
We give efficient algorithms for learning a new hypothesis class which we call spectral autoencoders. We show that this class generalizes, according to our definitions,  the class of PCA (principal component analysis) and its kernel extensions.

\end{enumerate}

\paragraph{Structure of this paper} In the following chapter we a non-generative, distribution-dependent definition for unsupervised learning which mirrors that of PAC learning for supervised learning. 
We then proceed to an illustrative example and show how Principal Component Analysis can be formally learned in this setting.  The same section also gives a much more general class of hypothesis for unsupervised learning which we call polynomial spectral decoding, and show how they can be efficient learned in our framework using convex optimization. Finally, we get to our main contribution: a convex optimization based methodology for improper learning a wide class of hypothesis, including dictionary learning.

\subsection{Previous work}

The vast majority of work on unsupervised learning, both theoretical as well as applicative,  focuses on generative models. These include topic models \cite{Blei2003}, dictionary learning \cite{Donoho2006},  Deep Boltzmann Machines and deep belief networks \cite{Salakhutdinov2009} and many more.  Many times these models entail non-convex optimization problems that are provably NP-hard to solve in the worst-case. 

A recent line of work in theoretical machine learning attempts to give efficient algorithms for these models with provable guarantees. Such algorithms were given for topic models \cite{arora2012learning}, dictionary learning \cite{arora2013new,DBLP:conf/colt/AroraGMM15}, mixtures of gaussians and hidden Markov models \cite{hsu2013learning,Anandkumar2014} and more.   However, these works retain, and at times even enhance, the probabilistic generative assumptions of the underlying model.  
Perhaps the most widely used unsupervised learning methods are clustering algorithms such as k-means, k-medians and principal component analysis (PCA), though these  lack generalization guarantees.  An axiomatic approach to clustering was initiated by Kleinberg \cite{kleinberg03} and pursued further in \cite{benda09}.  A discriminative generalization-based approach for clustering was undertaken in \cite{Balcan:2008:DFC:1374376.1374474} within the model of similarity-based clustering.

Another approach from the information theory literature studies with online lossless compression. The relationship between compression and machine learning goes back to the Minimum Description Length criterion \cite{rissanen1978modeling}. More recent work in information theory gives online algorithms that attain optimal compression, mostly for finite alphabets \cite{DBLP:conf/isit/AcharyaDJOS13,DBLP:journals/tit/OrlitskySZ04}. For infinite alphabets, which are the main object of study for unsupervised learning of signals such as images,  there are known impossibility results \cite{DBLP:journals/tcs/JevticOS05}.  This connection to compression was recently further advanced, mostly in the context of textual data \cite{paskov2013compressive}.  

In terms of lossy compression, Rate Distortion Theory (RDT) \cite{berger1971rate,Cover:2006} is intimately related to our definitions, as a framework for finding lossy compression with minimal distortion (which would correspond to reconstruction error in our terminology). Our learnability definition can be seen of an extension of RDT to allow improper learning and generalization error bounds. Another learning framework derived from lossy compression is the information bottleneck criterion \cite{IB-tishby}, and its learning theoretic extensions \cite{Shamir2008}.  The latter framework assumes an additional feedback signal, and thus is not purely unsupervised. 

The downside of the information-theoretic approaches is that worst-case competitive compression is provably computationally hard under cryptographic assumptions. In contrast, our compression-based approach is based on learning a restriction to a specific hypothesis class, much like PAC-learning. This circumvents the impossibility results and allows for improper learning.

\section{A formal framework for unsupervised learning} 
The basis constructs in an unsupervised learning setting are: 
\begin{enumerate}
	\item
	Instance domain $\X$, such as images, text documents, etc.  Target space, or range, $\Y$.  We usually think of $\X = \reals^d, \Y= \reals^k$ with $d \gg k$.  (Alternatively, $\Y$ can be all sparse vectors in a larger space. )
	\item
	An unknown, arbitrary distribution $\D $ on domain $\X$. 
	\item
	A hypothesis class of decoding and encoding pairs, 
	$$  \H \subseteq \{ (h,g) \in \{ \X \mapsto \Y \}  \times  \{\Y \mapsto \X \} \} ,$$ 
	where $h$ is the encoding hypothesis and $g$ is the decoding hypothesis.  	\item A  loss function $\ell : \H\times \X \mapsto \reals_{\ge 0} $ that 
	measures the reconstruction error, 
	$$ \ell((g,h), x)\mper$$  For example, a natural choice is the $\ell_2$-loss $\ell((g,h), x)= \| g ( h(x)) - x \|_2^2$. The rationale here is 
		to learn structure without significantly compromising supervised learning for {\it arbitrary} future tasks. Near-perfect reconstruction is sufficient as formally proved in Appendix~\ref{appendix-reconstruction}.  Without generative assumptions, it can be seen that near-perfect reconstruction is also necessary.

	\end{enumerate}

For convenience of notation, we use $f$ as a shorthand for $(h,g)\in \cH$, a member of the hypothesis class $\H$. Denote the generalization ability of an unsupervised learning algorithm with respect to a distribution $\D$ as
$$ \loss_{\D}(f) = \E_{x \sim \D} [\ell(f , x ) ] .$$

We can now define the main object of study: unsupervised learning with respect to a given hypothesis class. The definition is parameterized by real numbers: the first is the encoding length (measured in bits) of the hypothesis class. The second is the bias, or additional error compared to the best hypothesis. Both parameters are necessary to allow improper learning.

\begin{definition}\label{def:main}
We say that instance $\D,\X$ is $(k,\gamma)$-\shmak-learnable with respect to hypothesis class $\H$ if exists an algorithm that for every $ \delta,\epsilon > 0$, after seeing $m(\epsilon,\delta) = \poly(1/\eps,\log(1/\delta),d )$ examples,  returns an encoding and decoding pair $(h,g)$ (not necessarily from $\H$) such that: 
	\begin{enumerate}
		\item
		with probability at least $1-\delta$, $ \loss_{\D} ( (h,g))  \leq \min_{ (h,g) \in \H} \loss_{\D} ((h,g))  + \epsilon + \gamma$.
		\item
		$h(x)$ has an explicit representation with length at most $k$ bits. 
\end{enumerate}

			\end{definition}

For convenience we typically encode into real numbers instead of bits. Real encoding can often (though not in the worst case) be trivially transformed to be binary with a loss of logarithmic factor.

Following PAC learning theory, we can use uniform convergence to bound the generalization error of the empirical risk minimizer (ERM). Define the empirical loss for a given sample $ S \sim \D^m$ as 
$$ \loss_{S}(f) = \frac{1}{m} \cdot \sum_{x \in S } \ell ( f ,x ) $$
Define the ERM hypothesis for a given sample $S \sim \D^m$ as 
$$ \hat{f}_{ERM} = \argmin_{\hat{f} \in \H} \loss_S(\hat{f} )  \mper $$

	For a hypothesis class $\H$, a  loss function $\ell$ and a set of $m$ samples $S\sim \D^m$, define the empirical Rademacher complexity of $\H$ with respect to $\ell$ and $S$ as, \footnote{Technically, this is the Rademacher complexity of the class of functions $\ell \circ \cH$. However, since $\ell$ is usually fixed for certain problem, we emphasize in the definition more the dependency on $\cH$.}
	$$ \mathcal{R}_{S,\ell}(\H) = \E _{\sigma \sim \{ \pm 1\}^m } \left[ \sup_{f \in \H} \frac{1}{m} \sum_{x \in S} \sigma_i \ell( f ,x ) \right]   $$ 
	Let the Rademacher complexity of $\H$ with respect to distribution $\D$ and loss $\ell$  as	$ \mathcal{R}_m(\H) = \E_{S \sim \D^m} [ \mathcal{R} _{S,\ell}(\H) ] $. When it's clear from the context, we will omit the subscript $\ell$. 

We can now state and apply standard generalization error results. The proof of following theorem is almost identical to \cite[Theorem 3.1]{mohri2012foundations}. For completeness we provide a proof in Appendix~\ref{subsec:generalization_proof}. 
\begin{theorem}\label{thm:rademacher-generalization}
	For any $\delta > 0$, with probability $1 - \delta$, the generalization error of the ERM hypothesis is bounded by: 
	$$ \loss_{\D} (  \hat{f}_{ERM} ) \leq \min_{f \in \H} \loss_{\D} (f)  + 6 \mathcal{R}_m(\H) + \sqrt{ \frac{4 \log \frac{1}{\delta}}{2 m }} $$
\end{theorem}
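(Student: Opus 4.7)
The plan is to follow the standard Rademacher-complexity route from Mohri, Rostamizadeh and Talwalkar, assuming (possibly after rescaling) that the loss takes values in $[0,1]$ so that bounded-differences style inequalities are applicable. The central object to control is the one-sided deviation
$$ \Phi(S) \;:=\; \sup_{f \in \H} \bigl( \loss_\D(f) - \loss_S(f) \bigr). $$
Replacing a single sample in $S$ by another changes any individual $\loss_S(f)$ by at most $1/m$, hence changes $\Phi(S)$ by at most $1/m$. McDiarmid's inequality then gives
$$ \Phi(S) \;\leq\; \E_{S}[\Phi(S)] \;+\; \sqrt{\tfrac{\log(1/\delta)}{2m}} $$
with probability at least $1-\delta$.

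Next I would run the standard symmetrization argument: introduce an i.i.d.\ ghost sample $S'$ and Rademacher signs $\sigma_i \in \{\pm 1\}$, write $\loss_\D(f) = \E_{S'}[\loss_{S'}(f)]$, push the expectation outside the supremum, and then use symmetry of $\loss_{S'}(f) - \loss_S(f)$ under swapping of $S$ and $S'$ to introduce the $\sigma_i$. This yields
$$ \E_S[\Phi(S)] \;\leq\; 2\, \cR_m(\H). $$
Combined with the concentration step, this is the one-sided uniform convergence bound. If I want to express the bound in terms of the empirical Rademacher complexity $\cR_{S,\ell}(\H)$ rather than $\cR_m(\H)$, a second application of McDiarmid to $\cR_{S,\ell}(\H)$ (which again has bounded differences of order $1/m$) lets me pay one more $\cR_m(\H)$-like term; this sort of bookkeeping is where the constant $6$ in front of $\cR_m(\H)$ ultimately comes from.

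The last step is the usual ERM decomposition
$$ \loss_\D(\hat f_{ERM}) - \loss_\D(f^*) \;=\; \bigl[\loss_\D(\hat f_{ERM}) - \loss_S(\hat f_{ERM})\bigr] \;+\; \bigl[\loss_S(\hat f_{ERM}) - \loss_S(f^*)\bigr] \;+\; \bigl[\loss_S(f^*) - \loss_\D(f^*)\bigr], $$
where $f^* \in \arg\min_{f \in \H} \loss_\D(f)$. The middle bracket is $\le 0$ by the definition of $\hat f_{ERM}$, the first bracket is bounded by $\Phi(S)$, and the third bracket, since $f^*$ is a single fixed hypothesis, can be controlled either by Hoeffding directly or by the analogous reverse-direction uniform bound. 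Assembling these pieces and tracking constants through the union bound yields the claimed inequality.

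Conceptually there is no obstacle beyond the textbook argument; the actual difficulty is purely clerical, namely propagating the factors $2$ from symmetrization, the extra $\cR_m(\H)$ term from swapping population and empirical Rademacher complexities, and the $\sqrt{\log(1/\delta)/m}$ terms from the two applications of McDiarmid/Hoeffding, in such a way that they collapse to the stated $6\,\cR_m(\H) + \sqrt{4\log(1/\delta)/(2m)}$.
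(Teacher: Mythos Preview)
Your proposal is correct and follows essentially the same route as the paper: the paper simply quotes the one-sided uniform deviation bound $\loss_\D(f)\le \loss_S(f)+2\cR_m(\H)+\sqrt{\log(1/\delta)/(2m)}$ from Mohri--Rostamizadeh--Talwalkar as a black box (which is exactly what your McDiarmid-plus-symmetrization argument proves), applies the ERM inequality $\loss_S(\hat f_{ERM})\le \loss_S(f^*)$, and then invokes the reverse-direction uniform bound on $f^*$, collecting constants. One small remark: the theorem is stated with the \emph{population} Rademacher complexity $\cR_m(\H)$, so the extra factor you attribute to ``swapping population and empirical Rademacher complexities'' does not actually enter here; the $6\cR_m(\H)$ in the paper is just a (slightly generous) tally of the two applications of the uniform bound.
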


An immediate corollary of the theorem is that as long as the Rademacher complexity of a hypothesis class approaches zero as the number of examples goes to infinity, it can be \shmak~learned by an \textit{inefficient} algorithm that optimizes over the hypothesis class by enumeration and outputs an best hypothesis with encoding length $k$ and bias $\gamma = 0$. Not surprisingly such optimization is often intractable and hences the main challenge is to design efficient algorithms. As we will see in later sections, we often need to trade the encoding length and bias slightly for computational efficiency.

\paragraph{Notation}

For every vector $z \in \R^{d_1}\otimes \R^{d_2}$, we can view it as a matrix of dimension $d_1\times d_2$, which is denoted as $\mat(z)$. Therefore in this notation, $\mat(u\otimes v) = uv^{\top}$. 

Let $v_{\max}(\cdot) : (\R^{d})^{\otimes 2}\rightarrow \R^d$ be the function that compute the top right-singular vector of some vector in $(\R^{d})^{\otimes 2}$ viewed as a matrix. That is, for $z\in (\R^{d})^{\otimes 2}$, then $v_{\max}(z)$ denotes the top right-singular vector of $\mat(z)$. We also overload  the notation $v_{\max}$ for generalized eigenvectors of higher order tensors. For $T\in (\R^d)^{\otimes \ell}$, let $v_{\max}(T) = \argmax_{\|x\|\le 1} T(x,x,\dots,x)$ where $T(\cdot)$ denotes the multi-linear form defined by tensor $T$.

We use $\|A\|_{\ell_p\to \ell_q}$ to denote the induced operator norm of $A$ from $\ell_p$ space to $\ell_q$ space. For simplicity, we also define
$|A|_1  = \|A\|_{\ell_{\infty}\to \ell_{1}} = \sum_{ij} |A_{ij}|$, $|A|_{\infty} = \|A\|_{\ell_1\to\ell_{\infty}} = \max_{ij}|A_{ij}|$. We note that $\|A\|_{\ell_1\to \ell_1}$ is the max column $\ell_1$ norm of $A$, and $\|A\|_{\ell_1\to \ell_2}$ is the largest column $\ell_2$ norm of $A$.

\section{Spectral autoencoders: unsupervised learning of algebraic manifolds}
\renewcommand{\ss}{s}
\subsection{Algebraic manifolds} 

The goal of the spectral autoencoder hypothesis class we define henceforth is to learn the representation of data that lies on a low-dimensional algebraic variety/manifolds. The linear variety, or linear manifold, defined by the roots of linear equations, is simply a linear subspace. If the data resides in a linear subspace, or close enough to it, then PCA is effective at learning its succinct representation. 

One extension of the linear manifolds is the set of roots of low-degree polynomial equations. Formally, let $k,s$ be integers and let $c_1,\dots, c_{d^s-k}\in \R^{d^s}$ be a set of vectors in $d^s$ dimension, and consider the algebraic variety 
\begin{align}
\mathcal{M} = \Set{x\in \R^d: \forall i\in [d^s-k], \inner{c_i, x^{\otimes s} }=0}\nonumber\mper
\end{align}
Observe that here each constraint $\inner{c_i, x^{\otimes s}}$ is a degree-$s$ polynomial over variables $x$, and when $s= 1$ the variety $\mathcal{M}$ becomes a liner subspace. Let $a_1,\dots, a_k\in \R^{d^s}$ be a basis of the subspaces orthogonal to all of $c_1,\dots, c_{d^s-k}$, and let $A\in \R^{k\times d^s}$ contains $a_i$ as rows. Then we have that given $x\in \mathcal{M}$, the encoding 
\begin{align}
y = Ax^{\otimes s} \nonumber
\end{align}
pins down all the unknown information regarding $x$. In fact, for any $x\in \mathcal{M}$, we have $A^{\top}Ax^{\otimes s} = x^{\otimes s}$ and therefore $x$ is decodable from $y$. The argument can also be extended to the situation when the data point is close to $\mathcal{M}$ (according to a metric, as we discuss later). The goal of the rest of the subsections is to learn the encoding matrix $A$ given data points residing close to $\mathcal{M}$.

\subsection{Warm up: PCA and kernel PCA}
In this section we illustrate our framework for agnostic unsupervised learning by showing how PCA and kernel PCA can be efficiently learned within our model.  The results of this sub-section are not new, and given only for illustrative purposes. 

The class of hypothesis corresponding to PCA operates on domain $\X = \reals^d$ and range $\Y = \reals^k$ for some $k < d$ via linear operators. In kernel PCA, the encoding linear operator applies to the $\ss$-th tensor power $x\tp{\ss}$ of the data. That is, the encoding and decoding are parameterized by a linear operator $A \in \reals ^{k \times d^{\ss}}$,  $$\H_{k,\ss}^{\pca} = \Set{(h_A, g_A) : h_A( x) = A x\tp{\ss}, \ , g_A ( y) = A^{\dagger}y}\mcom$$
where $A^{\dagger}$ denotes the pseudo-inverse of $A$. 
The natural loss function here is the Euclidean norm,  $$\ell((g,h),x)  = \|x\tp{\ss} - g(h(x))\|^2 = \|(I-A^{\dagger}A)x\tp{\ss}\|^2\mper$$ 

\begin{theorem}\label{thm:pca}
For a fixed constant $\ss\ge 1$, the class $\H_{k,\ss}^{\pca} $ is efficiently  \shmak-learnable with encoding length $k$ and bias $\gamma =0$. 
\end{theorem}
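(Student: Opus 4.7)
The plan is to show that the ERM for $\H^{\pca}_{k,\ss}$ coincides with standard PCA on the tensor-lifted data $\phi(x)=x\tp{\ss}\in\R^{d^{\ss}}$, and then to invoke \prettyref{thm:rademacher-generalization} (or a direct matrix concentration argument) for the sample complexity. First, I would observe that for any $A\in\R^{k\times d^{\ss}}$, $A^{\dagger}A$ is the orthogonal projection onto the row span of $A$, which is a subspace of $\R^{d^{\ss}}$ of dimension at most $k$. Conversely, every rank-$k$ orthogonal projection $P$ is $A^{\dagger}A$ for some such $A$ (take $A$ to have an orthonormal basis of the range as rows). Hence
\begin{equation*}
\min_{A\in\R^{k\times d^{\ss}}}\loss_{\D}((h_A,g_A))\;=\;\min_{P\text{ rank-}k\text{ proj.}}\E_{x\sim\D}\snormt{(I-P)\phi(x)}\mcom
\end{equation*}
which is exactly the classical PCA problem on the lifted data $\phi(x)$. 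The (population) optimum is the projection onto the top-$k$ eigenvectors of $\Sigma=\E_{x\sim\D}[\phi(x)\phi(x)^{\top}]$.

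Next, the proposed algorithm is the empirical version: form $\hat\Sigma=\frac{1}{m}\sum_{x\in S}\phi(x)\phi(x)^{\top}\in\R^{d^{\ss}\times d^{\ss}}$, compute its top-$k$ eigenvectors by any standard eigensolver, and set $\hat A$ to have these vectors as rows. Since $\ss$ is a fixed constant and $d^{\ss}=\poly(d)$, this runs in time $\poly(d)$, and by the variational characterization of eigenvectors it is exactly the ERM over $\H^{\pca}_{k,\ss}$. The output has encoding length $k$: $y=\hat A\phi(x)\in\R^k$ is a $k$-dimensional real vector, matching the real-valued encoding convention stated after \prettyref{def:main}. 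Note also that the output lies in $\H^{\pca}_{k,\ss}$ itself, so the bias is $\gamma=0$.

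For generalization, assume the usual normalization $\|x\|\le 1$, so that the per-sample loss $\snormt{(I-P)\phi(x)}\le\snormt{\phi(x)}=\|x\|^{2\ss}\le 1$ is bounded. I would then bound the Rademacher complexity $\mathcal{R}_m(\H^{\pca}_{k,\ss})$: for a fixed sample, $\sum_{i}\sigma_i\snormt{(I-P)\phi(x_i)}=\Tr((I-P)M_\sigma)$ where $M_\sigma=\sum_i\sigma_i\phi(x_i)\phi(x_i)^{\top}$, and optimizing over rank-$k$ projections $P$ is controlled by the top-$k$ singular values of $M_\sigma$. Together with Khintchine/matrix-Bernstein applied to $M_\sigma$, this gives $\mathcal{R}_m(\H^{\pca}_{k,\ss})=\tilde O(\sqrt{d^{\ss}/m})=\poly(d)/\sqrt{m}$, and \prettyref{thm:rademacher-generalization} yields the required $(m,\epsilon,\delta)$ bound with $m=\poly(d,1/\epsilon,\log(1/\delta))$. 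Alternatively, one can directly control $\|\hat\Sigma-\Sigma\|_{\op}$ by matrix Bernstein in $\R^{d^{\ss}\times d^{\ss}}$ and conclude via Weyl's inequality that the gap between $\Tr((I-\hat P)\Sigma)$ and $\min_P\Tr((I-P)\Sigma)$ is $O(k\|\hat\Sigma-\Sigma\|_{\op})$.

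The main obstacle is essentially bookkeeping rather than mathematical depth: the Rademacher / matrix-concentration bound depends on $d^{\ss}$, which is $\poly(d)$ only because $\ss$ is fixed, and care is needed to state the boundedness assumption on $\|x\|$ that the theorem implicitly requires (otherwise the population loss may be infinite and the encoding-length convention breaks down). Once that is in place, the remaining steps are standard, and the result follows with encoding length $k$ and $\gamma=0$ as claimed.
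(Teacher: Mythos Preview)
Your proposal is correct and follows essentially the same outline as the paper's proof: identify the ERM with PCA/SVD on the lifted data $x^{\otimes s}$, note this is efficiently solvable in $\poly(d^s)=\poly(d)$ time for fixed $s$, and then bound the Rademacher complexity to invoke \prettyref{thm:rademacher-generalization}. The only real difference is in how the Rademacher complexity is bounded: the paper does it more elementarily by writing $\sum_i \sigma_i\|(I-A^\dagger A)x_i\|^2 = \Tr\bigl((I-A^\dagger A)\,\tfrac{1}{m}\sum_i\sigma_i x_ix_i^\top\bigr)$, applying H\"older with Frobenius norms and the bound $\|I-A^\dagger A\|_F\le\sqrt{d^s}$, and then computing $\E_\sigma\|\tfrac{1}{m}\sum_i\sigma_i x_ix_i^\top\|_F\le 1/\sqrt{m}$ directly from $\E[\sigma_i\sigma_j]=0$; this gives $\sqrt{d^s/m}$ with no log factors and without matrix Bernstein. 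Your spectral route via the top singular values of $M_\sigma$ is also valid (and can in fact yield a $k$-dependent bound $\tilde O(k/\sqrt{m})$ after observing $\E_\sigma[\Tr M_\sigma]=0$), but is slightly heavier machinery than the paper needs.
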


The proof of the Theorem follows from two simple components: a) finding the ERM among $\cH_{k,\ss}^{\pca}$ can be efficiently solved by taking SVD of covariance matrix of the (lifted) data points. b) The Rademacher complexity of the hypothesis class is bounded by $O(d^{\ss}/m)$  for $m$ examples. Thus by Theorem~\ref{thm:rademacher-generalization} the minimizer of ERM generalizes. The full proof is deferred to Appendix~\ref{sec:proof_pca}.

\subsection{Spectral Autoencoders}

In this section we give a much broader set of hypothesis, encompassing PCA and kernel-PCA, and show how to learn them efficiently. 
Throughout this section we assume that  the data is normalized to Euclidean norm 1, and consider the following class of hypothesis which naturally generalizes PCA:

\begin{definition}[Spectral autoencoder]
	We define the class $\cH_{k,\ss}^{\spd}$ as the following set of all hypothesis $(g,h)$,		\begin{equation}
	\cH^{\spd}_{k} = \left\{(h,g):  \begin{array}{ll} h(x ) & = Ax\tp{\ss},  A\in \R^{k\times d^{\ss}}\\
	g(y) & = v_{\max}(By), B\in \R^{d^{\ss}\times k}\end{array} \right\}\mper
	\end{equation}
								\end{definition}

We note that this notion is more general than kernel PCA: suppose some $(g,h)\in \cH_{k,\ss}^{\pca}$ has reconstruction error $\eps$,  namely, $A^{\dagger}Ax\tp{\ss}$ is $\eps$-close to $x\tp{\ss}$ in Euclidean norm. Then by eigenvector perturbation theorem, we have that $v_{\max}(A^{\dagger}Ax\tp{\ss})$ also reconstructs $x$ with $O(\eps)$ error, and therefore there exists a PSCA hypothesis with $O(\eps)$ error as well . Vice versa, it's quite possible that for every $A$, the reconstruction $A^{\dagger}Ax\tp{\ss}$ is far away from $x\tp{\ss}$ so that kernel PCA doesn't apply, but with spectral decoding we can still reconstruct $x$ from $v_{\max}(A^{\dagger}Ax\tp{\ss})$ since the top eigenvector of $A^{\dagger}Ax\tp{\ss}$ is close $x$.

Here the key matter that distinguishes us from kernel PCA is in what metric $x$ needs to be close to the manifold so that it can be reconstructed. Using PCA, the requirement is that $x$ is in Euclidean distance close to $\cM$ (which is a subspace), and using kernel PCA $x\tp{2}$ needs to be in Euclidean distance close to the null space of $c_i$'s. However, Euclidean distances in the original space and lifted space typically are meaningless for high-dimensional data since any two data points are far away with each other in Euclidean distance. The advantage of using spectral autoencoders is that in the lifted space the geometry is measured by spectral norm distance that is much smaller than Euclidean distance (with a potential gap of $d^{1/2}$). The key here is that though the dimension of lifted space is $d^{2}$, the objects of our interests is the set of rank-1 tensors of the form $x\tp{2}$. Therefore, spectral norm distance is a much more effective measure of closeness since it exploits the underlying structure of the lifted data points. 

We note that spectral autoencoders relate to vanishing component analysis~\cite{DBLP:conf/icml/LivniLSNSG13}. When the data is close to an algebraic manifold, spectral autoencoders aim to find the (small number of) essential non-vanishing components in a noise robust manner.

\subsection{Learnability of polynomial spectral decoding }

For simplicity we focus on the case when $\ss = 2$. Ideally we would like to learn the best encoding-decoding scheme for any data distribution $\D$. Though there are technical difficulties to achieve such a general result. 
A natural attempt would be to optimize the loss function $f(A,B)=\|g(h(x)) - x\|^2 = \|x-v_{\max}(BAx\tp{2})\|^2$. Not surprisingly, function $f$ is not a convex function with respect to $A,B$, and in fact it could be even  non-continuous (if not ill-defined)!

Here we make a further realizability assumption that the data distribution $\D$  admits a reasonable encoding and decoding pair with reasonable reconstruction error. \begin{definition}\label{def:regularly_decodable}
	We say a data distribution $\D$ is $(k,\epsilon)$-\textit{regularly} spectral decodable if there exist $A\in \R^{k\times d^2}$ and $B\in \R^{d^2\times k}$ with $\|BA\|_{\op}\le \tau$ such that for $x\sim \D$, with probability 1, the encoding $y = Ax\tp{2}$ satisfies that 
	\begin{equation}
	\mat(By) = \mat(BAx\tp{2}) = xx^{\top} + E \label{eqn:2}\mcom
	\end{equation}
	where $\|E\|_{\op}\le \epsilon$. Here $\tau\ge 1$ is treated as a fixed constant globally.  
\end{definition}

To interpret the definition, we observe that if data distribution $\cD$ is $(k,\epsilon)$-regularly spectrally decodable, then by equation~\eqref{eqn:2} and Wedin's theorem (see e.g. \cite{vu-wedin} )  on the robustness of eigenvector to perturbation, $\mat(By)$ has top eigenvector\footnote{Or right singular vector when $\mat(By)$ is not symmetric} that is $O(\epsilon)$-close to $x$ itself. Therefore, definition~\ref{def:regularly_decodable} is a sufficient condition for the spectral decoding algorithm $v_{\max}(By)$ to return $x$ approximately, though it might be not necessary.  Moreover, this condition partially addresses the non-continuity issue of using objective $f(A,B)= \|x-v_{\max}(BAx\tp{2})\|^2$, while $f(A, B)$ remains (highly) non-convex. We resolve this issue by using a convex surrogate.

Our main result concerning the learnability of the aforementioned hypothesis class is: 

\begin{theorem}\label{thm:main-sdp}
		The hypothesis class $\H^{\spd}_{k,2}$ is \shmak - learnable with encoding length $O(\tau^4 k^4/\delta^4)$ and bias $\delta$ with respect to $(k,\epsilon)$-regular distributions in polynomial time. 
\end{theorem}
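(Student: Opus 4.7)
The plan is to reduce the learning of $\H^{\spd}_{k,2}$ to a convex optimization problem via two key ideas: a convex surrogate loss that upper-bounds the true reconstruction error, and a nuclear-norm relaxation of the rank-$k$ factorization $R=BA$, followed by low-rank rounding to obtain a short encoding. Concretely, I would work with $R \in \R^{d^2\times d^2}$ and adopt the surrogate
\begin{equation*}
\hat\ell(R, x) = \bigl\|\mat(Rx^{\otimes 2}) - xx^{\top}\bigr\|_{\op}^2,
\end{equation*}
which is convex in $R$ (an affine map composed with a norm squared). Since $xx^\top$ is rank-one with spectral gap $1$, Wedin's perturbation inequality yields $\|x - v_{\max}(\mat(Rx^{\otimes 2}))\|^2 = O(\hat\ell(R,x))$ whenever $\hat\ell(R,x)$ is sufficiently small, so the surrogate upper-bounds the true reconstruction loss up to constants. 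Under $(k,\epsilon)$-regular decodability the target $R^\star = B^\star A^\star$ satisfies $\hat\ell(R^\star, x)\le \epsilon^2$ almost surely, and $\|R^\star\|_*\le \rank(R^\star)\cdot \|R^\star\|_{\op}\le k\tau$, so $R^\star$ lies in the convex set $\mathcal{R} = \{R : \|R\|_*\le k\tau\}$.

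I would then run ERM for $\hat\ell$ over $\mathcal{R}$; this is an SDP and thus solvable in polynomial time. Standard Rademacher-complexity bounds for nuclear-norm balls acting on unit-norm inputs, combined with the Lipschitzness of $\|\cdot\|_{\op}^2$ on $\mathcal{R}$ and Theorem~\ref{thm:rademacher-generalization}, show that $\poly(k\tau/\delta)$ samples suffice to drive the excess population surrogate loss below $\delta$. To extract a short encoding from the ERM output $\hat R$, I would truncate $\hat R$ to its top $t$ singular components, obtaining a factorization $\tilde R = \tilde B\tilde A$ of rank $t$. Because $\|\hat R\|_*\le k\tau$, the tail satisfies $\sigma_{t+1}(\hat R)\le k\tau/t$, and since $\|x^{\otimes 2}\|_2=1$ this bounds $\|\mat((\hat R-\tilde R)x^{\otimes 2})\|_{\op}\le k\tau/t$ uniformly in $x$. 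Propagating this spectral-norm error through the squared surrogate and then through Wedin once more, the target bias $\delta$ is achieved for $t = O(\tau^4 k^4/\delta^4)$. The final output is $h(x) = \tilde A x^{\otimes 2}$ and $g(y) = v_{\max}(\mat(\tilde B y))$; this pair need not lie in $\H^{\spd}_{k,2}$, which is the improper part allowed by Definition~\ref{def:main}.

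The main obstacle is the accounting that pins down the $\tau^4 k^4/\delta^4$ encoding length: the operator-norm-squared surrogate means that driving reconstruction error below $\delta$ requires controlling $\|\hat R - \tilde R\|_{\op}$ to accuracy $\sqrt{\delta}$, which by the $k\tau/t$ tail bound costs $t = \Omega(k\tau/\sqrt{\delta})$, and a second squaring enters through Wedin when translating spectral-norm perturbation of the lifted data back to vector perturbation of $x$, producing the $(k\tau/\delta)^4$-type dependence. A related subtlety is ensuring that the nuclear-norm ball of radius $k\tau$ is simultaneously tight enough to contain $R^\star$, loose enough that its Rademacher complexity is benign, and compatible with the Lipschitz constant of $\hat\ell$ on $\mathcal{R}$; all three must be balanced against the single budget $\delta$ at once.
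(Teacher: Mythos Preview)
Your overall route is sound but differs from the paper's, and your bookkeeping for the encoding length does not match what your own argument actually gives.

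\textbf{What the paper does.} The paper uses the \emph{unsquared} surrogate $f(R)=\Exp\bigl[\|Rx^{\otimes 2}-x^{\otimes 2}\|_{\op}\bigr]$ over $\{\|R\|_{S_1}\le \tau k\}$ and solves it with \emph{non-smooth Frank--Wolfe}. The point of Frank--Wolfe here is not just optimization: each iteration adds a single rank-one atom, so after $T$ steps the iterate has rank at most $T$. The non-smooth FW rate over a set of diameter $O(\tau k)$ for a $1$-Lipschitz objective is $O\bigl((\tau k)/T^{1/4}\bigr)$, so $T=O((\tau k/\delta)^4)$ iterations yield $f(\hat R)\le \epsilon+\delta$ with $\rank(\hat R)\le T$. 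That is exactly where the $O(\tau^4 k^4/\delta^4)$ comes from. Generalization is handled separately by a trivial Rademacher bound (the surrogate is bounded by $2\tau k$ on the feasible set).

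\textbf{Where your accounting goes wrong.} Your SDP-then-truncate argument is a legitimate alternative, but the ``second squaring through Wedin'' you invoke does not exist: Wedin's bound is linear, $\|x-v_{\max}(xx^\top+E)\|=O(\|E\|_{\op})$, so the squared reconstruction loss is $O(\|E\|_{\op}^2)$, which is already your squared surrogate. Carried through correctly, SVD truncation of $\hat R$ to rank $t$ incurs spectral error $\sigma_{t+1}(\hat R)\le \tau k/t$, and pushing this through either the unsquared or squared loss gives $t=O(\tau k/\delta)$ or $t=O(\tau k/\sqrt{\delta})$ respectively, not $O((\tau k/\delta)^4)$. So your method in fact \emph{over}-proves the theorem; the stated $(\tau k/\delta)^4$ is not the output of your analysis but an artifact of a miscounted squaring.

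\textbf{Comparison.} Both approaches relax to the nuclear-norm ball and both need Wedin to translate spectral error into reconstruction error. The paper buys low rank \emph{algorithmically} via FW's atomic updates, paying for it in the slow $T^{-1/4}$ non-smooth rate; you buy low rank \emph{post hoc} via SVD truncation of an exact SDP solution, which is cleaner and (once the accounting is fixed) tighter. If you keep your route, state the truncation bound honestly: you will get a shorter code than the theorem claims, which is fine since the theorem only asserts an upper bound on encoding length.
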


Our approach towards finding an encoding and decoding matrice $A,B$ is to optimize the objective, 
\begin{align}
\textup{minimize}~f(R) & = \Exp\left[\left\|Rx\tp{2} - x\tp{2}\right\|_{\tsop}\right] \label{eqn:objective_relaxation}\\
\textup{s.t.}~& \norm{R}_{S_1} \le \tau k\nonumber
\end{align}
where $\|\cdot\|_{S_1}$ denotes the Schatten 1-norm\footnote{Also known as nuclear norm or trace norm}. 

Suppose $\cD$ is $(k,\epsilon)$-regularly decodable, and suppose $h_A$ and $g_B$ are the corresponding encoding and decoding function. Then we see that $R = AB$ will satisfies that $R$ has rank at most $k$ and $f(R)\le \epsilon$.  On the other hand, suppose one obtains some $R$ of rank $k'$ such that $f(R)\le \delta$, then we can produce $h_A$ and $g_B$ with $O(\delta)$ reconstruction simply by choosing $A\in \R^{k'\times d^2}B$ and $B\in \R^{d^2 \times k'}$ such that $R = AB$. 

We use (non-smooth) Frank-Wolfe to solve objective~\eqref{eqn:objective_relaxation}, which in particular returns a low-rank solution. We defer the proof of Theorem~\ref{thm:main-sdp} to the Appendix~\ref{sec:main_sdp_proof}. 

With a slightly stronger assumptions on the data distribution $\D$, we can reduce the length of the code to $O(k^2/\eps^2)$ from $O(k^4/\eps^4)$. See details in Appendix~\ref{sec:shorter_code_appendix}.

\section{A family of optimization encodings and efficient dictionary learning}\label{sec:dict}

In this section we give efficient algorithms for learning a family of unsupervised learning algorithms commonly known as "dictionary learning". In contrast to previous approaches, we do not construct an actual "dictionary", but rather improperly learn a comparable encoding via convex relaxations. 

We consider a different family of codes which is motivated by matrix-based unsupervised learning models such as topic-models, dictionary learning and PCA. This family is described by a matrix $A \in \reals^{d \times r}$ which has low complexity according to a certain norm $\|\cdot\|_{\alpha}$, that is, $ \|A\|_{\alpha} \leq c_{\alpha}$. 
We can parametrize a family of hypothesis $\H$ according to these matrices, and define an encoding-decoding pair according to 
$$ h_A(x) = \argmin_{\|y \|_{\beta} \leq k } \frac{1}{d}\gNorm{ x- A y} \ , \   g_A(y) = Ay$$ 
We choose $\ell_1$ norm to measure the error mostly for convenience, though it can be quite flexible. The different norms $\|\cdot\|_{\alpha} , \|\cdot\|_{\beta}$ over $A$ and $y$ give rise to different learning models that have been considered before. For example, if these are Euclidean norms, then we get PCA. If $\|\cdot \|_{\alpha}$ is the max column $\ell_{2}$ or $\ell_{\infty}$ norm and $\|\cdot\|_{b}$ is the $\ell_0$ norm, then this corresponds to dictionary learning (more details in the next section).

The optimal hypothesis in terms of reconstruction error is given by $$ A ^{\star} = \argmin_{\|A\|_{\alpha}\le c_{\alpha}}\Exp_{x\sim \mathcal{D}}\left[\frac{1}{d}\gNorm{x - g_A(h_A(x))}\right] = \argmin_{\|A\|_{\alpha}\le c_{\alpha}}\Exp_{x\sim \mathcal{D}}\left[\min_{y\in \R^r: \| y\|_{\beta} \leq k } \frac{1}{d}\gNorm{ x- A y} \right]\mper$$

The loss function can be generalized to other norms, e.g., squared $\ell_2$ loss, without any essential change in the analysis.  Notice that this optimization objective derived from reconstruction error is identically the one used in the literature of dictionary learning. This can be seen as another justification for the definition of unsupervised learning as minimizing reconstruction error subject to compression constraints. 

The optimization problem above is notoriously hard computationally, and significant algorithmic and heuristic literature attempted to give efficient algorithms under various distributional assumptions(see~\cite{arora2013new,DBLP:conf/colt/AroraGMM15, Aharon05k-svd-design} and the references therein).  Our approach below circumvents this computational hardness by convex relaxations that result in learning a different creature, albeit with comparable compression and reconstruction objective.

\subsection{Improper dictionary learning: overview}

We assume the max column $\ell_{\infty}$ norm of $A$ is at most $1$ and the $\ell_1$ norm of $y$ is assumed to be at most $k$. This is a more general setting than the  random dictionaries (up to a re-scaling) that previous works~\cite{arora2013new,DBLP:conf/colt/AroraGMM15} studied. \footnote{The assumption can be relaxed to that $A$ has $\ell_{\infty}$ norm at most $k$ and $\ell_2$-norm at most $\sqrt{d}$ straightforwardly. }In this case, the magnitude of each entry of $x$ is on the order of $\sqrt{k}$ if $y$ has $k$ random $\pm 1$ entries. We think of our target error per entry as much smaller than 1\footnote{We are conservative in the scaling of the error here. Error much smaller than $\sqrt{k}$ is already meaningful. }. 
We consider $\H^k_{\textup{dict}}$ that are parametrized by the dictionary matrix $A = \R^{d\times r}$, \begin{align}
& \H^{\textup{dict}}_k = \left\{(h_A,g_A): A\in \R^{d\times r}, \|A\|_{\ell_1 \to \ell_{\infty}}\le 1\right\}\mcom\nonumber\\
& \textup{ where }h_A(x) = \argmin_{\|y \|_{1} \leq k } \gNorm{x- A y} \ , \   g_A(y) = Ay\nonumber
\end{align}

Here we allow $r$ to be larger than $d$, the case that is often called over-complete dictionary. 
The choice of the loss can be replaced by $\ell_2$ loss (or other Lipschitz loss) without any additional efforts, though for simplicity we stick to $\ell_1$ loss. Define $A^{\star}$ to be the the best dictionary under the model and $\epsilon^{\star}$ to be the optimal error, 
\begin{eqnarray} \label{eqn:opterror}
& A ^{\star} = \argmin_{\|A\|_{\ell_1 \to \ell_\infty}\le 1}\Exp_{x\sim \mathcal{D}}\left[\min_{y\in \R^r: \| y\|_{1} \leq k } \gNorm{x- A y} \right]  \\
& \eps^{\star} = \Exp_{x\sim \mathcal{D}}\left[ \frac{1}{d}\cdot \gNorm{x-g_{A^{\star}}(h_{A^{\star}}(x))}\right]\mper  \notag
\end{eqnarray}

\begin{algorithm}\caption{group encoding/decoding for improper dictionary learning} \label{alg:improper_dict}
	{\bf Inputs: } $N$ data points $X \in \R^{d\times N}\sim \cD^N$. Convex set $Q$. Sampling probability $\rho$. 
	\begin{enumerate}
																		
		\item {\bf Group encoding: } 
		Compute 
				\begin{equation}
		Z = 	\argmin_{C\in Q} \gNorm{X-C}\mcom \label{eqn:eqn11} 		\end{equation}
		and let 
		\begin{equation}
		Y = h(X) = \Po(Z)\nonumber \mcom
		\end{equation}
		where $\Po(B)$ is a random sampling of $B$ where each entry is picked with probability $\rho$. 
		\item {\bf Group decoding: }
		Compute 
		\begin{equation}
		g(Y) = \argmin_{C\in Q} \gNorm{\Po(C) -Y} 
				\mper \label{eqn:eqn25}
		\end{equation}
											\end{enumerate}
\end{algorithm}

\begin{theorem}\label{thm:single}
	For any $\delta > 0, p\ge 1$, the hypothesis class $\H^{\textup{dict}}_k$ is \shmak-learnable (by Algorithm~\ref{alg:improper_dict_single}) with encoding length $\tilde{O}(k^2r^{1/p}/\delta^2)$,  bias $\delta + O(\epsilon^{\star})$ and sample complexity $d^{O(p)}$ in time $n^{O(p^2)}$   
\end{theorem}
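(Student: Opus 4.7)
The hypothesis class $\H^{\textup{dict}}_k$ is nonconvex because it is parametrized by a dictionary $A$ together with a sparse code $y$. My plan follows the improper-learning paradigm set up in the paper: rather than recovering $A^{\star}$, we relax the set of admissible reconstructions
$\mathcal{F}=\{AY:\|A\|_{\ell_1\to\ell_\infty}\le 1,\ \|y_j\|_1\le k\}$
to a larger convex body $Q$ of $d\times N$ matrices that (i) contains $A^{\star}Y^{\star}$ for every feasible $A^{\star},Y^{\star}$, (ii) is optimizable in time $n^{O(p^2)}$ via a level-$p$ sum-of-squares relaxation of the sparse-coding constraints, and (iii) admits a per-coordinate $\ell_1$ Rademacher complexity of order $\tilde O(k\,r^{1/(2p)}/\sqrt{N})$. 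The factor $r^{1/p}$ in the encoding length and the $n^{O(p^2)}$ running time both trace back to the degree-$p$ SoS level, mirroring the usual trade-off for approximating $\ell_1$-bounded vectors by low-degree pseudo-distributions.

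\textbf{Step 1 (convex fit).} I would instantiate Algorithm~\ref{alg:improper_dict} with $Q$ equal to this SoS-based body. Since $A^{\star}Y^{\star}\in Q$ is feasible for \eqref{eqn:eqn11}, the minimizer $Z\in Q$ satisfies $\tfrac{1}{Nd}\gNorm{X-Z}\le\eps^{\star}+o(1)$ by definition of $\eps^{\star}$ in \eqref{eqn:opterror}. A uniform-convergence bound over $Q$ using (iii) transfers this from the empirical objective to the population, incurring only $\tilde O(kr^{1/(2p)}/\sqrt{N})$ additional error, which is absorbed into $\delta$ once $N=d^{O(p)}$.

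\textbf{Step 2 (lossy compression by random sampling).} The encoder reveals each entry of $Z$ independently with probability $\rho=\tilde\Theta(k^2 r^{1/p}/(d\delta^2))$, so the expected description length per column is $\rho d=\tilde O(k^2 r^{1/p}/\delta^2)$ — matching the claimed encoding length. The decoder solves \eqref{eqn:eqn25} over $Q$. The central lemma I would prove is a matrix-completion-type uniform deviation inequality: for all $C_1,C_2\in Q$,
\begin{equation*}
\bigl|\gNorm{\Po(C_1)-\Po(C_2)}-\rho\,\gNorm{C_1-C_2}\bigr|\le \tfrac{\delta}{4}\,\rho N d
\end{equation*}
with high probability, obtained by symmetrization against a Bernoulli-times-Rademacher process and invoking (iii). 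Applied at $C_1=Z$ and $C_2=g(Y)$, this shows $\tfrac{1}{Nd}\gNorm{Z-g(Y)}\le \delta/2$.

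\textbf{Step 3 (per-example guarantee).} Chaining Steps 1–2 gives $\tfrac{1}{Nd}\gNorm{X-g(h(X))}\le \eps^{\star}+\delta+o(1)$ on the population. To convert the group scheme of Algorithm~\ref{alg:improper_dict} into the per-example scheme of Algorithm~\ref{alg:improper_dict_single} required by Definition~\ref{def:main}, I pool the fresh input $x$ with the learner's stored training set, run the group encoder, and emit only the coordinates of $Y$ that lie in $x$'s column. A standard exchangeability argument shows that $x$'s per-column reconstruction error matches the group-averaged error up to a $1/N$ factor, so the bias is $O(\eps^{\star})+\delta$ as claimed.

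\textbf{Main obstacle.} The hard part is engineering $Q$ and establishing (iii). The naive convex hull $\conv(\mathcal{F})$ has Rademacher complexity $\tilde\Theta(k\sqrt{r/N})$, which is too large by a factor of $r^{1/2-1/(2p)}$; one must exploit both $\|A\|_{\ell_1\to\ell_\infty}\le 1$ and $\|y_j\|_1\le k$ simultaneously via SoS pseudo-expectations over the columns of $Y$ to recover the polynomial-in-$1/p$ savings. A secondary technical difficulty is that the sampling argument in Step 2 must be carried out in $\ell_1$ rather than the more standard Frobenius norm; I would handle this by replacing the usual spectral concentration with a $\{\pm 1\}$-Rademacher symmetrization argument tailored to the entrywise $\ell_1$ loss.
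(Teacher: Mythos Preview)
Your three-stage architecture—SoS convex relaxation $Q$, random-subsampling compression controlled by a sampling Rademacher width, and a symmetry/exchangeability reduction from the group scheme to the per-example scheme—is exactly the paper's route, and Steps~1--3 line up with Lemma~\ref{lem:denoising}, Lemma~\ref{lem:rade}, and the proof in Section~\ref{subsec:single} respectively.

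One point is inverted in your ``Main obstacle'' paragraph, and it is worth correcting because it changes what you actually have to prove. The exact convex body
\[
Q_{1,\infty,1,1}=\{C:\Gamma_{1,\infty,1,1}(C)\le k\}\supseteq\conv(\mathcal F)
\]
already has sampling Rademacher width $\tilde O\bigl(\sqrt{k^2N/m}\bigr)$ with \emph{no} dependence on $r$ (Lemma~\ref{lem:rademacher_unrelaxed}). The obstacle is not statistical but computational: membership in $Q_{1,\infty,1,1}$ is not known to be efficiently testable, so you cannot run \eqref{eqn:eqn11} or \eqref{eqn:eqn25} over it. The degree-$p$ SoS set $Q^{\sos}_p\supseteq Q_{1,\infty,1,1}$ is the tractable outer relaxation, and the $r^{O(1/p)}$ factor is the \emph{price} you pay in the width bound for enlarging to something you can optimize over in time $n^{O(p^2)}$, not a savings you extract. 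Consequently the technical core is not ``tightening a loose bound on $\conv(\mathcal F)$'' but rather certifying, via an SoS proof (Lemmas~\ref{lem:sos_proof} and~\ref{lem:sos_concentration}), that the width of the \emph{larger} set $Q^{\sos}_p$ is still controlled. Your Steps~1--3 go through verbatim once you swap in this corrected picture; the per-example reduction by exchangeability is exactly right (and the error matches the group average exactly, not ``up to $1/N$'').
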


We note that here $r$ can be potentially much larger than $d$ since by choosing a large constant $p$ the overhead caused by $r$ can be negligible. Since the average size of the entries is $\sqrt{k}$, therefore we can get the bias $\delta$ smaller than average size of the entries with code length roughly $\approx k$. 

The proof of Theorem~\ref{thm:single} is deferred to Section~\ref{subsec:single}. To demonstrate the key intuition and technique behind it, in the rest of the section we consider a simpler algorithm that achieves a \textit{weaker} goal: Algorithm~\ref{alg:improper_dict} encodes \textit{multiple} examples into some codes with the matching average encoding length $\tilde{O}(k^2r^{1/p}/\delta^2)$, and these examples can be decoded from the codes together with reconstruction error $\epsilon^{\star} + \delta$. Next, we outline the analysis of Algorithm~\ref{alg:improper_dict}, and we will show later that one can reduce the problem of encoding a single examples to the problem of encoding multiple examples together.

Here we overload the notation $g_{A^{\star}}(h_{A^{\star}}(\cdot))$ so that $g_{A^{\star}}(h_{A^{\star}}(X))$ denotes the collection of all the $g_{A^{\star}}(h_{A^{\star}}(x_j))$ where $x_j$ is the $j$-th column of $X$. 
Algorithm~\ref{alg:improper_dict} assumes that there exists 
a convex set $Q\subset \R^{d\times N}$ such that \begin{equation}
\Set{g_{A^{\star}}(h_{A^{\star}}(X)): X\in \R^{d\times N} }\subset \Set{AY : \|A\|_{\ell_1 \to \ell_{\infty}} \le 1, \|Y\|_{\ell_1 \to \ell_{1}}\le  k}\subset Q\mper \label{eqn:feasibility}
\end{equation}

That is, $Q$ is a convex relaxation of the group of reconstructions allowed in the class $\cH^{\textup{dict}}$.  
Algorithm~\ref{alg:improper_dict} first uses convex programming to denoise the data $X$ into a clean version $Z$, which belongs to the set $Q$. If the set $Q$ has low complexity, then simple random sampling of $Z\in Q$ serves as a good encoding.

The following Lemma shows that if $Q$ has low complexity in terms of sampling Rademacher width, then Algorithm~\ref{alg:improper_dict} will give a good group encoding and decoding scheme. 

\begin{lemma}\label{thm:general}
	Suppose convex $Q\subset \R^{d\times N}$ satisfies condition~\eqref{eqn:feasibility}. Then, Algorithm~\ref{alg:improper_dict} gives a group encoding and decoding pair such that with probability $1-\delta$, the average reconstruction error is bounded by $\epsilon^{\star} + O(\sqrt{\SRC_m(Q}) + O(\sqrt{\log(1/\delta)/m})$ where $m =\rho Nd$ and $\SRC_m(\cdot)$ is the sampling Rademacher width (defined in subsection~\ref{subsec:rademacher}), and the average encoding length is $\tilde{O}(\rho d)$. 
		\end{lemma}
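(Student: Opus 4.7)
The plan is to bound the average per-example reconstruction error via triangle inequality,
$$\frac{1}{Nd}\gNorm{X-g(Y)} \;\le\; \frac{1}{Nd}\gNorm{X-Z} \;+\; \frac{1}{Nd}\gNorm{Z-g(Y)}\mcom$$
and control the two pieces separately. The first (``denoising'') piece is handled by combining optimality of $Z$ in~\eqref{eqn:eqn11} with the feasibility assumption~\eqref{eqn:feasibility}; the second (``completion'') piece is handled by a uniform convergence argument over $Q$ driven by the sampling Rademacher width.

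For the denoising piece, let $Y^\star$ denote the matrix whose $j$-th column is $h_{A^\star}(x_j)$. Assumption~\eqref{eqn:feasibility} guarantees $A^\star Y^\star\in Q$, so optimality of $Z$ implies $\gNorm{X-Z} \le \gNorm{X-A^\star Y^\star} = \sum_j \gNorm{x_j - A^\star h_{A^\star}(x_j)}$. The summands are i.i.d.\ across $j$, bounded (entries of $A^\star Y^\star$ are $O(k)$ since $\|A^\star\|_{\ell_1\to\ell_\infty}\le 1$ and $\|h_{A^\star}(x_j)\|_1\le k$), and have mean $d\eps^\star$ by the definition of $\eps^\star$ in~\eqref{eqn:opterror}. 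A Hoeffding bound therefore yields $\frac{1}{Nd}\gNorm{X-Z}\le \eps^\star + O(\sqrt{\log(1/\delta)/N})$ with probability at least $1-\delta/2$.

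For the completion piece, the key observation is that substituting $C=Z$ into~\eqref{eqn:eqn25} attains objective value zero since $Y=\Po(Z)$; hence $\hat Z := g(Y)\in Q$ also achieves value zero, i.e., $\Po(\hat Z - Z)=0$. The plan is then to invoke a one-sided uniform convergence bound: with probability $1-\delta/2$, simultaneously over all $C_1,C_2\in Q$,
$$\frac{1}{Nd}\gNorm{C_1-C_2} \;-\; \frac{1}{m}\gNorm{\Po(C_1-C_2)} \;\le\; O(\sqrt{\SRC_m(Q)}) + O(\sqrt{\log(1/\delta)/m})\mper$$
Applying this at $(C_1,C_2)=(\hat Z,Z)$ makes the sampled term vanish, yielding the required bound on $\frac{1}{Nd}\gNorm{\hat Z - Z}$. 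Combining the two bounds gives the stated reconstruction guarantee. The encoding-length claim is immediate: $Y$ consists of $m=\rho Nd$ entries of $Z\in Q$, each of magnitude $O(k)$ and hence representable in $\tilde O(1)$ bits, for an average of $\tilde O(\rho d)$ bits per example.

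The main obstacle is the uniform-convergence step. The plan for it is standard symmetrization: rewrite the left-hand side as a deviation between population and sampled versions of $\gNorm{C_1-C_2}$, introduce Rademacher signs in place of the sampling indicators $\Ind_{(i,j)\in\Omega}$, apply a bounded-differences (McDiarmid) concentration using entry-wise boundedness of $Q-Q$, and identify the remaining expected supremum with $\SRC_m(Q)$---modulo the square-root normalization built into the paper's definition of sampling Rademacher width. Subtleties here include handling the non-smooth $\ell_1$ norm (which is $1$-Lipschitz entrywise, so symmetrization goes through directly), and that one only needs a one-sided bound in the direction that penalizes the full $\ell_1$ norm being large relative to the sampled one.
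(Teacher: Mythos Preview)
Your proposal is correct and follows essentially the same route as the paper: the paper decomposes the error via Lemma~\ref{lem:denoising} (your denoising piece, proved there in expectation rather than via Hoeffding) and Lemma~\ref{lem:rade} (the cited Barak--Moitra uniform-convergence bound, exactly your completion piece). Your observation that $\Po(\hat Z - Z)=0$ by optimality of $\hat Z$ is precisely how the sampled term in Lemma~\ref{lem:rade} drops out, and your symmetrization/McDiarmid outline is the standard derivation of that lemma; the only cosmetic point is that you need uniformity only over $C_1\in Q$ with $C_2=Z$ held fixed (conditioning on $X$), which is what $\SRC_m(Q)$ controls directly.
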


The proofs here are technically standard: Lemma~\ref{thm:general} simply follows from Lemma~\ref{lem:denoising} and Lemma~\ref{lem:rade} in Section~\ref{sec:dict_appendix}. Lemma~\ref{lem:denoising} shows that the difference between $Z$ and $X$ is comparable to $\eps^{\star}$, which is a direct consequence of the optimization over a large set $Q$ that contains optimal reconstruction. Lemma~\ref{lem:rade} shows that the sampling procedure doesn't lose too much information given a denoised version of the data is already observed, and therefore, one can reconstruct $Z$ from $Y$. 

The novelty here is to use these two steps together to denoise and achieve a short encoding. The typical bottleneck of applying convex relaxation on matrix factorization based problem (or any other problem) is the difficulty of rounding. Here instead of pursuing a rounding algorithm that output the factor $A$ and $Y$, we look for a convex relaxation that preserves the intrinsic complexity of the set which enables the trivial sampling encoding. It turns out that controlling the width/complexity of the convex relaxation boils down to proving concentration inequalities with sum-of-squares (SoS) proofs, which is conceptually easier than rounding. 

Therefore, the remaining challenge is to design convex set $Q$ that simultaneously has the following properties
\begin{enumerate}
	\item[(a)] is a convex relaxation in the sense of satisfying  condition~\eqref{eqn:feasibility}
	\item[(b)] admits an efficient optimization algorithm
	\item[(c)] has low complexity (that is, sampling rademacher width $\tilde{O}(N\poly(k))$)
\end{enumerate} 
Most of the proofs need to be deferred to Section~\ref{sec:dict_appendix}. We give a brief overview: In subsection~\ref{subsec:norms}  we will design a convex set $Q$ which satisfies condition~\eqref{eqn:feasibility} but not efficiently solvable, and in subsection~\ref{subsec:unrelaxed_rade} we verify that the sampling Rademacher width is $O(Nk\log d)$. In subsection~\ref{subsec:relaxed_rade}, we prove that a sum-of-squares relaxation would give a set  $Q^{\sos}_{p}$ which satisfies (a), (b) and (c) approximately. Concretely, we have the following theorem. 

\begin{theorem}\label{thm:sos}
	For every $p\ge 4$, let $N = d^{c_0p}$ with a sufficiently large absolute constant $c_0$. Then,  there exists a convex set $Q^{\sos}_{p}\subset \R^{d\times N}$ (which is defined in subsection~\ref{subsec:relaxation}) such that (a) it satisfies condition~\ref{eqn:feasibility}; (b)  The optimization~\eqref{eqn:eqn11} and~\eqref{eqn:eqn25} are solvable by semidefinite programming with run-time $n^{O(p^2)}$; (c) the sampling Rademacher width of $Q^{\sos}_{p}$ is bounded by $\sqrt{\SRC_m(Q)}\le \tilde{O}(k^2r^{2/p}N/m)$.
	\end{theorem}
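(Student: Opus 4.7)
The plan is to obtain $Q^{\sos}_p$ as the degree-$p$ sum-of-squares relaxation of the polynomial feasibility system that describes the factorization set in \eqref{eqn:feasibility}. Introduce formal variables $a_{il}$, one for each entry of $A$, and a pair $y^+_{lj},y^-_{lj}$ for each entry of $Y$ so that the $\ell_1$ constraint becomes the polynomial system $\{1-a_{il}^2\ge 0,\; y^\pm_{lj}\ge 0,\; \sum_l(y^+_{lj}+y^-_{lj})\le k\}$. A matrix $C\in\R^{d\times N}$ will then lie in $Q^{\sos}_p$ iff there is a degree-$p$ pseudo-expectation $\pE$ consistent with these axioms whose moments reproduce $C_{ij}=\pE[\sum_l a_{il}(y^+_{lj}-y^-_{lj})]$. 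Property (a) is then immediate: any honest $(A,Y)$ in the factorization class induces a Dirac pseudo-expectation that satisfies the axioms, so $AY\in Q^{\sos}_p$. Property (b) follows from the standard fact that feasibility/optimization over degree-$p$ pseudo-expectations on $n=\poly(d,r,N)$ variables reduces to an SDP of size $n^{O(p)}$; with the $\ell_1$ objectives in \eqref{eqn:eqn11}, \eqref{eqn:eqn25} linearized by auxiliary variables, interior-point methods give the claimed $n^{O(p^2)}$ runtime.

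The main work is (c), the sampling Rademacher width. By definition we must bound $\E_{\Omega}\E_\sigma \sup_{C\in Q^{\sos}_p} \frac{1}{m}\sum_{(i,j)\in\Omega}\sigma_{ij}C_{ij}$, where the supremum is over \emph{pseudo}-expectations rather than actual factorizations. By SoS/LP duality, this supremum equals the infimum of SoS certificates, so it suffices to exhibit, for each realization of $\sigma$ and $\Omega$, an SoS proof of degree $\le p$ in the formal variables that upper bounds the linear functional $\sum_{(i,j)\in\Omega}\sigma_{ij}\sum_l a_{il}(y^+_{lj}-y^-_{lj})$. Rewriting this functional column-by-column as $\sum_l \iprod{A_{\cdot l},\,M_l}$ with $M_l\in\R^d$ the Rademacher/sample average against the $l$-th row of $Y$, I would first apply the SoS Hölder inequality to convert $\max_l$ over dictionary atoms into the $\ell_p$-moment $(\sum_l (\cdot)^p)^{1/p}$; this is precisely the step that introduces the factor $r^{2/p}$ promised in the bound (after squaring). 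Then I would invoke the SoS-provable scalar concentration (in the form used by Barak--Kelner--Steurer) to bound the relevant moments of $M_l$ over the random $\sigma$ and the random $\Omega$, giving the additional $\sqrt{N/m}$-type factor once averaged.

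Putting these two SoS-certified bounds together with the pseudo-expectation versions of Cauchy--Schwarz and the triangle inequality will yield
$\E_\sigma\sup_{C\in Q^{\sos}_p}\frac{1}{m}\langle\sigma_\Omega,C\rangle \le \tilde O(k^2 r^{2/p} N/m)^2$, which is the stated square-root bound on $\sqrt{\SRC_m(Q^{\sos}_p)}$. The choice $N=d^{c_0 p}$ enters at this concentration step: taking $N$ polynomially larger than the SoS degree is what guarantees that the degree-$p$ moment method suffices to certify concentration of $\sigma$ and of the sampling mask with only logarithmic slack.

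The principal obstacle, as anticipated, is carrying out the concentration inequality \emph{inside} the degree-$p$ pseudo-calculus rather than over actual random variables. Unlike standard Rademacher-width arguments, we cannot union-bound over elements of $Q^{\sos}_p$, since the relaxation need not be a convex hull of low-complexity points; instead, every step of the proof (Hölder, triangle inequality, moment bound) must be exhibited as an explicit sum-of-squares identity in the formal variables $a,y^\pm,\sigma$. Once this is done, the three conclusions (a)--(c) follow essentially mechanically.
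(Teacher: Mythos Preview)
Your high-level plan is the same as the paper's: define $Q^{\sos}_p$ via a pseudo-expectation over a polynomial encoding of the factorization, establish (a) by exhibiting the Dirac pseudo-distribution, (b) by the standard SDP characterization, and (c) by proving the Rademacher bound \emph{inside} SoS and then taking pseudo-expectations. The two main technical ingredients you name---an SoS H\"older step to strip off the $\ell_1$-constrained factor $Y$ and an SoS concentration step for $\|A^\top\xi\|_p^p$---are exactly the two lemmas the paper proves (Lemmas~\ref{lem:sos_proof} and~\ref{lem:sos_concentration}). So the architecture is correct.

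The genuine gap is in your encoding of the $\ell_1$ constraint. You lift $\|Y_j\|_1\le k$ to the LP system $\{y^\pm_{lj}\ge 0,\ \sum_l(y^+_{lj}+y^-_{lj})\le k\}$ and then appeal to ``the SoS H\"older inequality.'' But with this encoding there is no low-degree SoS identity that yields the needed $\ell_p$--$\ell_{p/(p-1)}$ H\"older bound: the dual norm $\|Y_j\|_{p/(p-1)}$ involves non-integer powers of $y^+_{lj}-y^-_{lj}$, which are not polynomials, and iterated Cauchy--Schwarz from the LP variables only gives the $\ell_2$--$\ell_2$ bound (recovering at best $r^{1}$, not $r^{2/p}$). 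The paper avoids this by a different reparametrization: it introduces variables $b_{lj}$ with the \emph{equality} $B_{lj}=b_{lj}^{p-1}$ and the polynomial inequality $\sum_l b_{lj}^{p}\le k^{p/(p-1)}$. This is exactly the substitution that turns the H\"older inequality $(\sum_l u_l B_{lj})^p\le (\sum_l u_l^p)(\sum_l b_{lj}^p)^{p-1}$ into the SoS-provable identity of Lemma~\ref{lem:sos_holder} (repeated Cauchy--Schwarz), and it is what produces the $r^{2/p}$ factor after taking $p$-th roots. Without this change of variables, the H\"older step in your outline cannot be carried out in the SoS proof system at the claimed degree.

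Two smaller points. First, because the H\"older identity above has degree $p^2$ in $(A,b)$, the pseudo-expectation must be of degree $O(p^2)$, not $p$; this is what drives the $n^{O(p^2)}$ runtime and is the reason the paper states the relaxation at that degree. Second, for the concentration step the paper does not cite a black-box; it writes $\|\xi^\top x\|_p^p=(x^{\otimes p/2})^\top T\, x^{\otimes p/2}$ with $T=\sum_i \xi_i^{\otimes p/2}(\xi_i^{\otimes p/2})^\top$, bounds $\|T-\E T\|$ by matrix Bernstein, and bounds $\E T$ combinatorially. The requirement $N=d^{c_0 p}$ enters precisely so that the fluctuation $\|T-\E T\|\le \sqrt{N\,d^{O(p)}\log N}$ is dominated by the mean term $N\cdot O(\rho d p^2)^{p/2}$; it is not about the SoS degree per se. Your description of this step is compatible with that argument but would benefit from being made concrete.
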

We note that these three properties (with Lemma~\ref{thm:general}) imply that Algorithm~\ref{alg:improper_dict} with $Q = Q^{\sos}_{p}$ and $\rho = O(k^2r^{2/p}d^{-1}/\delta^2 \cdot \log d)$ gives a group encoding-decoding pair with average encoding length $O(k^2r^{2/p}/\delta^2 \cdot \log d)$ and bias $\delta.$

{\bf Proof Overview of Theorem~\ref{thm:sos}: } At a very high level, the proof exploits the duality between sum-of-squares relaxation and sum-of-squares proof system. Suppose $w_1,\dots, w_d$ are variables, then in SoS relaxation an auxiliary variable $W_S$ is introduced for every subset $S\subset [d]$ of size at most $s$, and valid linear constraints and psd constraint for $W_S$'s are enforced. By convex duality, intuitively we have that if a polynomial $q(x) = \sum_{|S|\le s} \alpha_Sx_S$ can be written as a sum of squares of polynomial $q(x) = \sum_{j} r_j(x)^2$, then the corresponding linear form over $X_S$, $\sum_{|S|\le s} \alpha_SX_S$ is also nonnegative. Therefore, to certify certain property of  a linear form $\sum_{|S|\le s} \alpha_SX_S$ over $X_S$, it is sufficient (and also necessary by duality) that the corresponding polynomial admit a sum-of-squares proof. 

Here using the idea above, we first prove the Rademacher width of the convex hull of reconstructions $Q_0=\conv\Set{Z = AY: \|A\|_{\ell_1\to \ell_{\infty}}\le 1, \|Y\|_{\ell_1\to \ell_1}\le k}$ using a SoS proof. Then the same proof automatically applies to for the Rademacher width of the convex relaxation (which is essentially a set of statements about linear combinations of the auxiliary variables).  We lose a factor of $r^{2/p}$ because SoS proof is not strong enough for us to establish the optimal Rademacher width of $Q_0$.

\section{Analysis of Improper Dictionary Learning}
\label{sec:dict_appendix}

In this section we give the full proof of the Theorems and Lemmas in Section~\ref{sec:dict}.  We start by stating general results on denoising, Rademacher width and factorizable norms, and proceed to give specialized bounds for our setting in section \ref{subsec:unrelaxed_rade}.

\subsection{Guarantees of denoising}\label{subsec:denoising}

In this subsection, we give the guarantees of the error caused by the denoising step. Recall that $\epsilon^\star$ is the optimal reconstruction error achievable by the optimal (proper) dictionary (equation \eqref{eqn:opterror}). 
\begin{lemma}\label{lem:denoising} 
	Let $Z$ be defined in equation~\eqref{eqn:eqn11}. Then we have that 
	\begin{equation}
	\frac{1}{Nd}\Exp_{X\sim \cD^N}\left[\gNorm{Z-X}\right]\le \epsilon^{\star}
	\end{equation}
\end{lemma}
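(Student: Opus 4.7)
The plan is to exploit the optimality of $Z$ in \eqref{eqn:eqn11} together with the containment condition \eqref{eqn:feasibility}. The key observation is that $Z$ is defined as the minimizer of $\gNorm{X-C}$ over all $C \in Q$, so any feasible comparator in $Q$ yields an upper bound on $\gNorm{X-Z}$. A natural comparator is the column-wise optimal reconstruction $g_{A^{\star}}(h_{A^{\star}}(X))$, which, by condition \eqref{eqn:feasibility}, indeed lies in $Q$.

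Concretely, I would first note that the columns $x_1,\dots,x_N$ of $X$ are i.i.d.\ draws from $\cD$, and that $g_{A^{\star}}(h_{A^{\star}}(X))$ denotes the matrix whose $j$-th column is $g_{A^{\star}}(h_{A^{\star}}(x_j))$. By \eqref{eqn:feasibility}, this matrix belongs to $Q$. Therefore by the definition of $Z$ as the $\gNorm{\cdot}$-minimizer over $Q$, we get the deterministic inequality
\begin{equation*}
\gNorm{Z - X} \;\le\; \gNorm{g_{A^{\star}}(h_{A^{\star}}(X)) - X} \;=\; \sum_{j=1}^{N} \gNorm{g_{A^{\star}}(h_{A^{\star}}(x_j)) - x_j}.
\end{equation*}

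Then I would take expectations over $X \sim \cD^N$, use linearity and the fact that each $x_j \sim \cD$, to obtain
\begin{equation*}
\Exp_{X\sim\cD^N}\bigl[\gNorm{Z-X}\bigr] \;\le\; N \cdot \Exp_{x\sim\cD}\bigl[\gNorm{g_{A^{\star}}(h_{A^{\star}}(x)) - x}\bigr] \;=\; N d \cdot \epsilon^{\star},
\end{equation*}
where the last equality is just the definition of $\epsilon^{\star}$ in \eqref{eqn:opterror}. Dividing by $Nd$ yields the claim.

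There is essentially no obstacle here: the proof is a one-line consequence of the fact that $Q$ is a \emph{relaxation} of the feasible set of proper dictionary reconstructions, so the optimal proper reconstruction is always available as a witness. The whole point of choosing $Q$ to satisfy \eqref{eqn:feasibility} was precisely to make this denoising step lossless in expectation up to $\epsilon^{\star}$. The real work in the paper lies elsewhere—controlling the sampling Rademacher width of $Q$ and ensuring $Q$ admits an efficient SDP description—but those enter in Lemma \ref{lem:rade} and Theorem \ref{thm:sos}, not here.
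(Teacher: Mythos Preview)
Your proof is correct and follows essentially the same approach as the paper: both use $g_{A^{\star}}(h_{A^{\star}}(X)) = A^{\star} h_{A^{\star}}(X)$ as a feasible comparator in $Q$ via condition~\eqref{eqn:feasibility}, invoke the optimality of $Z$, and then appeal to the definition of $\epsilon^{\star}$. The only cosmetic difference is that you make the column-wise decomposition and the use of i.i.d.\ sampling explicit, whereas the paper compresses this into a single line.
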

\begin{proof}
	Let $Y^{\star} = A^{\star} h_{A^{\star}}(X)$ where $h_{A^{\star}} (X)$ denote the collection of encoding of $X$ using $h_{A^{\star}}$. Since $Y^{\star}\in \Set{AY : \|A\||_{\ell_1\to \ell_{\infty}} \le 1, \|Y\|_{\ell_1 \to \ell_{1}}\le  k}\subset Q $, we have that $Y^{\star}$ is a feasible solution of optimization~\eqref{eqn:eqn11}.  Therefore, we have that $\frac{1}{Nd}\Exp\left[\gNorm{Z-X}\right]\le \frac{1}{Nd}\Exp\left[\gNorm{X-Y^{\star}}\right] = \epsilon^{\star}$, where the equality is by the definition of $\epsilon^{\star}$. 
\end{proof}

\subsection{Sampling Rademacher Width of a Set}\label{subsec:rademacher}

As long as the intrinsic complexity of the set $Q$ is small then we can compress by random sampling. The idea of viewing reconstruction error the test error of a supervised learning problem started with the work of Srebro and Shraibman~\cite{DBLP:conf/colt/SrebroS05}, and has been used for other completion problems, e.g., ~\cite{DBLP:journals/corr/BarakM15}. We use the terminology ``Rademacher width'' instead of ``Rademacher complexity'' to emphasize that the notation defined below is a property of a set of vectors (instead of that of a hypothesis class).

For any set $W\subset \R^D$, and an integer $m$,  we define its sampling Rademacher width (SRW) as, 
\begin{equation}
\SRC_m(W) = \Exp_{\sigma,\Omega}\left[\frac{1}{m}\sup_{x \in W}\inner{x,\sigma}_{\Omega}\right]\mcom
\end{equation}

where $\Omega$ is random subset of $[D]$ of size $m$, $\inner{a,b}_{\Omega}$ is defined as $\sum_{i\in \Omega}a_ib_i$ and $\sigma\sim\{\pm1\}^D$. 

\begin{lemma}(\cite[Theorem 2.4]{DBLP:journals/corr/BarakM15})\label{lem:src}\label{lem:rade}	With probability at least $1-\delta$ over the choice of $\Omega$, for any $\tilde{z}\in \R^D$ 	\begin{equation}
		\frac{1}{D}|\tilde{z} - z|_1\le \frac{1}{m}|\Po(\tilde{z})-\Po(z)|_1 + 2\SRC_m(W) + M\sqrt{\frac{\log(1/\delta)}{m}}\mper\nonumber
	\end{equation}
	where $M = \sup_{\tilde{z}\in W,i\in [D]}|z_i-\tilde{z}_i|$. 
	\end{lemma}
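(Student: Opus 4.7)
The claim is a standard uniform convergence statement in disguise. Fix $z \in \R^D$ and consider the function class $\mathcal{F}_W = \{\phi_{\tilde{z}} : \tilde{z} \in W\}$ on $[D]$, where $\phi_{\tilde{z}}(i) \seteq |z_i - \tilde{z}_i|$. Under the uniform distribution on $[D]$, the population mean of $\phi_{\tilde{z}}$ is exactly $\tfrac{1}{D}|\tilde{z} - z|_1$ and its empirical mean under the random subset $\Omega$ of size $m$ is $\tfrac{1}{m}|\Po(\tilde{z}) - \Po(z)|_1$. So the lemma is asserting that the supremum of $\mathrm{(population)} - \mathrm{(empirical)}$ over $\mathcal{F}_W$ is at most $2\,\SRC_m(W) + M\sqrt{\log(1/\delta)/m}$, which I plan to establish by the classical concentration-plus-symmetrization-plus-contraction recipe.

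\textbf{Step 1 (Concentration).} Let $\Psi(\Omega) \seteq \sup_{\tilde{z}\in W}\bigl(\tfrac{1}{D}|\tilde{z}-z|_1 - \tfrac{1}{m}|\Po(\tilde{z})-\Po(z)|_1\bigr)$. Swapping one of the $m$ sampled coordinates of $\Omega$ changes $\Psi$ by at most $2M/m$, since each summand $|z_i-\tilde{z}_i|$ is bounded by $M$. McDiarmid's inequality for sampling without replacement (or the equivalent bounded-differences argument for uniformly random subsets) therefore gives $\Psi(\Omega) \le \Exp[\Psi(\Omega)] + M\sqrt{\log(1/\delta)/m}$ with probability $\ge 1-\delta$.

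\textbf{Step 2 (Symmetrization and contraction).} I would next bound $\Exp[\Psi(\Omega)]$ by $2\,\SRC_m(W)$. The standard ghost-sample symmetrization, adapted to sampling without replacement, yields
\[
\Exp[\Psi(\Omega)] \;\le\; 2\,\Exp_{\Omega,\sigma}\left[\sup_{\tilde{z}\in W}\frac{1}{m}\sum_{i\in\Omega}\sigma_i\, |z_i - \tilde{z}_i|\right].
\]
Since the map $t\mapsto |z_i - t|$ is $1$-Lipschitz, Talagrand's contraction principle lets me strip the absolute value without loss, and then linearity together with the fact that $\Exp_\sigma[\sigma_i z_i] = 0$ absorbs the fixed $z$-term, leaving $\Exp_\sigma[\sup_{\tilde z} \tfrac{1}{m}\sum_{i\in\Omega}\sigma_i\tilde{z}_i] = \SRC_m(W)$ after using that $\sigma$ and $-\sigma$ have the same law.

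\textbf{Main obstacle.} The one place that needs care is that $\Omega$ is drawn \emph{without} replacement, so the i.i.d.\ forms of McDiarmid and symmetrization do not apply verbatim. I would handle this either by invoking the without-replacement bounded-differences inequality directly, or by Hoeffding's classical reduction that bounds expectations of convex functions of a without-replacement sample by those of a with-replacement sample of the same size; both routes are routine but are the only non-mechanical piece of the argument. Combining Steps 1 and 2 yields exactly the stated bound.
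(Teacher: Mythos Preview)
The paper does not actually prove this lemma; it is stated with attribution to \cite[Theorem~2.4]{DBLP:journals/corr/BarakM15} and used as a black box. So there is no in-paper argument to compare against.

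Your proposal is exactly the standard route one expects for such a result: recast $\tfrac{1}{D}|\tilde z - z|_1$ and $\tfrac{1}{m}|\Po(\tilde z)-\Po(z)|_1$ as population and empirical means of $\phi_{\tilde z}(i)=|z_i-\tilde z_i|$, apply bounded-differences concentration to control the fluctuation of the supremum, then use symmetrization and Talagrand contraction to identify the expected supremum with $\SRC_m(W)$. This is correct and is essentially how the cited result is proved in the matrix-completion literature. Two minor remarks. First, the bounded difference when a single sampled coordinate is swapped is $M/m$ rather than $2M/m$ (each $\phi_{\tilde z}(i)\in[0,M]$), which already yields the constant in the statement via McDiarmid. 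Second, your ``main obstacle'' is largely moot here: in the paper's actual usage (Algorithm~\ref{alg:improper_dict}) the operator $\Po$ keeps each entry independently with probability $\rho$, so the i.i.d.\ forms of McDiarmid and symmetrization apply directly; even under the fixed-size-$m$ convention in the definition of $\SRC_m$, Hoeffding's reduction you mention handles it routinely.
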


\subsection{Factorable norms}\label{subsec:norms}

In this subsection, we define in general the factorable norms, from which we obtain a convex set $Q$ which satisfies condition~\eqref{eqn:feasibility} (see Lemma~\ref{cor:convex_set}). 

For any two norms $\|\cdot\|_{\alpha}, \|\cdot\|_{\beta}$ that are defined on matrices of any dimension,  we can define the following quantity

\begin{equation}
\Gamma_{\alpha,\beta}(Z) = \inf_{Z = AB} \|A\|_{\alpha}\norm{B}_{\beta} \label{eqn:def-factorial-norm}
\end{equation}

For any $p,q,s,t\ge 1$,  we use $\Gamma_{p,q,s,t}(\cdot)$ to denote the function $\Gamma_{\ell_p\to \ell_q, \ell_s\to\ell_t}(\cdot)$. When $p = t$, $\Gamma_{p,q,s,p}(Z)$ is the factorable norm~\cite[Chapter 13]{tomczak1989banach} of matrix $Z$. In the case when $p = t = 2$, $q = \infty, s=1$, we have that $\Gamma_{2,\infty, 1,2}(\cdot)$ is the $\gamma_2$-norm~\cite{DBLP:journals/combinatorica/LinialMSS07} or max norm~\cite{DBLP:conf/colt/SrebroS05}, which has been used for matrix completion. 

The following Lemma is the main purpose of this section which shows a construction of a convex set $Q$ that satisfies condition~\eqref{eqn:feasibility}. 
\begin{lemma}\label{cor:convex_set}
	For any $q,t\ge 1$ we have that $\Gamma_{1,q,1,t}(\cdot)$ is a norm. As a consequence, letting
	$Q_{1,\infty,1,1}= \{C\in \R^{N\times d}:\Gamma_{1,\infty,1,1}(C)\le \sqrt{d}k\}$, 	we have that $Q_{1,\infty,1,1}$  is
	a convex set and it 	satisfies condition~\eqref{eqn:feasibility}. 
\end{lemma}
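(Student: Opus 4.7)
The plan is to prove the two claims in sequence: first establish that $\Gamma_{1,q,1,t}(\cdot)$ is a norm (the substantive content), then derive the convexity of $Q_{1,\infty,1,1}$ and the feasibility containment essentially for free. Nonnegativity and absolute homogeneity of $\Gamma_{1,q,1,t}$ are immediate from the definition as an infimum of products of norms (to get $\Gamma(\lambda Z) = |\lambda|\Gamma(Z)$, scale one factor in any decomposition by $\lambda$). Definiteness holds because for any factorization $Z = AB$ one has the entrywise bound $|Z_{ij}| \le \|A\|_{\ell_1\to\ell_q}\,\|B\|_{\ell_1\to\ell_t}$ (up to a dimension-dependent constant coming from the column norm interpretation), so $\Gamma_{1,q,1,t}(Z) = 0$ forces $Z = 0$. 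The core of the argument is the triangle inequality.

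The main tool I would use is the identity $\|M\|_{\ell_1\to\ell_s} = \max_i \|M_{\cdot i}\|_s$, i.e., the $\ell_1 \to \ell_s$ operator norm equals the maximum $\ell_s$ norm of a column. This lets me reason about both factors $A$ and $B$ column-by-column, and it makes block concatenation well-behaved on the $A$ side while block stacking interacts with Minkowski-type inequalities on the $B$ side. Given $\epsilon > 0$, pick near-optimal decompositions $Z_1 = A_1 B_1$ and $Z_2 = A_2 B_2$ with $\|A_i\|_{\ell_1\to\ell_q}\|B_i\|_{\ell_1\to\ell_t} \le \Gamma_{1,q,1,t}(Z_i) + \epsilon$.

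For scalars $\lambda_1,\lambda_2 > 0$ to be chosen, form the block factorization
\[
Z_1 + Z_2 = \begin{bmatrix} \lambda_1 A_1 & \lambda_2 A_2 \end{bmatrix}\begin{bmatrix} B_1/\lambda_1 \\ B_2/\lambda_2 \end{bmatrix}.
\]
The left factor has columns that are precisely the (rescaled) columns of $A_1$ and $A_2$, so its $\ell_1 \to \ell_q$ norm is $\max(\lambda_1 \|A_1\|_{\ell_1\to\ell_q}, \lambda_2 \|A_2\|_{\ell_1\to\ell_q})$. The right factor has columns which are concatenations of the $j$-th columns of $B_1/\lambda_1$ and $B_2/\lambda_2$; using $(a^t+b^t)^{1/t} \le a + b$ for $a,b \ge 0$ and $t \ge 1$, one gets the column-wise bound $\|B\|_{\ell_1\to\ell_t} \le \|B_1\|_{\ell_1\to\ell_t}/\lambda_1 + \|B_2\|_{\ell_1\to\ell_t}/\lambda_2$. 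Choose $\lambda_i = 1/\|A_i\|_{\ell_1\to\ell_q}$ to equalize the two terms inside the max to $1$; the product of the two factor norms then telescopes to $\|A_1\|_{\ell_1\to\ell_q}\|B_1\|_{\ell_1\to\ell_t} + \|A_2\|_{\ell_1\to\ell_q}\|B_2\|_{\ell_1\to\ell_t} \le \Gamma_{1,q,1,t}(Z_1) + \Gamma_{1,q,1,t}(Z_2) + 2\epsilon$, and letting $\epsilon \to 0$ finishes the triangle inequality. The main (and only) delicate point I expect is getting the $(a^t+b^t)^{1/t}$ inequality in a form that cleanly combines with the $\max$ on the $A$ side; the scaling trick with $\lambda_i$ is what makes this collapse.

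The ``As a consequence'' part is then short. Taking $q = \infty$ and $t = 1$, $\Gamma_{1,\infty,1,1}$ is a norm, so its sublevel set $Q_{1,\infty,1,1}$ is convex. For the containment in \eqref{eqn:feasibility}, any point of the form $AY$ with $\|A\|_{\ell_1\to\ell_\infty} \le 1$ and $\|Y\|_{\ell_1\to\ell_1}\le k$ satisfies, directly from the definition as an infimum, $\Gamma_{1,\infty,1,1}(AY) \le \|A\|_{\ell_1\to\ell_\infty}\|Y\|_{\ell_1\to\ell_1} \le k \le \sqrt{d}\,k$, hence lies in $Q_{1,\infty,1,1}$. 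This in particular covers every reconstruction $g_{A^{\star}}(h_{A^{\star}}(X))$, so condition \eqref{eqn:feasibility} holds; the slack factor $\sqrt{d}$ in the radius is harmless here but is presumably used in later arguments.
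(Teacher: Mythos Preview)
Your proof is correct and follows essentially the same route as the paper: the paper proves a slightly more general statement (their Theorem~\ref{thm:norm}, for abstract norms $\|\cdot\|_\alpha,\|\cdot\|_\beta$ satisfying a max-under-column-concatenation condition and a zero-row-invariance condition), but the heart of the argument is the same block factorization $Z_1+Z_2=[\lambda_1 A_1\ \lambda_2 A_2]\begin{bsmallmatrix}B_1/\lambda_1\\ B_2/\lambda_2\end{bsmallmatrix}$ with the scaling chosen to equalize the two terms in the $\max$, and the same use of the column-norm interpretation $\|M\|_{\ell_1\to\ell_s}=\max_j\|M_{\cdot j}\|_s$. The paper uses the triangle inequality plus zero-row invariance on the stacked factor where you use $(a^t+b^t)^{1/t}\le a+b$; these are equivalent here, and your handling of the ``consequence'' part (convexity of the sublevel set, and $\Gamma_{1,\infty,1,1}(AY)\le\|A\|_{\ell_1\to\ell_\infty}\|Y\|_{\ell_1\to\ell_1}\le k$) matches the paper's.
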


Towards proving Lemma~\ref{cor:convex_set}, we prove a stronger result that if $p=s=1$, then $\Gamma_{p,q,s,t}$ is also a norm. This result is parallel to ~\cite[Chapter 13]{tomczak1989banach} where the case of
 $p=t$ is considered. 

\begin{theorem}\label{thm:norm}
	Suppose that $\|\cdot\|_{\alpha}$ and $\|\cdot\|_{\beta}$ are norms defined on matrices of any dimension such that 
	\begin{enumerate}
	\item $\|[A,B]\|_{\alpha}\le \max\{\|A\|_{\alpha},\|B\|_{\alpha}\}$
	\item  $\|\cdot\|_{\beta}$ is invariant with respect to appending a row with zero entries.
	\end{enumerate}
	Then, $\Gamma_{\alpha,\beta}(\cdot)$ is a norm. 	\end{theorem}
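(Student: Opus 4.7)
The plan is to verify the three norm axioms in turn, with the triangle inequality being the only nontrivial step.

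Nonnegativity is immediate from the definition since $\|\cdot\|_\alpha, \|\cdot\|_\beta \geq 0$. For absolute homogeneity, given any factorization $Z = AB$ and scalar $\lambda$, write $\lambda Z = (\lambda A)B$ and use homogeneity of $\|\cdot\|_\alpha$ to get $\Gamma_{\alpha,\beta}(\lambda Z) \leq |\lambda| \Gamma_{\alpha,\beta}(Z)$; the reverse inequality follows by applying the same argument to $\lambda^{-1}(\lambda Z)$ (or is vacuous when $\lambda = 0$).

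The main step is the triangle inequality $\Gamma_{\alpha,\beta}(Z_1 + Z_2) \leq \Gamma_{\alpha,\beta}(Z_1) + \Gamma_{\alpha,\beta}(Z_2)$. Fix $\varepsilon > 0$ and choose factorizations $Z_i = A_i B_i$ with $\|A_i\|_\alpha \|B_i\|_\beta \leq \Gamma_{\alpha,\beta}(Z_i) + \varepsilon$. The key identity is the block factorization
\begin{equation*}
Z_1 + Z_2 \;=\; \bigl[\,A_1 \;\; A_2\,\bigr]\, \begin{bmatrix} B_1 \\ B_2 \end{bmatrix}.
\end{equation*}
The idea is to rescale within each factorization to balance the $\alpha$-norms: replace $(A_i, B_i)$ by $(c_i A_i, c_i^{-1} B_i)$ with $c_i = 1/\|A_i\|_\alpha$, so that $\|c_i A_i\|_\alpha = 1$ while $\|c_i^{-1} B_i\|_\beta = \|A_i\|_\alpha \|B_i\|_\beta$. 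By hypothesis~1 applied to the horizontal concatenation, $\|[c_1 A_1, c_2 A_2]\|_\alpha \leq \max\{1,1\} = 1$. For the vertical stack, split
\begin{equation*}
\begin{bmatrix} c_1^{-1} B_1 \\ c_2^{-1} B_2 \end{bmatrix} \;=\; \begin{bmatrix} c_1^{-1} B_1 \\ 0 \end{bmatrix} + \begin{bmatrix} 0 \\ c_2^{-1} B_2 \end{bmatrix},
\end{equation*}
apply the triangle inequality for $\|\cdot\|_\beta$, and then use hypothesis~2 to strip the zero rows, yielding $\|A_1\|_\alpha\|B_1\|_\beta + \|A_2\|_\alpha\|B_2\|_\beta$. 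Multiplying the two bounds and taking $\varepsilon \to 0$ gives the desired triangle inequality. To deduce the corollary, observe that $\|A\|_{\ell_1\to\ell_\infty} \leq 1$ and $\|Y\|_{\ell_1\to\ell_1} \leq k$ gives $\Gamma_{1,\infty,1,1}(AY) \leq k$; rescaling the dictionary columns and coefficients as needed to match the normalization of the columns of $X \in \mathbb{R}^{d\times N}$ yields the factor $\sqrt{d}$, so $Q_{1,\infty,1,1}$ contains all reconstructions, and convexity follows from the triangle inequality just proved.

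The main obstacle is the rescaling/normalization trick: because hypothesis~1 gives only a $\max$-type bound (not an additive one), naive stacking of $A_1, A_2$ loses too much, and one has to first equalize the $\alpha$-norms of the two blocks before concatenating. Once that balancing is done, the rest is just careful bookkeeping with the two structural hypotheses. A minor subtlety is the degenerate case $\|A_i\|_\alpha = 0$, which forces $Z_i = 0$ (assuming $\|\cdot\|_\alpha$ is a genuine norm) so that term drops out; similarly, positive definiteness of $\Gamma_{\alpha,\beta}$ can be verified by noting that any factorization with both factor norms small must produce a small $Z$.
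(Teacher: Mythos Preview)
Your proof is correct and follows essentially the same approach as the paper's: both use the block factorization $Z_1+Z_2 = [A_1,A_2]\begin{bmatrix}B_1\\B_2\end{bmatrix}$ together with a rescaling that equalizes the $\alpha$-norms of the two horizontal blocks before applying hypothesis~1, and both split the vertical stack and invoke hypothesis~2 after the triangle inequality for $\|\cdot\|_\beta$. The only cosmetic differences are that the paper uses a single parameter $t=\|A_2\|_\alpha/\|A_1\|_\alpha$ (so that $t\|A_1\|_\alpha=\|A_2\|_\alpha$) and assumes the infimum in $\Gamma_{\alpha,\beta}$ is attained, whereas you normalize both blocks to $\alpha$-norm $1$ and work with $\varepsilon$-approximate factorizations; your handling of the degenerate case $\|A_i\|_\alpha=0$ and the non-attainment of the infimum is slightly more careful than the paper's.
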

\begin{proof}
	
	{\bf Non-negativity:} 	We have that $\Gamma_{\alpha,\beta}(Z)\ge 0$ by definition as a product of two norms. Further, $\Gamma_{\alpha,\beta}(Z)=0$ if and only if $\|A\|_\alpha=0$ or $\|B\|_\beta=0$, which is true if and only if $A$ or $B$ are zero, which means that $Z$ is zero.

	{\bf Absolute scalability:}  	For any positive $t$, we have that if $Z = AB$ and $\Gamma_{\alpha,\beta}(Z) \le \norm{A}_{\alpha}\norm{B}_{\beta}$, then  $tZ = (tA)\cdot B$ and $\Gamma_{\alpha,\beta}(tZ) \le t\norm{A}_{\alpha}\norm{B}_{\beta}$. Therefore by definition of $\Gamma_{\alpha,\beta}(Z)$, we get $\Gamma_{\alpha,\beta}(tZ) \le t\Gamma_{\alpha,\beta}(Z)$. 
	
	If we replace $Z$ by $tZ$ and $t$ by $1/t$ we obtain the other direction, namely  $\Gamma_{\alpha,\beta}(Z) \le 1/t\cdot\Gamma_{\alpha,\beta}(tZ)$. Therefore, $\Gamma_{\alpha,\beta}(t Z) = t \Gamma_{\alpha,\beta}(Z)$ for any $t \ge 0$.

	{\bf Triangle inequality:} 		We next show that $\Gamma_{\alpha,\beta}(Z)$ satisfies triangle inequality, from which the result follows. Let $W$ and $Z$ be two matrices of the same dimension. Suppose $A,C$ satisfy that $Z= AC$ and $\Gamma_{\alpha,\beta}(Z) =  \|A\|_{\alpha}\|C\|_{\beta}$. Similarly, suppose $W = BD$, and $\Gamma_{\alpha,\beta}(W) =  \|B\|_{\alpha}\|D\|_{\beta}$. Therefore, we have that $W+Z = [tA,B]\begin{bmatrix}
	t^{-1}C\\D
	\end{bmatrix}$, and that for any $t > 0$, 
	\begin{align}
		\Gamma_{\alpha,\beta}(W+Z) & \le \Norm{[tA,B]}_{\alpha}\Norm{\begin{bmatrix}
			t^{-1}C\\D
			\end{bmatrix}}_{\beta}\tag{by defintion of 	$\Gamma_{\alpha,\beta}$}	\\
		& \le  \Norm{[tA,B]}_{\alpha}\Paren{\Norm{\begin{bmatrix}
			0\\D
			\end{bmatrix}}_{\beta} + t^{-1}\Norm{\begin{bmatrix}
			C\\0
			\end{bmatrix}}_{\beta} } \tag{by triangle inquality}\\
		& \le \max\left\{t\Norm{A}_{\alpha}, \Norm{B}_{\alpha}\right\}\Paren{t^{-1}\Norm{C}_{\beta} + \Norm{
			D}_{\beta}} 	\tag{by assumptions on $\|\cdot\|_{\alpha}$ and $\|\cdot\|_{\beta}$}
	\end{align}
	Pick $t  = \frac{\Norm{B}_{\alpha}}{\Norm{A}_{\alpha}}$, we obtain that, 
	\begin{align}
	\Gamma_{\alpha,\beta}(W+Z) & \le \Norm{A}_{\alpha}\Norm{C}_{\beta}+ \Norm{B}_{\alpha}\Norm{D}_{\beta} \nonumber\\
	& = \Gamma_{\alpha,\beta}(Z) + \Gamma_{\alpha,\beta}(W)\nonumber \mper
		\end{align}
\end{proof}

Note that if $\|\cdot \|_{\alpha}$ is a $\ell_1\to \ell_q$ norm, then it's also the max column-wise $\ell_q$ norm, and therefore it satisfies the condition a) in Theorem~\ref{thm:norm}. Moreover, for similar reason, $\|\cdot\|_{\beta} = \|\cdot\|_{\ell_1\to \ell_t}$ satisfies the condition b) in Theorem~\ref{thm:norm}. Hence, Lemma~\ref{cor:convex_set} is a direct consequence of Theorem~\ref{thm:norm}.  Lemma~\ref{cor:convex_set} gives a  convex set $Q$ that can be potentially used in Algorithm~\ref{alg:improper_dict}.

\subsection{Sampling Rademacher width of level set of $\Gamma_{1,\infty, 1,1}$}\label{subsec:unrelaxed_rade}

Here we give a Rademacher width bound for the specific set we're interested in, namely the level sets of $\Gamma_{1,\infty, 1,1}$, formally,
$$Q_{1,\infty,1,1}= \{C\in \R^{N\times d}:\Gamma_{1,\infty,1,1}(C)\le k\} . $$

By the definition $Q_{1,\infty,1,1}$ satisfies condition~\eqref{eqn:feasibility}. See section \ref{subsec:rademacher} for definition of Ramemacher width. 

\begin{lemma}\label{lem:rademacher_unrelaxed}
It holds that
\begin{equation}
\SRC_m(Q_{1,\infty,1,1}) \le \widetilde{O}\left(\sqrt{\frac{k^2N}{m}}\right)\nonumber\mper
\end{equation}
\end{lemma}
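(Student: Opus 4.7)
The plan is to bound $\SRC_m(Q_{1,\infty,1,1})$ by dualizing through the factorization and then applying Khintchine-style concentration to exploit the random signs $\sigma$. By the variational definition of $\Gamma_{1,\infty,1,1}$, every $C\in Q_{1,\infty,1,1}$ admits a factorization $C = AY$ with $\|A\|_{\ell_1\to \ell_\infty}\le 1$ and $\|Y\|_{\ell_1\to\ell_1}\le k$. Writing $a_l$ for the $l$-th column of $A$ and $y_l$ for the $l$-th row of $Y$ (viewed as a vector in $\R^d$), and letting $\sigma_\Omega$ denote the matrix equal to $\sigma$ on $\Omega$ and $0$ elsewhere, I would expand
\[
\langle C,\sigma\rangle_\Omega \;=\; \sum_l a_l^{\top}\sigma_\Omega\, y_l.
\]
Since the constraint $|A_{il}|\le 1$ decouples across $l$ and $i$, the supremum over $A$ can be taken inside, yielding $\sup_{A,Y}\langle AY,\sigma_\Omega\rangle = \sup_{Y}\sum_l\|\sigma_\Omega y_l\|_1$, where the sup is over $Y$ with $\sum_l|y_l(j)|\le k$ for all $j$.

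Next I would apply Khintchine's inequality row-by-row. For fixed $\Omega$ and $y_l$, the $i$-th coordinate of $\sigma_\Omega y_l$ is a Rademacher sum, so $\Exp_\sigma|(\sigma_\Omega y_l)_i|\lesssim \bigl(\sum_{j:(i,j)\in\Omega} y_l(j)^2\bigr)^{1/2}$. Summing over $i\in[N]$ and invoking Cauchy--Schwarz gives $\Exp_\sigma\|\sigma_\Omega y_l\|_1\lesssim \sqrt{N\sum_{(i,j)\in\Omega} y_l(j)^2}$, and then taking the expectation over a uniform $\Omega$ of size $m$ and using $\Exp_\Omega\sum_{(i,j)\in\Omega}y_l(j)^2 = (m/d)\|y_l\|_2^2$ yields
\[
\Exp_{\sigma,\Omega}\|\sigma_\Omega y_l\|_1 \;\lesssim\; \|y_l\|_2\cdot\sqrt{Nm/d}.
\]
Aggregating over $l$ and invoking the joint constraint $\sum_l|y_l(j)|\le k$ through an interpolation of the form $\|y_l\|_2\le\|y_l\|_1^{1/2}\|y_l\|_\infty^{1/2}$ (together with Cauchy--Schwarz on the columns), one would aim for $\sum_l\|y_l\|_2\le \tilde O(k\sqrt d)$ up to logarithmic factors. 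Multiplied with the per-row Khintchine bound and divided by $m$, this produces the claimed $\tilde O(\sqrt{k^2 N/m})$. A standard symmetrization/contraction step is then needed to move the $\sup_Y$ outside of the expectation over $\sigma$.

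The main obstacle I anticipate is the aggregation step: the naive bound $\sum_l\|y_l\|_2\le \sum_l\|y_l\|_1 \le dk$ is lossy by a factor of $\sqrt{d}$, which would only produce $\tilde O(\sqrt{k^2 Nd/m})$. Overcoming this requires either exploiting that the maximizing $Y$ is structurally spread out (so that the upper bound on $\sum_l\|y_l\|_2$ can be tightened to $k\sqrt d$), or combining Khintchine with a chaining/covering argument over the constraint set and commuting the supremum past the expectation via symmetrization. This is the core technical difficulty and is exactly the place where the inner dimension $r$ of the factorization is usually controlled in related work on $\gamma_2$-norm completion bounds --- here one must instead leverage only the joint column-$\ell_1$ constraint on $Y$, which is the source of the subtlety relative to the Srebro--Shraibman style analysis.
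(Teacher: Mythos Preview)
Your proposal identifies the right obstacle but does not resolve it, and the paper's argument proceeds by dualizing in the \emph{opposite} order. You eliminate $A$ first (via the entrywise $\ell_\infty$ constraint) and are then left with $\sup_Y \sum_l \|\sigma_\Omega y_l\|_1$; as you note, this supremum over $Y$ cannot be pushed past $\Exp_\sigma$ by any ``standard symmetrization/contraction'' step, and your aggregation bound on $\sum_l\|y_l\|_2$ is either lossy or brings back a polynomial dependence on the inner dimension. The paper instead writes $\langle AB,\xi\rangle = \langle B, A^\top\xi\rangle$ and first maximizes analytically over $B$ using $\ell_1$--$\ell_\infty$ duality on each column, which yields
\[
\sup_{\|B\|_{\ell_1\to\ell_1}\le k}\langle B, A^\top\xi\rangle \;=\; k\sum_{i=1}^N \|A^\top\xi_i\|_\infty .
\]
The remaining supremum is over $A$ with $|A_{ij}|\le 1$; for a \emph{fixed} such $A$ one has the sub-Gaussian bound $\Exp_{\xi_i}\|A^\top\xi_i\|_\infty \le O(\sqrt{\rho d}\,\polylog(r,d))$, and by independence across $i$ the average over $N$ concentrates with a sub-exponential tail. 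Uniformity in $A$ is then obtained by an $\epsilon$-net over $[-1,1]^{d\times r}$, which is what your ``chaining/covering'' remark is groping toward --- but applied to the $A$ factor, not to $Y$. This requires $N$ to be at least polynomial in $d,r$ (so that the failure probability from the net union bound is beaten), an assumption the paper makes explicit.

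There are also some dimension slips in your write-up: $y_l$ is a row of $Y$ and hence lies in $\R^N$, not $\R^d$; the vector $\sigma_\Omega y_l$ has $d$ coordinates, so the $\ell_1$ sum is over $i\in[d]$; and $\sum_l\|y_l\|_1 = \sum_j\|Y_{:,j}\|_1 \le Nk$, not $dk$. These do not affect the high-level assessment, but they would need to be corrected in any attempt to push your approach through.
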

\begin{proof}[Proof of Lemma~\ref{lem:rademacher_unrelaxed}]
Recall the definition of the sample set $\Omega$ of coordinates from $C$, and their multiplication by i.i.d Rademacher variables in section \ref{subsec:rademacher}. 
Reusing notation, let $\xi = \sigma \odot \Omega$ and we use $Q$ as a shorthand for $Q_{1,\infty,1,1}$. Here $\odot$ means the entry-wise Hadamard product (namely, each coordinate in $\Omega$ is multiplied by an independent Rademacher random variable). We have that 
	\begin{align}
	\SRC_m(Q) &= \Exp_{\xi}\left[\frac{1}{m}\sup_{C \in Q}\inner{C,\xi}\right] \nonumber\\
	& = \Exp_{\xi}\left[\frac{1}{m}\sup_{\|A\|_{\ell_1\to \ell_{\infty}}\le 1, \|B\|_{\ell_1\to \ell_{1}\le k} }\inner{AB,\xi}\right] \tag{by defintiion of $Q$}\\
	& = \Exp_{\xi}\left[\frac{1}{m}\sup_{\|A\|_{\ell_1\to \ell_\infty}\le 1, \|B\|_{\ell_1\to \ell_{1}\le k} }\inner{B,A^{\top}\xi}\right] \nonumber\\
	& \le \Exp_{\xi}\left[\frac{1}{m}\sup_{\|A\|_{\ell_1\to \ell_\infty}\le 1} k \sum_{i=1}^N\|A^{\top}\xi_i\|_{{\infty}}\right] \tag{By $\inner{U,V}\le \left(\sum_{i=1}^N \|U_i\|_{\infty}\right)\|V\|_{\ell_1\to \ell_{1}}$}\mper
	\end{align}	Let $\rho = \frac{m}{dN}$ be the probability of any entry belongs to $\Omega$. Let $\xi_i$ denote the $i$-th column of $\xi$, and $A_j$ denotes the $j$-th column of $A$. Therefore, each entry of $\xi_i$ has probability $\rho/2$ to be +1 and -1, and has probability $1-\rho$ of being 0.  By 	concentration inequality we have that for $\rho \ge \frac{\log r}{d}$, and any fixed $A$ with $\|A\|_{\ell_1\to \ell_\infty} = \max \|A_j\|_\infty \le 1$, 
	\begin{equation}
	\Exp_{\xi_i}[\|A^{\top}\xi_i\|_{\infty}] \le O(\sqrt{\rho d}\log r\log d)\mper
	\end{equation}
		Moreover, we have that 
		\begin{equation}
		\Var_{\xi_i}[\|A^{\top}\xi_i\|_{\infty}] \le O(\sqrt{\rho d}\log r\log d)\mper
		\end{equation}
	Moreover, $\|A^{\top}\xi_i\|_{\infty}$ has an sub-exponential tail. (Technically, its $\psi_{1}$-Orlicz norm is bounded by  $O(\sqrt{\rho d}\log r\log d)$). 
	Note that the variance of $\sum_{i=1}^N \|A^{\top}\xi_i\|_{\infty}$ will decrease as $N$ increases, and therefore for large enough $N = (dr\rho)^{\Omega(1)}$, we will have that with probability $1-\exp(-\Omega(dr)^{\Omega(1)})$, $$	\sum_{i=1}^N \|A^{\top}\xi_i\|_{\infty} \le O(N\sqrt{\rho d\log r\log d})$$
Therefore, using the standard $\epsilon$-net covering argument, we obtain that with high probability, 
$$\sup_{\|A\|_{\ell_1\to \ell_2}\le \sqrt{d}}  	\sum_{i=1}^N \|A^{\top}\xi_i\|_{\infty} \le O(N\sqrt{\rho d\log r\log d})\mper $$

Hence, altogether we have
	\begin{align}
	\SRC_m(Q) 	& \le \Exp_{\xi}\left[\frac{1}{m}\sup_{\|A\|_{\ell_1\to \ell_2}\le \sqrt{d}} k \|A^{\top}\xi\|_{\ell_1\to\ell_{\infty}}\right]  \le \widetilde{O}\left(\sqrt{\frac{k^2N}{m}}\right)\nonumber\mper 	\end{align}

	\end{proof}

\subsection{Convex Relaxation for $\Gamma_{1, \infty,1,1}$ norm}\label{subsec:relaxed_rade}

\subsubsection{Sum-of-squares relaxation} \label{subsec:prelim_sos}

Here we will only briefly introduce the basic ideas of Sum-of-Squares (Lasserre) relaxation~\cite{Parrilo00,lasserre01} that will be used for this paper. We refer readers to the extensive study~\cite{lasserre15, Laurent09, BarakS14} for detailed discussions of sum of squares proofs and their applications to algorithm design. Recently, there has been a popular line of research on applications of sum-of-squares algorithms to machine learning problems~\cite{BKS14,BarakKS14,DBLP:conf/colt/BarakM16, DBLP:conf/nips/MaW15,DBLP:conf/approx/GeM15, DBLP:conf/colt/HopkinsSS15,DBLP:conf/stoc/HopkinsSSS16,MSS16}. Here our technique in the next subsection is most related to that of~\cite{DBLP:conf/colt/BarakM16}, with the main difference that we deal with $\ell_1$ norm constraints that are not typically within the SoS framework.

Let $\polyring{d}$ denote the set of all real polynomials of degree at most $d$ with $n$ variables $x_1,\dots,x_n$. We start by defining the notion of {\em pseudo-expectation}. The intuition is that the pseudo-expectation behave like the actual expectation of a real probability distribution on squares of polynomials. 
\begin{definition} [pseudo-expectation]
	A degree-$d$ pseudo-expectation $\pE$ is a linear operator that maps $\polyring{d}$ to $\R$ and satisfies $\pE(1)= 1$ and $\pE(p^2(x)) \ge 0$ for all real polynomials $p(x)$ of degree at most $d/2$.
			\end{definition}

\begin{definition}
	Given a set of polynomial equations $\cA = \{q_1(x)=0,\dots, q_n(x) = 0\}$, we say degree-$d$ pseudo-expectation $\pE$ satisfies constraints $\cA$ if $\pE\left[q_i(x)r(x)\right] = 0$ for every $i$ and $r(x)$ such that $\deg(q_ir)\le d$. 
	\end{definition}

One can optimize over the set of pseudo-expectations that satisfy $\cA$ in $n^{O(d)}$ time by the following semidefinite program: 
\begin{align}
\textrm{Variables} \quad & \pE[x^S] \quad \quad \quad \quad\quad ~~~ \forall S: |S|\le d \nonumber\\\textrm{Subject to} \quad &\pE\left[q_i(x)x^K\right] = 0 \quad \quad  \forall i, K: |K|+\deg(q_i) \le d\nonumber \\
& \pE\left[x^{\otimes d/2}(x^{\otimes d/2})^{\top}\right] \succeq 0 \nonumber
\end{align}

\begin{definition} [SoS proof of degree $d$]
	For a set of constraints $\cA= \{q_1(x) = 0,\dots,q_{n}(x) = 0\}$, and an integer $d$, we write $$\cA \vdash_d p(x)\ge q(x)$$ if there exists polynomials $h_i(x)$ for $i = 0,1,\dots, \ell$ and $g_j(x)$ for $j = 1,\dots,t$ such that 	$\deg(h_i)\le d/2$  and $\deg(g_jr_j)\le d$ that satisfy
	$$p(x)-q(x) = \sum_{i=1}^{\ell}h_i(x)^2 + \sum_{j=1}^{t}r_j(x)g_j(x)\mper$$
	We will drop the subscript $d$ when it is clear form the context. \end{definition}

The following fact follows from the definitions above but will be useful throughout the proof. 

\begin{proposition}\label{prop:basic_sos}
	Suppose $\cA \vdash_d  p(x)\ge q(x)$. Then for any degree-$d$ pseudo-expectation $\pE$ that satisfies $\cA$, we have $\pE[p(x)]\ge \pE[q(x)]$. 
\end{proposition}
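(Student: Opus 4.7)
The plan is to prove this by unpacking the two definitions (SoS proof and pseudo-expectation) and then applying linearity of $\pE$. This is a standard duality-style lemma where essentially no heavy machinery is needed: the construction of a degree-$d$ SoS proof is precisely tailored so that it survives under any degree-$d$ pseudo-expectation that satisfies the constraints.

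Concretely, I would start from the hypothesis $\cA \vdash_d p(x) \ge q(x)$ and expand it according to the definition just given: there exist polynomials $h_i(x)$ with $\deg(h_i) \le d/2$ and, for each $j$, a constraint polynomial $r_j(x)$ drawn from $\cA$ together with a polynomial $g_j(x)$ satisfying $\deg(r_j g_j) \le d$, such that
\begin{equation*}
p(x) - q(x) \;=\; \sum_{i=1}^{\ell} h_i(x)^2 \;+\; \sum_{j=1}^{t} r_j(x)\, g_j(x)\mper
\end{equation*}
Since $\pE$ is a linear operator on $\polyring{d}$ (and both sides are polynomials of degree at most $d$, which is exactly why the degree bookkeeping in the SoS proof matters), I can apply $\pE$ to both sides and split the result into the same two sums.

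Next I would bound each sum separately. For the first sum, each $h_i$ has degree at most $d/2$, so $h_i^2$ is a square of a polynomial of the required degree, and by the nonnegativity clause in the definition of a degree-$d$ pseudo-expectation we get $\pE[h_i(x)^2] \ge 0$; summing yields $\sum_i \pE[h_i^2] \ge 0$. For the second sum, each $r_j$ is one of the constraint polynomials in $\cA$ and $g_j$ has degree with $\deg(r_j g_j) \le d$, so the definition of ``$\pE$ satisfies $\cA$'' (taking the test polynomial to be $g_j$, which one further decomposes monomial by monomial if one wants to match the exact form $\pE[q_i(x)\,x^K]=0$) gives $\pE[r_j(x) g_j(x)] = 0$. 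Adding up, $\pE[p(x) - q(x)] \ge 0$, and linearity gives $\pE[p(x)] \ge \pE[q(x)]$.

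There isn't really a hard step here; the only small subtlety is the bookkeeping that ensures every intermediate polynomial lies in $\polyring{d}$ so that $\pE$ is actually defined on it, and that the ``$\pE$ satisfies $\cA$'' clause, stated in terms of monomials $x^K$, extends by linearity to arbitrary polynomial multipliers $g_j$ of appropriately bounded degree. Both of these are immediate from the degree constraints built into the definition of SoS proof.
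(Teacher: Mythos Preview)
Your proposal is correct and is exactly the intended argument: the paper does not spell out a proof of this proposition, stating only that it ``follows from the definitions above,'' and your unpacking of the SoS certificate together with linearity and the two defining properties of a degree-$d$ pseudo-expectation is precisely that computation.
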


\subsubsection{Relaxation for $\Gamma_{1, \infty,1,1}$ norm}\label{subsec:relaxation}

In this section, we introduce convex relaxations for the $\Gamma_{1,\infty,1,1}$ norm. 
For convenience, let $p$ be a power of 2. Let $A$ and $B$ be formal variables of dimension $d\times r$ and $r\times N$ in this section. We introduce more formal variables for the relaxation. Let $b$ be formal variables of dimension $r\times N$.  We consider the following set of polynomial constraints over formal variables $A,B,b$: 
\begin{align}
\cA  &= \left\{\forall i,j, B_{ij} = b_{ij}^{p-1}, \sum_{\ell=1}^{r} b_{\ell j}^p \le k^{p/(p-1)},  \forall i,k, A_{ik}^2 \le 1\right\}\nonumber\mper
\end{align}
 For any \textit{real} matrix $C\in \R^{d\times N}$, we define
\begin{align}
\cA(C) = \{C = AB\}\cup \cA\nonumber\mper
\end{align}

We define our convex relaxation for $Q_{1,\infty,1,1}$ as follows,  \begin{align}Q^{\sos}_{p} = \{C\in \R^{d\times N}: \exists \textup{degree-$O(p^2)$ pseudo-expectation}~\pE \textup{ that satisfies } \cA(C) \}\label{eqn:10}\end{align}

\begin{lemma}
	For any $p\ge 4$, we have
$$Q_{1,\infty,1,1} \subset Q^{\sos}_{p}$$
and therefore $Q^{\sos}_p$ satisfies condition~\eqref{eqn:feasibility}.
\end{lemma}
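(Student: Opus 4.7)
The strategy is to show that any actual factorization witnessing $\Gamma_{1,\infty,1,1}(C)\le k$ gives rise to a ``degenerate'' pseudo-expectation, namely the Dirac delta at a concrete real point in the variables $(A,B,b)$. Concretely, take $C\in Q_{1,\infty,1,1}$. Since $\Gamma_{1,\infty,1,1}(C)\le k$, there exist real matrices $A^*\in\R^{d\times r}$ and $B^*\in\R^{r\times N}$ (after rescaling $A^*\to A^*/\|A^*\|_{\ell_1\to\ell_\infty}$ and absorbing the scalar into $B^*$) such that $C=A^*B^*$, $\|A^*\|_{\ell_1\to\ell_\infty}\le 1$, and $\|B^*\|_{\ell_1\to\ell_1}\le k$. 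Since $p$ is a power of $2$, the exponent $p-1$ is odd, so the map $t\mapsto t^{p-1}$ is a bijection on $\R$; define $b^*\in\R^{r\times N}$ by $b^*_{ij}=\operatorname{sign}(B^*_{ij})\,|B^*_{ij}|^{1/(p-1)}$, so that $(b^*_{ij})^{p-1}=B^*_{ij}$ identically.

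Next, define the linear operator $\pE$ on polynomials in the formal variables $(A,B,b)$ by $\pE[q(A,B,b)] = q(A^*,B^*,b^*)$. This is a genuine expectation (against a point mass), so $\pE[1]=1$ and $\pE[p(x)^2]=p(A^*,B^*,b^*)^2\ge 0$ for every polynomial $p$; in particular $\pE$ is a valid pseudo-expectation of arbitrary degree, including degree $O(p^2)$.

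It remains to verify that $\pE$ satisfies the constraints $\mathcal{A}(C)=\{C=AB\}\cup\mathcal{A}$. The constraint $C=A^*B^*$ holds by choice of the factorization, and $B^*_{ij}=(b^*_{ij})^{p-1}$ holds by construction of $b^*$. The bound $\|A^*\|_{\ell_1\to\ell_\infty}\le 1$ is exactly the statement that every entry of $A^*$ has magnitude at most $1$, so $(A^*_{ik})^2\le 1$. The only step requiring any computation is the constraint $\sum_{\ell=1}^r (b^*_{\ell j})^p\le k^{p/(p-1)}$: since $(b^*_{\ell j})^p=|B^*_{\ell j}|^{p/(p-1)}$, the left-hand side equals $\|B^*_{\cdot j}\|_{p/(p-1)}^{p/(p-1)}$, and monotonicity of $\ell_q$-norms in $q\ge 1$ on finite-dimensional vectors gives $\|B^*_{\cdot j}\|_{p/(p-1)}\le \|B^*_{\cdot j}\|_1\le \|B^*\|_{\ell_1\to\ell_1}\le k$, which is the desired inequality.

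This shows $C\in Q^{\sos}_p$, proving the containment. The ``and therefore'' clause is immediate: by Lemma~\ref{cor:convex_set}, $Q_{1,\infty,1,1}$ already satisfies condition~\eqref{eqn:feasibility}, and any superset of it does as well. There is no real obstacle here; the only mild subtlety is confirming that the infimum in the definition of $\Gamma_{1,\infty,1,1}$ can be taken to be attained, which follows by restricting the inner dimension of the factorization to $\operatorname{rank}(C)\le\min(d,N)$ and compactness. If one prefers to avoid attainment, the same argument applied with $k+\delta$ in place of $k$ for arbitrarily small $\delta>0$ together with the closedness of $Q^{\sos}_p$ (as a feasible set of a semidefinite program) yields the containment.
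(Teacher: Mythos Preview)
Your proof is correct and follows essentially the same approach as the paper: take a witnessing factorization $C=A^*B^*$, set $b^*_{ij}$ to be the real $(p-1)$-th root of $B^*_{ij}$, and use the point-mass pseudo-expectation at $(A^*,B^*,b^*)$. Your verification of the constraint $\sum_\ell (b^*_{\ell j})^p\le k^{p/(p-1)}$ via monotonicity of $\ell_q$-norms is the same inequality the paper uses, and you are slightly more careful than the paper in handling signs and the attainment of the infimum.
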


\begin{proof}
	Suppose  $C\in Q_{1,\infty,1,1}$. Then by definition of the $\Gamma_{1,\infty,1,1}$-norm (equation~\eqref{eqn:def-factorial-norm}), we have that there exists matrices $U,V$ such that $C = UV$ and $U_{ij}^2\le 1$ and $\norm{V}_{\ell_1\to\ell_1}\le k$. Now construct $v_{ij} = V_{ij}^{1/(p-1)}$. We have $\sum_{\ell=1}^r b_{ij}^p \le \left(\sum_{\ell=1}^{r}v_{ij}^{p-1}\right)^{p/(p-1)}\le k^{p/(p-1)} $. Therefore, Then we have that $A =U,B=V,b=v$ satisfies the constraint~\eqref{eqn:10}.   Then the trivial pseudo-expectation operator $\pE[p(A,B,b)] = p(U,V,v)$ satisfies $\mathcal{A}(C)$ and therefore $C\in Q^{\sos}_p$ by definition.  
\end{proof}

\begin{theorem}\label{thm:src_relaxation}
	Suppose $N =  d^{c_0p}$ for large enough absolute constant $c_0$. Then the sampling Rademacher complexity of $Q^{\sos}_p$ is bounded by, 
	\begin{equation}
		\SRC_m(Q^{\sos}_p)  \le O\left(\sqrt{\frac{p^2Nk^2 r^{2/p}\log d}{m}}\right)\mper\nonumber
	\end{equation}
\end{theorem}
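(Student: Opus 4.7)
The plan is to translate the proof of Lemma~\ref{lem:rademacher_unrelaxed} into the sum-of-squares framework. Because SoS cannot manipulate $\ell_\infty$ norms directly and must substitute $\ell_p$, the argument will lose a factor of $r^{2/p}$ (and a factor of $p$ from Khintchine-type moments) compared to the non-relaxed bound. The roadmap has three ingredients: a pseudo-expectation reduction, an SoS Holder inequality, and an SoS Khintchine-type moment bound.

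First, for each $\xi$, the definition of $Q^{\sos}_{p}$ gives $\sup_{C\in Q^{\sos}_p}\inner{C,\xi} = \sup_\pE \pE[\inner{AB,\xi}]$ over degree-$O(p^2)$ pseudo-expectations $\pE$ satisfying $\mathcal{A}$, and for $p$ a power of $2$ iterated pseudo-Cauchy-Schwarz yields $(\pE[\inner{AB,\xi}])^p \le \pE[\inner{AB,\xi}^p]$. Next I derive the SoS Holder inequality
$$\mathcal{A} \vdash_{O(p^2)} \inner{AB,\xi}^p \le \Bigl(\sum_{\ell,j} b_{\ell j}^p\Bigr)^{p-1}\sum_{ij}(A^\top\xi)_{ij}^p.$$
The substitution $B_{ij}=b_{ij}^{p-1}$ in $\mathcal{A}$ is precisely what makes $|B_{ij}|^{p/(p-1)} = b_{ij}^p$ polynomial, which is what allows the classical Holder inequality with exponents $p,p'$ to admit an SoS proof. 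The axiom $\sum_\ell b_{\ell j}^p \le k^{p/(p-1)}$ then gives $(\sum_{\ell,j} b^p)^{p-1}\le N^{p-1}k^p$, yielding $\mathcal{A}\vdash \inner{AB,\xi}^p \le N^{p-1}k^p\sum_{ij}(A^\top\xi)_{ij}^p$ pointwise in $\xi$.

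The key technical step is an SoS Khintchine bound: expanding $\Exp_\xi[(\sum_k A_{ki}\xi_{kj})^p]$ in $\xi$ and keeping only even-multiplicity terms (by independence and mean-zero of $\xi_{kj}$), the resulting polynomial in $A$ admits an SoS proof, via Wick-style pairings and iterated Cauchy-Schwarz using $A_{ki}^2\le 1$, of the bound $\mathcal{A}\vdash \Exp_\xi[(A^\top\xi)_{ij}^p] \le (Cp)^{p/2}\rho^{p/2}(\sum_k A_{ki}^2)^{p/2}\le (Cp\rho d)^{p/2}$. Summing over $i,j$ gives the polynomial inequality (in $A$) $\mathcal{A}\vdash \Exp_\xi[\sum_{ij}(A^\top\xi)_{ij}^p]\le rN(Cp\rho d)^{p/2}$. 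Applying any $\pE$ satisfying $\mathcal{A}$ and using the commutativity $\pE\circ\Exp_\xi=\Exp_\xi\circ\pE$ (disjoint formal variables), combined with the SoS Holder bound, yields
$$\Exp_\xi[\pE[\inner{AB,\xi}^p]] \le N^{p-1}k^p\cdot rN(Cp\rho d)^{p/2} = k^p r(CpmN)^{p/2},$$
using $d\rho = m/N$. Taking $p$-th roots gives a $p$-th moment bound of order $kr^{1/p}\sqrt{pmN}$ per pseudo-expectation, and dividing by $m$ matches the target $\SRC_m$ bound up to the $\sqrt{\log d}$ overhead.

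The hardest step is lifting the $p$-th pseudo-moment bound (which holds for every $\pE$) to a bound on $\Exp_\xi[\sup_\pE \pE[\inner{AB,\xi}^p]]$, because in general $\Exp_\xi$ and $\sup_\pE$ do not commute. I expect to handle this by exploiting the fact that the SoS Holder inequality above is established \emph{pointwise} in $\xi$ and \emph{uniformly} in $\pE$ (as a syntactic SoS derivation from $\mathcal{A}$), so the bound propagates through the supremum for each $\xi$; the remaining moment-bound step in Khintchine then produces a $\xi$-dependent certificate $M(\xi)$ via SoS/SDP duality whose expectation matches the stated RHS. The $\log d$ factor in the theorem absorbs the resulting covering/union-bound overhead, which is dimensionally controlled by the hypothesis $N = d^{c_0 p}$.
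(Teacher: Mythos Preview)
Your overall architecture matches the paper's: the SoS H\"older step you describe is exactly Lemma~\ref{lem:sos_proof}, and the Khintchine-type moment bound on $\sum_{i,j}(A^\top\xi)_{ij}^p$ is exactly the content of Lemma~\ref{lem:sos_concentration}. The difference, and the gap in your proposal, is in \emph{how} the Khintchine step is proved.

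You establish only the \emph{in-expectation} SoS inequality $\cA\vdash \Exp_\xi\bigl[\sum_{ij}(A^\top\xi)_{ij}^p\bigr]\le rN(Cp\rho d)^{p/2}$, and then correctly observe that this yields a bound on $\Exp_\xi\bigl[\pE[\inner{AB,\xi}^p]\bigr]$ for each \emph{fixed} $\pE$. As you acknowledge, this does not by itself control $\Exp_\xi\bigl[\sup_{\pE}\pE[\inner{AB,\xi}^p]\bigr]$. Your proposed resolution (``a $\xi$-dependent certificate $M(\xi)$ via SoS/SDP duality'' plus ``covering/union-bound overhead'') is not a concrete mechanism: an $\epsilon$-net over pseudo-expectations is infinite-dimensional, and an $\epsilon$-net over $\xi$ does not directly help because the map $\xi\mapsto\sup_{\pE}\pE[\cdot]$ has no a-priori Lipschitz control.

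The paper resolves this by proving a \emph{pointwise-in-$\xi$, high-probability} SoS certificate. Concretely, write
\[
\|\xi^\top x\|_p^p \;=\; \bigl(x^{\otimes p/2}\bigr)^\top T(\xi)\,x^{\otimes p/2},\qquad T(\xi)\;=\;\sum_{i=1}^N \xi_i^{\otimes p/2}\bigl(\xi_i^{\otimes p/2}\bigr)^\top,
\]
and apply matrix Bernstein to $T(\xi)-\Exp[T]$. One obtains $\|T(\xi)-\Exp[T]\|_{\op}\le \sqrt{N\,d^{O(p)}\log N}$ with probability $1-\exp(-\Omega(d))$. Since $\vdash y^\top B y \le \|B\|_{\op}\|y\|^2$ is an SoS inequality for any fixed matrix $B$, this yields, for every $\xi$ in the good event, an SoS proof $\cB\vdash \|\xi^\top x\|_p^p \le d^{p/2}\|T(\xi)-\Exp[T]\|_{\op} + \Exp_\xi[\|\xi^\top x\|_p^p]$. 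The hypothesis $N=d^{c_0 p}$ is used precisely here: it forces the fluctuation term $d^{p/2}\sqrt{N\,d^{O(p)}\log N}$ to be dominated by the main term $N\cdot O(\rho d p^2)^{p/2}$. Because this SoS certificate holds for a \emph{fixed} $\xi$, it bounds $\sup_{\pE}\pE[\cdot]$ for every such $\xi$, and the swap with $\Exp_\xi$ is immediate. This matrix-concentration step is the missing idea in your plan; the $\log d$ factor in the final bound comes not from covering but from the expectation-of-moments combinatorics and is incidental.
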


The proof of Theorem~\ref{thm:src_relaxation} finally boils down to prove certain empirical process statement with SoS proofs. We start with the following two lemmas. 

\begin{lemma}\label{lem:sos_proof}
Suppose $N =  d^{c_0p}$ for larger enough constant $c_0$, and let $\xi = \sigma \odot \Omega$ where $\sigma$ and $\Omega$ are defined in Section~\ref{subsec:rademacher}. Then, we have 	\begin{equation}
	\cA  \vdash \inner{AB,\xi}^p\le  N^{p-1} k^{p} \sum_{i=1}^N \|A^{\top}\xi_i\|_p^p \nonumber
	\end{equation}
\end{lemma}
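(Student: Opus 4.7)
The plan is to prove the claimed polynomial inequality by a three-step SoS derivation that uses only iterated SoS Cauchy--Schwarz together with one constraint from $\cA$. Decomposing by columns of $\xi$, write $\inner{AB,\xi} = \sum_{j=1}^N X_j$ where $X_j := \sum_{\ell=1}^r b_{\ell j}^{p-1}(A^\top \xi_j)_\ell$, so the target becomes
$$\cA \vdash \Bigl(\sum_{j=1}^N X_j\Bigr)^{p} \le N^{p-1} k^{p} \sum_{j=1}^N \|A^\top \xi_j\|_p^p.$$
I will (i) split the $p$-th power across the $N$ columns via an SoS power-mean, (ii) eliminate the $b$-weights column by column via an SoS Hölder bound, and (iii) replace $(\sum_\ell b_{\ell j}^p)^{p-1}$ by $k^p$ using the constraint in $\cA$.

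For (i), since $p$ is a power of $2$, I iterate the SoS identity $N\sum_j Y_j^2 - (\sum_j Y_j)^2 = \sum_{i<j}(Y_i - Y_j)^2$. Starting with $Y_j = X_j$ gives $(\sum_j X_j)^2 \le N\sum_j X_j^2$; squaring this inequality is legitimate in SoS (products of SoS polynomials are SoS, so $0\le a \le b$ implies $a^2 \le b^2$ via $b^2 - a^2 = (b-a)(b+a)$), which yields $(\sum_j X_j)^4 \le N^2(\sum_j X_j^2)^2$, and applying the identity with $Y_j = X_j^2$ gives $(\sum_j X_j)^4 \le N^3\sum_j X_j^4$. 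After $\log_2 p$ such rounds the $N$-exponents telescope to $1+2+4+\cdots+p/2 = p-1$, so $\cA \vdash (\sum_j X_j)^{p} \le N^{p-1}\sum_j X_j^{p}$.

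For (ii), I prove by induction on $t = 1,\ldots,\log_2 p$ that
$$\cA \vdash X_j^{2^t} \le \Bigl(\sum_\ell b_{\ell j}^p\Bigr)^{2^t - 1} \sum_\ell b_{\ell j}^{\,p-2^t}(A^\top \xi_j)_\ell^{2^t}.$$
The base case $t=1$ is SoS Cauchy--Schwarz applied to $X_j = \sum_\ell a_\ell c_\ell$ with $a_\ell = b_{\ell j}^{p/2}$ and $c_\ell = b_{\ell j}^{(p-2)/2}(A^\top \xi_j)_\ell$. The inductive step squares the bound at level $t$ and applies Cauchy--Schwarz to the factor $\sum_\ell b_{\ell j}^{\,p-2^t}(A^\top \xi_j)_\ell^{2^t}$ using $a_\ell = b_{\ell j}^{p/2}$, $c_\ell = b_{\ell j}^{(p-2^{t+1})/2}(A^\top \xi_j)_\ell^{2^t}$; because $p$ and $2^{t+1}$ are both powers of $2$, these exponents are nonnegative integers. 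At $t = \log_2 p$ the exponent $p-2^t$ vanishes and the inequality collapses to $X_j^p \le (\sum_\ell b_{\ell j}^p)^{p-1}\|A^\top \xi_j\|_p^p$.

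For (iii), the constraint $k^{p/(p-1)} - \sum_\ell b_{\ell j}^p \ge 0$ in $\cA$, combined with the fact that $\sum_\ell b_{\ell j}^p = \sum_\ell (b_{\ell j}^{p/2})^2$ is itself SoS-nonnegative (as $p$ is even), certifies $(\sum_\ell b_{\ell j}^p)^{p-1} \le k^p$ through the factorization $M^{p-1} - y^{p-1} = (M-y)\sum_{i=0}^{p-2} M^{p-2-i} y^i$ with $M = k^{p/(p-1)}$ and $y = \sum_\ell b_{\ell j}^p$: each summand on the right is a positive constant times a product of SoS polynomials, so the whole expression is SoS. Chaining (i), (ii), (iii) and summing over $j$ yields the claim. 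I expect the most delicate bookkeeping to be the verification that every intermediate certificate has degree $O(p^2)$, so that Proposition~\ref{prop:basic_sos} applies to the degree-$O(p^2)$ pseudo-expectations used in Section~\ref{subsec:relaxation}; the constant $k^{p/(p-1)}$ is a scalar and therefore enters only through the numerical computation $(k^{p/(p-1)})^{p-1} = k^p$.
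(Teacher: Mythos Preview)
Your proof is correct and follows essentially the same three-step structure as the paper: rewrite $\inner{AB,\xi}=\sum_j \inner{A^\top\xi_j,B_j}$, apply the SoS power-mean inequality to extract $N^{p-1}$, apply an SoS H\"older/Cauchy--Schwarz bound to separate the $b$-variables from $A^\top\xi_j$, and finally invoke the constraint $\sum_\ell b_{\ell j}^p\le k^{p/(p-1)}$ to replace $(\sum_\ell b_{\ell j}^p)^{p-1}$ by $k^p$. The only difference is cosmetic: the paper packages your step~(ii) as a separate lemma (Lemma~\ref{lem:sos_holder}) and leaves the SoS justifications of steps~(i) and~(iii) implicit, whereas you spell out the iterated Cauchy--Schwarz certificates and the telescoping factorization explicitly; your degree bookkeeping ($O(p^2)$) also matches the paper's requirement on the pseudo-expectation.
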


\begin{proof}
	Let $\xi_i$ be the $i$-th column of $\xi$. We have that 
	\begin{align}
		\cA  \vdash & \inner{AB,\xi}^p = \inner{A^{\top}\xi, B}^p = \left(\sum_{i=1}^N \inner{A^{\top}\xi, B_i}\right)^p \nonumber\\
		& \le N^{p-1} \left(\sum_{i=1}^{N} \inner{A^{\top}\xi_i, B_i}^p\right) \tag{since $\vdash \left(\frac{1}{N}\sum_{i\in N} \alpha_i\right)^p \le \frac{1}{N}\sum_{i\in [N]} \alpha_i^p$ }\\
		& = N^{p-1}\sum_{i=1}^N \inner{A^{\top}\xi_i, b_i^{\odot p-1}}^p \tag{by $B_{ij} = b_{ij}^{p-1}$}\\
		& \le  N^{p-1} \sum_{i=1}^N \|A^{\top}\xi_i\|_p^p \|b_i\|_p^{(p-1)p} \tag{by $\vdash \left(\sum_{j}a_ib_j^{p-1}\right)^p \le \left(\sum_i a_i^p\right)\left(\sum b_i^p\right)^{p-1}$; see Lemma~\ref{lem:sos_holder}}\\
		& \le N^{p-1} k^{p} \sum_{i=1}^N \|A^{\top}\xi_i\|_p^p \tag{by constraint $\sum_{\ell=1}^{r} b_{\ell j}^p \le k^{p/(p-1)}$}
			\end{align}
\end{proof}

\begin{lemma}\label{lem:sos_concentration}
		In the setting of Lemma~\ref{lem:sos_proof}, let $\rho = m/(Nd)$. Let $x = (x_1,\dots, x_d)$ be indeterminates and let $\mathcal{B} = \{x_1^2\le 1, \dots, x_d^2\le 1\}$.  Suppose $\rho \ge 1/d$.  With probability at least $1-\exp(-\Omega(d))$ over the choice of $\xi$,
	there exists a  SoS proof, 
	\begin{equation}
		\mathcal{B}\vdash \|\xi^{\top}x\|_p^p \le N\cdot O(\rho d p^2)^{p/2}. \label{eqn:eqn18}
	\end{equation}
	As an immediate consequence, let $\xi_i$ be the $i$-th column of $\xi$, then, 
	 with probability at least $1-\exp(-\Omega(d))$ over the choice of $\xi$ there exists SoS proof, 
	\begin{equation}
	\cA \vdash \sum_{i=1}^N \|A^{\top}\xi_i\|_p^p\le  Nr \cdot O(\rho d p^2)^{p/2}\mper\label{eqn:19}
	\end{equation}
		\end{lemma}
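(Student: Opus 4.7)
The plan is to expand $\|\xi^{\top}x\|_p^p$ as an explicit polynomial in $x$ of degree $p$ with random coefficients, use the cube constraints $x_i^2\le 1$ to reduce each monomial to $\pm 1$ inside an SoS proof, and then control the resulting $\ell_1$ norm of the coefficient tensor by an elementary concentration argument. Concretely, writing $\xi_j\in\R^d$ for the $j$-th column of $\xi$,
\[
\|\xi^{\top}x\|_p^p = \sum_{j=1}^N (\xi_j^{\top}x)^p = \sum_{\gamma\in [d]^p} T_\gamma\, x^\gamma,
\]
where $T_\gamma = \sum_{j=1}^N \prod_{\ell=1}^p \xi_{\gamma_\ell,\,j}$ and $x^\gamma = \prod_{\ell=1}^p x_{\gamma_\ell}$.

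For the SoS part, I will first argue that $\mathcal{B}\vdash_{O(p)} (x^\gamma)^2 \le 1$ for every multi-index $\gamma$ of length $p$, via the telescoping identity
\[
1-\prod_{\ell=1}^p x_{\gamma_\ell}^2 \;=\; \sum_{\ell=1}^p \Big(\prod_{\ell'<\ell} x_{\gamma_{\ell'}}^2\Big)\bigl(1-x_{\gamma_\ell}^2\bigr),
\]
which writes $1-(x^\gamma)^2$ as a sum of a square times a hypothesis in $\mathcal{B}$. Then the standard identity $1\pm x^\gamma = \tfrac12(1\pm x^\gamma)^2 + \tfrac12\bigl(1-(x^\gamma)^2\bigr)$ gives $\mathcal{B}\vdash \pm x^\gamma \le 1$. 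Summing with the nonnegative multipliers $|T_\gamma|$ yields $\mathcal{B}\vdash \sum_\gamma T_\gamma x^\gamma \le \sum_\gamma |T_\gamma|$, so it remains to prove $\sum_\gamma |T_\gamma| \le N\cdot O(\rho d p^2)^{p/2}$ with probability $1-\exp(-\Omega(d))$.

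For the concentration step, split into mean and fluctuation. We have $\E T_\gamma = N\rho^{|\supp(\gamma)|}$ when every index of $\gamma$ has even multiplicity, and $\E T_\gamma = 0$ otherwise. The even-multiplicity tuples with $k$ distinct indices (necessarily $k\le p/2$) number at most $\binom{d}{k}k^p$, so the mean contribution $\sum_{k}N\binom{d}{k}k^p\rho^k$ is dominated by $k=p/2$ and is bounded by $N\cdot O(\rho d p)^{p/2}$, safely below the target. For fluctuations, Bernstein applied to $T_\gamma = \sum_j \xi_j^\gamma$ with variance proxy $\E(\xi_j^\gamma)^2 = \rho^{|\supp(\gamma)|}\le \rho$ yields $|T_\gamma-\E T_\gamma|\le O\bigl(\sqrt{N\rho p\log d} + p\log d\bigr)$ with failure probability $d^{-\Theta(p)}$. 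A union bound over the $d^p$ multi-indices, combined with $N=d^{c_0 p}$ for $c_0$ large enough, shows the aggregate fluctuation $d^p\cdot O(\sqrt{N\rho p\log d})$ is still bounded by $N\cdot O(\rho d p^2)^{p/2}$; this is exactly where the slack factor $p^{p/2}$ in the target is used.

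Finally, the consequence~\eqref{eqn:19} follows by writing $\sum_i \|A^{\top}\xi_i\|_p^p = \sum_{k=1}^r \|\xi^{\top}A_k\|_p^p$ for $A_k$ the $k$-th column of $A$, and applying the first bound for each $A_k$, since the constraints $A_{ik}^2\le 1$ are part of $\mathcal{A}$; this introduces the extra factor of $r$. The main obstacle will be the concentration step: using the variance-proxy improvement $\rho^{|\supp(\gamma)|}$ (rather than the worst case $1$) correctly inside Bernstein, and balancing the $d^p$ union bound against the polynomially large choice $N=d^{c_0 p}$, so that both the SoS degree and the target bound land at the claimed $O(p^2)$ and $N\cdot O(\rho d p^2)^{p/2}$ respectively.
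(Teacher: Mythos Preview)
Your argument is correct and takes a genuinely different route from the paper. The paper writes $\|\xi^{\top}x\|_p^p = (x^{\otimes p/2})^{\top} T\, x^{\otimes p/2}$ with $T=\sum_i \xi_i^{\otimes p/2}(\xi_i^{\otimes p/2})^{\top}$, bounds the fluctuation via the \emph{spectral} SoS certificate $y^{\top}My\le \|y\|^2\|M\|_{\op}$ together with matrix Bernstein for $\|T-\E T\|_{\op}$, and handles the mean exactly as you do (cube constraints reduce even monomials to $1$). You instead use the \emph{entrywise} certificate $\mathcal{B}\vdash \pm x^{\gamma}\le 1$ and then control $\sum_{\gamma}|T_{\gamma}|$ by scalar Bernstein plus a union bound over all $d^p$ multi-indices. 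Both routes are degree $O(p)$ and both ultimately need $N=d^{\Omega(p)}$ to absorb the fluctuation; your approach is more elementary (no matrix concentration), at the cost of a cruder union bound and hence a slightly larger constant $c_0$ in $N=d^{c_0 p}$. The derivation of \eqref{eqn:19} from \eqref{eqn:eqn18} via $\sum_i\|A^{\top}\xi_i\|_p^p=\sum_{k=1}^r\|\xi^{\top}A_k\|_p^p$ is identical to the paper's.

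One small point to tighten: with the deviation $t=O(\sqrt{N\rho\, p\log d}+p\log d)$ your per-coordinate failure probability is $d^{-\Theta(p)}$, so after the union bound you get $d^{-\Omega(p)}$, not the $e^{-\Omega(d)}$ stated in the lemma. This is fixed for free by taking $t=O(\sqrt{N\rho\, d})$ instead: the per-$\gamma$ failure is then $e^{-\Omega(d)}$, the union bound over $d^p$ indices still leaves $e^{-\Omega(d)}$ (since $p\log d=o(d)$ in the relevant regime), and the aggregate fluctuation $d^{p}\cdot O(\sqrt{N\rho d})$ is still dominated by $N\cdot O(\rho d p^2)^{p/2}$ because $\rho\ge 1/d$ and $N=d^{c_0 p}$ with $c_0\ge 2$.
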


\begin{proof}
	Recall that $p$ is an even number. We have that 
	\begin{align}
	 \mathcal{B}\vdash \|\xi^{\top}x\|_p^p & =  \left(x^{\otimes p/2}\right)^{\top}\left(\sum_{i=1}^N \xi_i^{\otimes p/2} \left(\xi_i^{\otimes p/2}\right)^{\top}\right)x^{\otimes p/2} \nonumber
	\end{align}
	Let $T = \sum_{i=1}^N \xi_i^{\otimes p/2} \left(\xi_i^{\otimes p/2}\right)^{\top} \in \R^{d^{p/2}\times d^{p/2}}$. Then, by definition we have that $\Exp\left[\left(x\tp{p/2}\right)^{\top}  Tx\tp{p/2}\right] = \Exp\left[\sum_{i=1}^{N}\inner{\xi_i, x}^p\right]\nonumber$. It follows that 
		\begin{align}
		\mathcal{B} \vdash \|\xi^{\top}x\|_p^p &= \left(x\tp{p/2}\right)^{\top}  (T-\Exp[T])x\tp{p/2} + \Exp\left[\sum_{i=1}^{N}\inner{\xi_i, x}^p\right]\nonumber\\
		&\le \|x\|^p \Norm{T-\Exp[T]} + \Exp\left[\sum_{i=1}^{N}\inner{\xi_i, x}^p\right]\tag{by $\vdash y^{\top}By \le \|y\|^2 \|B\|$}\\
		&\le d^{p/2} \Norm{T-\Exp[T]} + \Exp\left[\sum_{i=1}^{N}\inner{\xi_i, x}^p\right]\nonumber\mper
		\end{align}
	Let $\zeta$ have the same distribution as $\xi_i$.  Then we have that 	\begin{align}
	 \Exp\left[\inner{\zeta, x}^p\right] = \sum_{\alpha} t_{\alpha}x^{\alpha}\mper\nonumber 	\end{align}
	where $t_{\alpha} = 0$ if $x^\alpha$ contains an odd degree. Moreover, let $|\alpha|$ be the number $x_i$ with non-zero exponents that $x^{\alpha}$ has. Then we have $t_{\alpha} \le  \rho^{|\alpha|} p^{p}$. Therefore we have that for $\rho \ge 1/d$, 
	\begin{align}
		\mathcal{B}\vdash \Exp\left[\inner{\zeta, x}^p\right] \le d^{p/2}\rho^{p/2}p^p \mper\nonumber
	\end{align}
		It follows that 
	\begin{align}
	\mathcal{B}	\vdash \Exp\left[\sum_{i=1}^N\inner{\xi_i, x}^p\right]  \le N\cdot O(\rho d p^2)^{p/2}\nonumber	\mper\end{align}
	Next, we use Bernstein inequality to bound $\Norm{T-\Exp[T]}$. We control the variance by 
	\begin{align}
	\sigma^2 \triangleq \Norm{\Exp\left[\sum_{i=1}^N \|\xi\|^p \xi_i^{\otimes p/2} \left(\xi_i^{\otimes p/2}\right)^{\top} \right]} \le Nd^{2p}. \nonumber 
	\end{align}
	Moreover, each summand in the definition of $T$ can be bounded by $\Norm{ \xi_i^{\otimes p/2} \left(\xi_i^{\otimes p/2}\right)^{\top} } \le d^p$. Note that these bounds are not necessarily tight since they already suffice for our purpose. Therefore, we obtain that with high probability over the chance of $\xi$, it holds that $\Norm{T-\Exp[T]} \le \sqrt{Nd^{O(p)} \log N}$.  Therefore for $N \ge d^{\Omega(p)}$, we have that with high probability over the choice of $\xi$, 
		\begin{align}
		\mathcal{B} \vdash \|\xi^{\top}x\|_p^p \\		&\le d^{p/2} \Norm{T-\Exp[T]} + \Exp\left[\sum_{i=1}^{N}\inner{\xi_i, x}^p\right]\le N\cdot O(\rho d p^2)^{p/2}\mper \nonumber
		\end{align}
	Finally we show equation~\eqref{eqn:eqn18} implies equation~\eqref{eqn:19}. 	Let $A_i$ be the $i$-th column of $A$. Note that we have $\sum_{i=1}^N \|A^{\top}\xi_i\|_p^p  = \sum_{i=1}^{r} \|\xi^{\top}A_i\|_p^p$, since both left-hand side and right-hand side are equal to the $p$-th power of all the entries of the matrix $A^{\top}\xi$. Since $\cA \vdash \{\forall j, A_{ij}^2\le 1\}$, we can invoke the first part of the Lemma and use equation~\eqref{eqn:eqn18} to complete the proof. 
	\end{proof}
Combining Lemma~\ref{lem:sos_proof} and Lemma~\ref{lem:sos_concentration} we obtain the following corollary,
\begin{corollary}\label{cor:sos_proof}
	In the setting of Lemma~\ref{lem:sos_proof}, with probability $1-\exp(-\Omega(d))$, we have
	\begin{equation}
	\cA  \vdash \inner{AB,\xi}^p\le  N^{p} k^{p} r \cdot O(\rho d)^{p/2}\mper\label{eqn:1}
	\end{equation}
\end{corollary}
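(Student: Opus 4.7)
The plan is a direct chaining of Lemma~\ref{lem:sos_proof} and the second part of Lemma~\ref{lem:sos_concentration} inside the sum-of-squares proof system. Lemma~\ref{lem:sos_proof} provides, unconditionally, an SoS derivation from the axioms $\cA$ of the form
\begin{equation}
\cA \vdash \inner{AB,\xi}^p \le N^{p-1} k^p \sum_{i=1}^N \|A^{\top}\xi_i\|_p^p,\nonumber
\end{equation}
while equation~\eqref{eqn:19} of Lemma~\ref{lem:sos_concentration} provides, on an event of $\xi$-probability at least $1-\exp(-\Omega(d))$, the companion derivation
\begin{equation}
\cA \vdash \sum_{i=1}^N \|A^{\top}\xi_i\|_p^p \le Nr\cdot O(\rho d p^2)^{p/2}.\nonumber
\end{equation}

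Conditioning on this high-probability event, I would simply chain these two inequalities inside the SoS system. Concretely, I would first multiply the second inequality by the positive scalar $N^{p-1}k^p$, which preserves SoS provability since scaling by a nonnegative constant rescales each SoS witness polynomial. Then I would invoke transitivity: if $\cA \vdash q_1 \le q_2$ and $\cA \vdash q_2 \le q_3$, then adding the two SoS certificates yields $\cA \vdash q_1 \le q_3$, because the sum of two expressions that admit SoS certificates from the same axiom set $\cA$ still admits one. Applied here, this gives, on the same high-probability event,
\begin{equation}
\cA \vdash \inner{AB,\xi}^p \le N^p k^p r \cdot O(\rho d p^2)^{p/2},\nonumber
\end{equation}
which matches the stated bound once the $p$-dependent factor $(p^2)^{p/2}$ is absorbed into the asymptotic constant implicit in $O(\rho d)^{p/2}$.

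I do not anticipate any genuine obstacle in this step: the substantive work has already been carried through in the two lemmas (the Hölder-type SoS inequality for $\inner{A^{\top}\xi_i, b_i^{\odot p-1}}^p$, and the matrix Bernstein bound controlling $\|T - \Exp[T]\|$ for $T = \sum_i \xi_i^{\otimes p/2}(\xi_i^{\otimes p/2})^{\top}$). The only point worth stating carefully is that both derivations use the \emph{same} axiom set $\cA$, so transitivity applies at the level of SoS proofs without having to lift anything to a joint pseudo-expectation, and no new constraints or variables need to be introduced.
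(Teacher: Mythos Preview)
Your proposal is correct and matches the paper's own treatment: the paper simply states that the corollary follows by ``combining Lemma~\ref{lem:sos_proof} and Lemma~\ref{lem:sos_concentration},'' and your chaining-by-transitivity argument inside the SoS system is exactly that combination made explicit. One small caveat: the factor $(p^2)^{p/2}$ cannot truly be absorbed into $O(\rho d)^{p/2}$ as a constant when $p$ is a free parameter; indeed, in the downstream proof of Theorem~\ref{thm:src_relaxation} the paper itself writes the bound as $O(\rho d p^2)^{p/2}$, so the missing $p^2$ in the corollary's statement appears to be a typo rather than an actual absorption.
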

Now we are ready to prove Theorem~\ref{thm:src_relaxation}. Essentially the only thing that we need to do is take pseudo-expectation on the both sides of equation~\eqref{eqn:1}. 
\begin{proof}[Proof of Theorem~\ref{thm:src_relaxation}]
		Recall that $\xi = \sigma \odot \Omega$. We have that 
		\begin{align}
		\SRC_m(Q^{\sos}_p)^p &= \Exp_{\xi}\left[\frac{1}{m^p}\sup_{C \in Q^{\sos}_d}\inner{C,\xi}\right]^p \nonumber\\
		& \le \Exp_{\xi}\left[\frac{1}{m^p}\sup_{C \in Q^{\sos}_d}\inner{C,\xi}^p\right] \tag{by Jensen's inequality}\\
		& = \Exp_{\xi}\left[\frac{1}{m^p}\sup_{\pE \textup{ that satisfies }\cA(C)}\pE\left[\inner{AB,\xi}\right]\right] \tag{by definition of $Q^{\sos}_p$}\\
		& \le \Exp\left[\frac{1}{m^p}\sup_{\pE \textup{ that satisfies }\cA(C)}\pE\left[\inner{AB,\xi}\right] \mid \textup{equation~\eqref{eqn:19} holds}\right] \nonumber\\
		& + \Pr\left[\textup{equation~\eqref{eqn:19} doesn't hold}\right]\cdot d^{O(1)} \nonumber\\
		& \le \frac{1}{m^p}N^{p}k^{p} r \cdot O(\rho d p^2)^{p/2} + \exp(-\Omega(d)) d^{O(1)}\tag{by Corollary~\ref{cor:sos_proof} and Proposition~\ref{prop:basic_sos}} 		\\
		& \le \frac{1}{m^p}N^{p}k^{p} r \cdot O(\rho d p^2)^{p/2}\mper\nonumber
		\end{align}
		Therefore using the fact that $\rho = m/(Nd)$, taking $p$-th root on the equation above, we obtain, 
		\begin{align}
	\SRC_m(Q^{\sos}_p)\le O\left(\sqrt{\frac{p^2Nr^{2/p}k^2 \log d}{m}}\right) \mper \nonumber
		\end{align}
	\end{proof}

\subsection{Proof of Theorem~\ref{thm:single}}\label{subsec:single}
\begin{algorithm}\caption{Encoding/decoding for improper dictionary learning} \label{alg:improper_dict_single}
	{\bf Given: } Integer $N$. Convex set $Q\subset \R^{d\times N}$. Sampling probability $\rho$. 
	\begin{enumerate}
		\item {\bf Encoding: }  {\bf input: } data point $x \in \R^{d}\sim \cD$, {\bf output: } codeword $y = h(x) \in \R^{k\times N}$ 		
		Draw $N-1$ data points $X_1\in \R^{d\times N-1}\sim \cD^{N-1}$. 
		
		\textit{Denoising step: }
				\begin{equation}
		[Z_1,z] = 	\argmin_{[C,c]\in Q} \gNorm{[X_1,x]-[C,c]}\mcom \label{eqn:eqn19} 		\end{equation}
		\textit{Random sampling step:}
		\begin{equation}
		h(x) = \Po(z)\nonumber \mcom
		\end{equation}
		where $\Po(z)$ is a random sampling of $z$ where each entry is picked with probability $\rho$. 
		\item {\bf Decoding: }		{\bf input: } codeword $y\in \R^{k\times N}$ 
		{\bf output: } reconstruction $g(y)\in \R^{d\times N}$ 
		
		Take $N$ more samples $x_1,\dots, x_N$. Encode them to $y_1 = h(x_1),\dots,y_N =h(x_N)$. Then, 
		compute 
		\begin{equation}
		[\tilde{z}_1,\dots, \tilde{z}_{N-1}, \tilde{z}] = \argmin_{C\in Q} \gNorm{\Po([C,c]) -[y_1,\dots, y_N, y]} 
				\mper \nonumber		\end{equation}
		Return $g(y) = \tilde{z}$. 
											\end{enumerate}
\end{algorithm}

In this subsection we prove Theorem~\ref{thm:single} using the machinery developed in previous subsections. Essentially the idea here is to reduce the problem of encoding one example to the problem of encoding a group of examples. To encode an example $x$, we draw $N-1$ fresh examples $x_1,\dots, x_{N-1}$, and then call the group encoding Algorithm~\ref{alg:improper_dict}. 

We also note that the encoding procedure can be simplified by removing the denoising step, and this still allows efficient decoding steps. Since in this case the encoding contains more noise from the data, we prefer the current scheme. 

\begin{proof}[Proof of Theorem~\ref{thm:single}]
	We will prove that Algorithm~\ref{alg:improper_dict_single} with $Q = Q_{\textup{sos}}^p$ and $\rho = O(k^2r^{2/p}d^{-1}/\delta^2 \cdot \log d)$ gives the desired encoding/decoding. Indeed with high probability, the encoding length is $\tilde{O}(\rho d) = \tilde{O}(k^2r^{2/p}/\delta^2)$. 
	Next we will show that  we obtain good reconstruction in the sense that $\frac 1 d \Exp\left[\gNorm{x-g(h(x))}\right] = \frac 1 d \Exp\left[\gNorm{x-\tilde{z}}\right] \le O(\epsilon^{\star} + \delta)$.  Here and throughout the proof, the expectation is over the randomness of $x$ and the randomness of the algorithm (that is, the randomness of $x, x_1,\dots, x_{N-1}$ and the random sampling). First of all, by symmetry, we have that 
	\begin{align}
	\Exp\left[\gNorm{x-\tilde{z}_1}\right] = \frac{1}{Nd} \Exp\left[\gNorm{[x_1,\dots, x_{N-1},x]- [\tilde{z}_1,\dots, \tilde{z}_{N-1},\tilde{z}]}\right]\mper\label{eqn:22}
	\end{align} 
	Let $X = [x_1,\dots, x_{N-1},x]$, and $\tilde{Z} = [\tilde{z}_1,\dots, \tilde{z}_{N-1},\tilde{z}]$ for simplicity of notations. Let $[Z_{1,i},z_i] = \argmin_{[C,c]\in Q} \gNorm{[X_{1,i}, x_i]- [C,c]}$ where $X_{1,i}$ be the samples drawn in the encoding process for $x_i$. Let $Z = [z_1,\dots, z_{N-1},z]$. By Lemma~\ref{lem:denoising} and the symmetry, we have that for any $i$, $\frac 1 d \Exp\left[\gNorm{z_i-x_i}\right] \le \frac{1}{dN}\Exp\left[\gNorm{[Z_{1,i},z_i]-[X_{1,i},x_i]}\right]\le \epsilon^{\star}$. Thus, we have $\frac{1}{dN}\Exp\left[\gNorm{X-Z}\right]\le \epsilon^{\star}$. 
	
	 Let $\tilde{Z}\in \R^{d\times N}$ be a helper matrix in the analysis defined as 
	\begin{align}
	\hat{Z} = \argmin_{C\in Q} \gNorm{X-C}\mper
	\end{align}
	Then by Lemma~\ref{lem:denoising} we have that $\frac{1}{Nd}\Exp\left[\gNorm{\hat{Z}-X}\right]\le \epsilon^{\star}$. Then by triangle inequality we obtain that \begin{align}
	\frac{1}{Nd}\Exp\left[\gNorm{\hat{Z}-Z}\right]\le 2\epsilon^{\star}\label{eqn:21}\mper
	\end{align}
	
	Note that by definition $\hat{Z}\in Q$. Therefore,
	 by Lemma~\ref{lem:src}, we have that 
	 \begin{align}
	 \frac{1}{Nd} \Exp\left[\gNorm{\tilde{Z}-Z}\right] & \le \frac{1}{m} \Exp\left[\gNorm{\Po(\tilde{Z})-\Po(Z)} \right] +  2\SRC_m(W) \tag{by Lemma~\ref{lem:src}}\\
	 &\le \frac{1}{m} \Exp\left[\gNorm{\Po(\hat{Z})-\Po(Z)} \right] +  2\SRC_m(W) \tag{since $\tilde{Z}$ is the minimizer, and $\hat{Z}\in Q$. }\\
	 	 &=\frac{1}{Nd} \Exp\left[\gNorm{\hat{Z}-Z} \right] +  2\SRC_m(W) \nonumber \\
	 	 & \le \epsilon^{\star} + \delta \mper\tag{using equation~\eqref{eqn:21} and Theorem~\ref{thm:sos} (c) }	 \end{align}
	 Finally by triangle inequality again we have 
	 \begin{align}
	 	 \frac{1}{Nd} \Exp\left[\gNorm{\tilde{Z}-X}\right] \le 	 \frac{1}{Nd} \Exp\left[\gNorm{\tilde{Z}-Z}\right] + 	 \frac{1}{Nd} \Exp\left[\gNorm{X-Z}\right] \le \epsilon^{\star}\mper\label{eqn:23}
	 \end{align}
	 Combining equation~\eqref{eqn:22} and~\eqref{eqn:23} we complete the proof. 
\end{proof}

\section{Conclusions} 
We have defined a new  framework for unsupervised learning which replaces generative assumptions by notions of  reconstruction error and encoding length. This framework is comparative, and allows learning of particular hypothesis classes with respect to an unknown distribution by other hypothesis classes. 

We demonstrate its usefulness by giving new polynomial time algorithms for two unsupervised hypothesis classes. First, we give new polynomial time algorithms for dictionary models in significantly broader range of parameters and assumptions.  Another domain is the class of spectral encodings, for which we consider a new class of models that is shown to strictly encompass PCA and kernel-PCA. This new class is capable, in contrast to previous spectral models, learn algebraic manifolds. We give efficient learning algorithms for this class based on convex relaxations.

\subsection*{Acknowledgements}

We thank Sanjeev Arora for many illuminating discussions and crucial observations  in earlier phases of this work, amongst them that a representation which preserves information for all classifiers requires lossless compression.

\bibliographystyle{alpha}
\bibliography{ref,ref2}

\appendix
	\newpage
\section{Proof of Theorem~\ref{thm:rademacher-generalization}}\label{subsec:generalization_proof}

\begin{proof}[Proof of Theorem~\ref{thm:rademacher-generalization}]
	\cite[Theorem 3.1]{mohri2012foundations} asserts that with probability at least $1-\delta$, we have that for every hypothesis $f \in \H$, 
	$$ \loss_{\D} (f)  \leq \loss_S( f)  + 2 \mathcal{R}_m(\H) + \sqrt{ \frac{\log \frac{1}{\delta}}{2 m }} $$
	
	by negating the loss function this gives
	$$ | \loss_{\D} (f)  - \loss_S( f )  | \leq  2\mathcal{R}_m(\H) + \sqrt{ \frac{\log \frac{2}{\delta}}{2 m }} $$
	and therefore, letting $f^* = \argmin_{f\in \H}\loss_{\D}(f)$, we have
	\begin{align}
	\loss_{\D} ( \hat{f}_{ERM})  & \leq \loss_S(  \hat{f}_{ERM}  )  + 2 \mathcal{R}_m(\H) + \sqrt{ \frac{\log \frac{1}{\delta}}{2 m }} & \tag{by ~\cite[Theorem 3.1]{mohri2012foundations} }\\
	& \leq \loss_S(  f^*  )  + 2 \mathcal{R}_m(\H) + \sqrt{ \frac{\log \frac{1}{\delta}}{2 m }} & \tag{ by definition of ERM } \\
	& \leq \loss_{\D} (  f^* )  + 6 \mathcal{R}_m(\H) + \sqrt{ \frac{ 4 \log \frac{1}{\delta}}{2 m }} & \tag{ using ~\cite[Theorem 3.1]{mohri2012foundations} again} 
	\end{align}
\end{proof}

\subsection{Low reconstruction error is sufficient for supervised learning} \label{appendix-reconstruction}
This section observes that low reconstruction error is a sufficient condition for unsupervised learning to allow supervised learning over any future task. 

\begin{lemma}
Consider any supervised learning problem with respect to distribution $\D$ over $\X \times \L$ that is agnostically PAC-learnable  with respect to $L$-Lipschitz loss function $\ell$ and with bias $\gamma_1$.  

Suppose that unsupervised hypothesis class $\H$  is \shmak-learnable with bias $\gamma_2$ over distribution $\D$ and domain $\X$, by hypothesis $f: \X \mapsto \Y$.  Then the distribution $\tilde{\D}_f$ over $ \Y \times \L$, which gives the pair $(f(x), y)$ the same measure as $\D$ gives to $(x,y)$,  is agnostically PAC-learnable with bias $\gamma_1 + L \gamma_2 $. 
\end{lemma}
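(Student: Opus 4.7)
The plan is to reduce supervised learning on $\tilde{\cD}_f$ to supervised learning on $\cD$ via the decoder $g$, which acts as a bridge from $\cY$ back to $\cX$. Given an agnostic PAC-learner $\cA$ for the original problem over $\cD$ with respect to some hypothesis class $\cC$ of classifiers $\cX \to \cL$, I would define a learner $\tilde{\cA}$ for $\tilde{\cD}_f$ as follows: on samples $(y_i, z_i) \in \cY \times \cL$ drawn from $\tilde{\cD}_f$, decode to get $(g(y_i), z_i) \in \cX \times \cL$, run $\cA$ to obtain a classifier $c:\cX\to\cL$, and output $\tilde{c} = c \circ g$. The natural induced hypothesis class on $\cY$ is $\tilde{\cC} = \{c \circ g : c \in \cC\}$.

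The key calculation pushes the expectation back to $\cD$ using the defining property of $\tilde{\cD}_f$:
\begin{align*}
\Exp_{(y,z) \sim \tilde{\cD}_f}[\ell(c \circ g(y), z)] = \Exp_{(x,z) \sim \cD}[\ell(c(g(h(x))), z)]\mper
\end{align*}
By the $L$-Lipschitz property of $\ell$ (viewed as a function of the underlying data point), the right-hand side is at most $\Exp_\cD[\ell(c(x), z)] + L \cdot \Exp_x[\|g(h(x)) - x\|]$. The expected reconstruction error on the second term is bounded by $\gamma_2$ (possibly after one Jensen step, should the unsupervised loss be a squared norm) via the $\shmak$-learnability guarantee for $\cH$. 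By the agnostic PAC guarantee of $\cA$, the first term is at most $\gamma_1$ above $\min_{c \in \cC} \Exp_\cD[\ell(c(x), z)]$. Finally, a symmetric Lipschitz argument in the opposite direction shows that this original-problem optimum is within $L\gamma_2$ of $\min_{\tilde{c} \in \tilde{\cC}} \Exp_{\tilde{\cD}_f}[\ell(\tilde{c}(y), z)]$. Collecting the three inequalities yields the claimed bias of $\gamma_1 + L\gamma_2$ on $\tilde{\cD}_f$.

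The main obstacle is interpretational rather than mathematical: one must fix precisely what ``$L$-Lipschitz loss'' means in this framework (Lipschitz in the prediction argument, which would also require the classifiers themselves to be non-expansive, versus Lipschitz in the data point $x$ directly), and carefully connect the unsupervised reconstruction error (often a squared distance) to the linear quantity $\Exp[\|g(h(x)) - x\|]$. Once these conventions are nailed down, the proof is a short triangle-inequality chain through the near-identity $g \circ h$, invoking each of the two learnability assumptions exactly once.
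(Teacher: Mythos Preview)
Your core computation is exactly the paper's: rewrite the $\tilde{\cD}_f$-expectation as a $\cD$-expectation, use Lipschitzness of $\ell\circ c$ to bound $\ell(c(g(h(x))),z)-\ell(c(x),z)$ by $L\,\|g(h(x))-x\|$, and apply the reconstruction bound $\gamma_2$. The paper stops right there: it takes the good classifier for $\cD$, writes its $\cD$-loss as $\gamma_1$ (so ``bias $\gamma_1$'' in the statement is being used loosely to mean achievable loss, not excess over a class optimum), and concludes $\loss_{\tilde{\cD}_f}(\tilde h)\le \gamma_1+L\gamma_2$.

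Two remarks on where you diverge. First, your third step---the ``symmetric Lipschitz argument'' relating $\min_{c\in\cC}$ on $\cD$ to $\min_{\tilde c\in\tilde\cC}$ on $\tilde{\cD}_f$---is not in the paper, and if you actually sum your three inequalities you obtain $\gamma_1+2L\gamma_2$, not $\gamma_1+L\gamma_2$; the paper avoids this simply by never comparing to $\min_{\tilde c\in\tilde\cC}$. Second, your algorithmic reduction (decode, then feed $(g(y_i),z_i)$ to $\cA$) has a subtle hole: those decoded pairs are drawn from the law of $(g(h(x)),z)$, not from $\cD$, so a distribution-specific PAC guarantee for $\cA$ on $\cD$ does not directly apply to them. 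The paper sidesteps this entirely by not constructing a learner; it just exhibits the single hypothesis $\tilde h = h\circ g$ and bounds its loss. Your observation about the ambiguity of ``$L$-Lipschitz'' is well taken---the paper silently assumes Lipschitzness of the composed map $x\mapsto \ell(h(x),l)$.
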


\begin{proof}
Let $h : \X \mapsto \Y$ be a hypothesis that PAC-learns distribution $\D$. Consider the hypothesis 
$$ \tilde{h} : \Y \mapsto \L \ ,   \   \tilde{h}(y) = (h \circ g) (y) $$
Then by definition of reconstruction error and the Lipschitz property of $\ell$ we have
\begin{align}
\loss_{\tilde{\D}_f} (\tilde{h}) & =  \E_{(y,l) \sim \tilde{\D}_f} [ \ell( \tilde{h}(y) , l ) ] \nonumber\\
& =   \E_{(y,l) \sim \tilde{\D}_f} [ \ell( (h \circ g  ) (y) , l ) ] \nonumber\\
& =   \E_{(x,l) \sim {\D}} [ \ell( h (\tilde{x}) , l ) ] \tag{$\D(x) = \tilde{\D}_f(y)$ } \nonumber\\
& =   \E_{(x,l) \sim {\D}} [ \ell( h ({x}) , l ) ] +  \E_{(x,l) \sim {\D}} [ \ell( h (\tilde{x}) , l ) - \ell( h ({x}) , l ) ] \nonumber\\
& =   \gamma_1 + \E_{(x,l) \sim {\D}} [ \ell( h (\tilde{x}) , l ) - \ell( h ({x}) , l ) ]  \tag{ PAC learnability }  \\
& \leq \gamma_1 + L \E_{x \sim \D}  \| x - \tilde{x} \|  \tag{ Lipschitzness of $\ell \circ h$}  \\
& = \gamma_1 + L \E_{x \sim \D} \| x - g\circ f(x) \| \nonumber\\
& \leq \gamma_1 +  L \gamma_2 \tag{ \shmak-learnability} 
\end{align}
\end{proof}

\section{Proof of Theorem~\ref{thm:pca}}\label{sec:proof_pca}
\begin{proof}[Proof of Theorem~\ref{thm:pca}]
	We assume without loss of generality $\ss = 1$. For $\ss>1$ the proof will be identical since one can assume $x^{\otimes \ss}$ is the data points (and the dimension is raised to $d^{\ss}$).

	Let $x_1,\dots, x_m$ be a set of examples $\sim \cD^m$. It can be shown that any minimizer of ERM 
	\begin{equation}
	A^*= \argmin_{A\in \R^{d\times k}}	\|x_i- A^{\dagger} Ax_i\|^2 \label{eqn:erm}
	\end{equation}
	satisfies that $(A^*)^{\dagger}A^*$ is the the projection operator to the subspace of top $k$ eigenvector of $\sum_{i=1}^{m} x_ix_i^{\top}$. Therefore ERM~\eqref{eqn:erm} is efficiently solvable. 
	
	According to Theorem \ref{thm:rademacher-generalization}, the ERM hypothesis generalizes with rates governed by the Rademacher complexity of the hypothesis class. 		Thus, it remains to compute the Rademacher complexity of the hypothesis class for PCA. We assume for simplicity that all vectors in the domain have Euclidean norm bounded by one. 
	\begin{align} 
		\mathcal{R}_S(\H_{k}^{\pca}) & =   \E _{\sigma \sim \{ \pm 1\}^m } \left[ \sup_{(h,g) \in \H_{k}^{\pca}} \frac{1}{m} \sum_{i \in S} \sigma_i \ell((h,g) ,x_i ) \right]   \nonumber\\
		& =   \E _{\sigma \sim \{ \pm 1\}^m } \left[ \sup_{A\in \R^{d\times k}} \frac{1}{m} \sum_{i \in S} \sigma_i \|x_i -   A^\dagger A x_i\|^2  \right]   \nonumber\\
		& =   \E _{\sigma \sim \{ \pm 1\}^m } \left[ \sup_{A\in \R^{d\times k}} \frac{1}{m} \sum_{i \in S} \sigma_i  \trace((I- A^\dagger A)\left(\sum_{i=1}^{m}x_ix_i^{\top}\right)(I- A^\dagger A)^{\top}) \right]    \nonumber\\
		& =  \E _{\sigma \sim \{ \pm 1\}^m } \left[ \sup_{A\in R^{d \times k}}    \trace\left((I- A^\dagger A)\left(\frac{1}{m} \sum_{i=1}^{m}\sigma_ix_ix_i^{\top}\right)\right) \right]   \mper \nonumber
	\end{align}
	
	Then we apply Holder inequality, and effectively disentangle the part about $\sigma$ and $A$: 
	\begin{align} 
	&	\E _{\sigma \sim \{ \pm1 \}^m } \left[ \sup_{A\in \R^{d \times k}}    \trace\left((I- A^\dagger A)\left(\frac{1}{m} \sum_{i=1}^{m}\sigma_ix_ix_i^{\top}\right)\right) \right]  \nonumber\\
		& \le  \E _{\sigma \sim \{ \pm 1\}^m } \left[ \sup_{A\in \R^{d \times k}}   \|I- A^\dagger A\|_{F}\Norm{\frac{1}{m} \sum_{i=1}^{m}\sigma_ix_ix_i^{\top}}_F \right] \tag{Holder inequality} \\
		&  \le \sqrt{d}\E _{\sigma \sim \{ \pm 1\}^m } \left[ \Norm{\frac{1}{m} \sum_{i=1}^{m}\sigma_ix_ix_i^{\top}}_F \right]\tag{ since $A^{\dagger} A$ is a projection, $\|I-A^{\dagger}A\|\le 1$. } \\
		&  \le \sqrt{d}\E _{\sigma \sim \{ \pm 1\}^m } \left[ \Norm{\frac{1}{m} \sum_{i=1}^{m}\sigma_ix_ix_i^{\top}}_F^2 \right]^{1/2}\tag{Cauchy-Schwarz inequality } \\
		&\le \sqrt{d} \sqrt{\frac{1}{m^2}\sum_{i=1}^m \inner{\sigma_ix_ix_i^{\top}, \sigma_ix_ix_i^{\top}}}\tag{since $\E[\sigma_i\sigma_j] = 0$ for $i\neq j$}\\
		& \le \sqrt{d/m} \tag{by $\|x\|\le 1$}\mper	\end{align}
	Thus, from Theorem \ref{thm:rademacher-generalization} we can conclude that the class $\H_k^{\pca}$ is learnable with sample complexity $\tilde{O} ( 
	\frac{d}{\eps^2} )$\footnote{For $\ell > 1$ the sample complexity is $\tilde{O}(d^{\ell}/\eps^2)$.}.
\end{proof}

\subsection{Proof of Theorem~\ref{thm:main-sdp}} \label{sec:main_sdp_proof}
Theorem~\ref{thm:main-sdp} follows from the following lemmas and the generalization theorem~\ref{thm:rademacher-generalization} straightforwardly. 
\begin{lemma}\label{thn:spectral_decoding}
	Suppose distribution $\cD$ is $(k,\epsilon)$-regularly spectral decodable. Then for any $\delta > 0$, solving convex optimization~\eqref{eqn:objective_relaxation} with non-smooth Frank-Wolfe algorithm~\cite[Theorem 4.4]{DBLP:conf/icml/HazanK12} with $k' = O(\tau^4 k^4/\delta^4)$ steps 	gives a solution $\hat{R}$ of rank $k'$ such that $f(\hat{R})\le \delta+\epsilon$. 
	
\end{lemma}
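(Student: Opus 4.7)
The plan is to exploit the regular decodability hypothesis to produce a concrete feasible low-rank minimizer, then invoke the convergence and rank-sparsity guarantees of the Hazan--Kale non-smooth Frank--Wolfe algorithm on the trace-norm ball.

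First, I would use $(k,\epsilon)$-regular decodability to construct a feasible point. By assumption, there exist $A\in\R^{k\times d^2}$ and $B\in\R^{d^2\times k}$ with $\|BA\|_{\op}\le\tau$ such that $\mat(BAx\tp{2})=xx^{\top}+E$ with $\|E\|_{\op}\le\epsilon$ almost surely. Let $R^{\star}=BA$. Since $\rank(R^{\star})\le k$ and $\|R^{\star}\|_{\op}\le\tau$, we obtain $\|R^{\star}\|_{S_1}\le k\|R^{\star}\|_{\op}\le\tau k$, so $R^{\star}$ is feasible. Viewing $x\tp{2}$ as the rank-one matrix $xx^{\top}$, the objective value is $f(R^{\star})=\Exp\|R^{\star}x\tp{2}-x\tp{2}\|_{\op}=\Exp\|E\|_{\op}\le\epsilon$.

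Next I would record the geometric properties needed for Frank--Wolfe. The function $f$ is convex as an expectation of the composition of a norm with an affine map; it is not smooth, but it is $1$-Lipschitz with respect to the operator norm on $R$, because $\|x\tp{2}\|_{\op}=\|xx^{\top}\|_{\op}=\|x\|^{2}=1$ implies $|f(R)-f(R')|\le\Exp\|(R-R')x\tp{2}\|_{\op}\le\|R-R'\|_{\op}$. The feasible set is the Schatten-$1$ ball of radius $\tau k$, whose diameter (in trace norm) is $O(\tau k)$, and whose linear minimization oracle reduces to a top singular vector computation: for any gradient-like direction $G$, $\min_{\|R\|_{S_1}\le\tau k}\inner{G,R}$ is attained by $-\tau k\cdot uv^{\top}$ where $u,v$ are top left/right singular vectors of $G$. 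In particular every update is rank one, so after $T$ iterations the iterate has rank at most $T$.

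Finally, I would plug these quantities into the non-smooth Frank--Wolfe guarantee of \cite[Theorem 4.4]{DBLP:conf/icml/HazanK12}. Their rate for a Lipschitz convex function over a convex body with linear oracle is of the form $f(\hat R)-\min_{R\in\mathcal{K}}f(R)\le C\cdot(DL)/T^{1/4}$ (up to logarithmic factors), where $D$ is the diameter and $L$ is the Lipschitz constant. With $D=O(\tau k)$ and $L=O(1)$, choosing $T=k'=O(\tau^{4}k^{4}/\delta^{4})$ makes the optimization gap at most $\delta$. Combined with $\min_{R\in\mathcal{K}}f(R)\le f(R^{\star})\le\epsilon$ from the first step and with the rank bound $\rank(\hat R)\le T=k'$ from the rank-one update structure, this gives $f(\hat R)\le\epsilon+\delta$ and $\rank(\hat R)\le k'$, as claimed.

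The main obstacle I anticipate is matching the exponent $T^{-1/4}$ to the claimed $k'=O(\tau^{4}k^{4}/\delta^{4})$: one must verify that the Hazan--Kale analysis applies cleanly with the operator norm as the Lipschitz metric and the trace norm as the diameter metric, in particular that the pair $(\|\cdot\|_{\op},\|\cdot\|_{S_1})$ is the dual pair used by their smoothing argument. A minor but essential check is that the $x\tp{2}$ tensor has unit norm in the relevant metric, which is handled by the paper's normalization assumption $\|x\|=1$. Once these are in place the stated bound follows by a direct substitution into the Hazan--Kale rate.
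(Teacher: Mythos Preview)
Your proposal is correct and follows essentially the same approach as the paper: exhibit $R^{\star}=BA$ as a feasible point with $f(R^{\star})\le\epsilon$ via the regularity assumption, verify convexity and $1$-Lipschitzness, bound the diameter of the $S_1$-ball by $O(\tau k)$, and invoke the $T^{-1/4}$ rate of Hazan--Kale to get the iteration count and the rank bound from the rank-one updates. The only minor difference is that the paper sidesteps your dual-norm concern by working entirely in Frobenius norm---it exhibits a subgradient $(u\otimes v)(x\tp{2})^{\top}$ of unit Frobenius norm and bounds the Frobenius diameter via $\|R\|_F\le\|R\|_{S_1}\le\tau k$---so no operator/trace pairing argument is needed.
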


\begin{lemma}\label{lem:2}
	The Rademacher complexity of the class of function $\Phi = \left\{\left\|Rx\tp{2} - x\tp{2}\right\|_{\tsop}:R \textup{ s.t. }\norm{R}_{S_1} \le \tau k\right\}$ with $m$ examples is bounded from above by at most $\mathcal{R}_m(\Phi)\le 2\tau k\cdot\sqrt{1/m}$
\end{lemma}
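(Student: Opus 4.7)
My plan is to bound $\mathcal{R}_m(\Phi)$ by linearizing the operator norm via duality, applying $S_1$/operator norm duality to eliminate the supremum over $R$, and then invoking a matrix Khintchine-type inequality to handle the resulting Rademacher sum.

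\textbf{Step 1 (Linearization via contraction).}  The map $y \mapsto \|\mathrm{mat}(y)\|_{\tsop}$ is $1$-Lipschitz on $\R^{d^2}$ with respect to the Euclidean norm, since the operator norm is dominated by the Frobenius norm. Applying a contraction inequality (Talagrand's for the scalar case, or Maurer's for the vector-valued version), the Rademacher complexity of $\Phi$ can be dominated by that of the linearized class $\mathcal{G} = \{x\mapsto (R-I)x\tp{2} : \|R\|_{S_1}\le \tau k\}$, when paired with appropriately chosen witnessing directions $W_i$ arising from the variational formula $\|M\|_{\tsop} = \sup_{\|u\|=\|v\|=1}\langle M,\, u\otimes v\rangle$. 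The symmetry of the unit sphere under sign flips is used to absorb $\sigma_i$ into the witness.

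\textbf{Step 2 (Duality on $R$).}  After linearization, the core quantity to bound is
\begin{equation*}
\frac{1}{m}\Exp_{\sigma}\sup_{\|R\|_{S_1}\le \tau k}\Bigl\langle R-I,\; \textstyle\sum_i \sigma_i W_i(x_i\tp{2})^\top\Bigr\rangle.
\end{equation*}
Using the duality $\sup_{\|R\|_{S_1}\le \tau k}\langle R, Z\rangle = \tau k \|Z\|_{\tsop}$ and observing that $\Exp_\sigma \langle I, \sum_i \sigma_i W_i(x_i\tp{2})^\top\rangle = 0$ (since the Rademacher signs have zero mean when $W_i$ are fixed), this is bounded by $\tfrac{\tau k}{m}\,\Exp_\sigma \|\sum_i \sigma_i W_i(x_i\tp{2})^\top\|_{\tsop}$.

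\textbf{Step 3 (Matrix concentration).}  Since each summand $W_i (x_i\tp{2})^\top$ is rank-one with spectral norm at most $\|W_i\|_{\tsop}\cdot\|x_i\tp{2}\|_2 \le 1$ (using the normalization $\|x_i\|=1$), a Khintchine-type bound — or equivalently, $\|\cdot\|_{\tsop}\le \|\cdot\|_F$ combined with $\Exp\|\sum_i \sigma_i M_i\|_F^2 = \sum_i \|M_i\|_F^2 \le m$ — gives $\Exp_\sigma \|\sum_i \sigma_i W_i(x_i\tp{2})^\top\|_{\tsop} \le 2\sqrt{m}$. Combining with Step 2 delivers $\mathcal{R}_m(\Phi)\le 2\tau k/\sqrt{m}$, as claimed.

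\textbf{Main obstacle.}  The delicate point is Step 1: a naive application of Maurer's vector contraction introduces a factor of $\sqrt{d^2}=d$ (from the dimension of the output space), which would spoil the clean $2\tau k/\sqrt{m}$ bound. Avoiding this factor requires exploiting the specific \emph{rank-one} structure of the witnessing directions $u\otimes v$ in the operator-norm variational formula, so that the contraction is effectively a scalar (rather than fully vector-valued) operation. Carefully absorbing $\sigma_i$ into the witness while maintaining this rank-one structure, without sacrificing Rademacher cancellation in the summation over $i$, is where the proof must proceed with care.
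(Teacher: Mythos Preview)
Your approach is far more elaborate than the paper's.  The paper offers only a one-sentence justification: the bound is said to follow from the fact that $\|Rx^{\otimes 2}-x^{\otimes 2}\|_{\tsop}\le 2\tau k$ whenever $\|x\|\le 1$ and $\|R\|_{S_1}\le\tau k$.  (Boundedness of the loss by $M$ does not by itself yield $\mathcal{R}_m\le M/\sqrt{m}$ for an infinite class, so the paper's argument is at best extremely terse; but that is all the paper provides.)

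That said, your proposal has a genuine gap, and it is precisely the one you flag under ``Main obstacle.''  The witnesses $W_i=u_i\otimes v_i$ that realize $\|\mathrm{mat}((R-I)x_i^{\otimes 2})\|_{\tsop}$ via the variational formula depend on $R$, and hence---after you take the supremum over $R$---implicitly on $\sigma$.  In Step~2 you invoke $\Exp_\sigma\langle I,\sum_i\sigma_i W_i(x_i^{\otimes 2})^\top\rangle=0$ ``since the Rademacher signs have zero mean when $W_i$ are fixed,'' but the $W_i$ are \emph{not} fixed; they move with $\sigma$.  The same issue recurs in Step~3, where the Khintchine/Frobenius bound treats each summand $\sigma_i W_i(x_i^{\otimes 2})^\top$ as if $W_i$ were deterministic.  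The ``absorb $\sigma_i$ into the witness'' idea does not repair this: for $\sigma_i=-1$ one has $\sigma_i\|M_i\|_{\tsop}=-\max_{W}\langle W,M_i\rangle=\min_{W}\langle W,M_i\rangle$, which is \emph{not} $\max_W\langle -W,M_i\rangle$, so flipping the sign of the witness does not convert the min back into a max.  Consequently the chain of inequalities from Step~1 to Step~3 does not close, and you have not shown how to avoid the dimension factor that Maurer's vector contraction would introduce.
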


Here Lemma~\ref{lem:2} follows from the fact that $\left\|Rx\tp{2} - x\tp{2}\right\|_{\tsop}$ is bounded above by $2\tau k$ when $\|x\|\le 1 $ and $\|R\|_{S_1}\le \tau k$. The rest of the section focuses on the proof of Lemma~\ref{thn:spectral_decoding}.

Lemma~\ref{thn:spectral_decoding} basically follows from the fact that $f$ is Lipschitz and guarantees of the Frank-Wolfe algorithm. \begin{proposition}\label{prop:lipschitz}
	The objective function $f(R)$ is convex and 1-Lipschitz.  Concretely, 
	Let $\ell_x(R) = \norm{Rx\tp{2}-x\tp{2}}_{\tsop}$. Then 
	\begin{equation}
	\partial \ell_x \ni  (u\otimes v) (x\tp{2})^{\top}\nonumber
	\end{equation}
	where $\partial \ell_x$ is the set of sub-gradients of $\ell_x$ with respect to $R$, and $u,v\in \R^d$ are (one pair of) top left and right singular vectors of $\mat(Rx\tp{2}-x\tp{2})$. 
	\end{proposition}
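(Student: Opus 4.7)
The plan is to establish the three claims---convexity of $f$, $1$-Lipschitz continuity, and the subgradient formula---in sequence, each by a short direct computation.

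For convexity, I would observe that the map $R \mapsto Rx\tp{2} - x\tp{2}$ is affine in $R$, and the map $v \mapsto \|\mat(v)\|_{\tsop}$ is convex since the operator norm is a norm and $\mat$ is linear. Hence $\ell_x(R) = \|\mat(Rx\tp{2} - x\tp{2})\|_{\tsop}$ is convex as a composition of a convex function with an affine map, and $f(R) = \Exp_{x\sim \D}[\ell_x(R)]$ inherits convexity by linearity of expectation.

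For Lipschitz continuity, I would use the elementary bound $\|M\|_{\tsop} \le \|M\|_F$ for any matrix $M$, together with the unit-norm assumption which gives $\|x\tp{2}\|_2 = \|x\|^2 \le 1$. By the triangle inequality for $\|\cdot\|_{\tsop}$ and linearity of $\mat(\cdot)$,
\begin{align*}
|\ell_x(R_1) - \ell_x(R_2)|
&\le \|\mat((R_1-R_2) x\tp{2})\|_{\tsop} \\
&\le \|(R_1-R_2) x\tp{2}\|_2 \\
&\le \|R_1 - R_2\|_{\tsop}.
\end{align*}
Taking expectation over $x\sim \D$ preserves the inequality, showing that $f$ is $1$-Lipschitz in the operator norm on $R$.

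For the subgradient formula, I would combine the chain rule with the standard variational subdifferential of the operator norm: if $M$ has top left/right singular vectors $u,v$, then $uv^{\top} \in \partial \|\cdot\|_{\tsop}(M)$, since $\langle uv^{\top}, M\rangle = u^{\top}Mv = \|M\|_{\tsop}$ while $\langle uv^{\top}, M'\rangle \le \|M'\|_{\tsop}$ for any $M'$ by the max characterization of the operator norm. Pulling this through the affine map $R \mapsto \mat(Rx\tp{2} - x\tp{2})$, for any perturbation $\Delta R$,
\begin{align*}
\ell_x(R+\Delta R) - \ell_x(R)
&\ge \langle uv^{\top},\, \mat(\Delta R \cdot x\tp{2})\rangle \\
&= \langle u\otimes v,\, \Delta R \cdot x\tp{2}\rangle \\
&= \langle (u\otimes v)(x\tp{2})^{\top},\, \Delta R\rangle,
\end{align*}
which certifies $(u\otimes v)(x\tp{2})^{\top} \in \partial \ell_x(R)$. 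The only bookkeeping subtlety---and the nearest thing to a difficulty---is the identification between vectors in $\R^{d^2}$ and matrices in $\R^{d\times d}$ via $\mat(\cdot)$, and in particular the isometry $\langle \mat(a), \mat(b)\rangle = \langle a,b\rangle$ that converts the matrix inner product $\langle uv^{\top}, \mat(\cdot)\rangle$ into the vector inner product producing the specific rank-one form claimed.
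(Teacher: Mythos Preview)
Your proposal is correct and follows essentially the same approach as the paper: both establish the subgradient formula via the chain rule, using that $uv^{\top}$ is a subgradient of $\|\cdot\|_{\tsop}$ at $M$ when $u,v$ are top singular vectors, and then pull this back through the affine map $R\mapsto Rx\tp{2}-x\tp{2}$. Your write-up is in fact more complete than the paper's terse proof---you spell out convexity and the $1$-Lipschitz bound explicitly (the paper leaves both implicit, presumably as consequences of the subgradient having norm at most $1$), and you carefully track the $\mat(\cdot)$ isometry that justifies the rank-one form.
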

\begin{proof}
	This simply follows from calculating gradient with chain rule. Here we use the fact that $A\in (\R^{d})^{\otimes 2}$, the sub-gradient of $\|A\|_{\op}$ contains the vector $a\otimes b$ where $a,b$ are the top left and right singular vectors of $\mat(A)$. We can also verify by definition that $(u\otimes v) (x\tp{2})^{\top}$ is a sub-gradient. 
	\begin{align}
	f(R') - f(R) &\ge (u\otimes v)^{\top} (R'x\tp{2} - R x\tp{2}) \tag{by convexity of $\|\cdot \|_{\op}$}\\
	& = \inner{(u\otimes v) (x\tp{2})^{\top}, R'-R}  \nonumber\mper
	\end{align}
\end{proof}

Now we are ready to prove Lemma~\ref{thn:spectral_decoding}.

\begin{proof}[Proof of Lemma~\ref{thn:spectral_decoding}]
	Since $\cD$ is $(k,\epsilon)$-regularly decodable, we know that there exists a rank-$k$ solution $R^*$ with $f(R^*)\le \epsilon$. Since $\norm{R^*}_{\op}\le \tau$, we have that $\norm{R}_{S_1}\le \textup{rank}(R^*)\cdot \|R\|_{\op}\le \tau k$. Therefore $R^*$ is feasible solution for the objective~\eqref{eqn:objective_relaxation} with $f(R^*)\le \epsilon$. 
	
	By Proposition~\ref{prop:lipschitz}, we have that $f(R)$ is 1-Lipschitz. Moreover, for any $R,S$ with $\|R\|_{S_1}\le \tau k, \|S\|_{S_1}\le \tau k$ we have that $ \|R-S\|_F \le  \|R\|_{F} + \|S\|_F\le \|R\|_{S_1}+ \|S\|_{S_1}\le 2\tau k$. Therefore the diameter of the constraint set is at most $\tau k$. 
	
	By~\cite[Theorem 4.4]{DBLP:conf/icml/HazanK12},
		we have that Frank-Wolfe algorithm returns solution $R$ with $f(R)-f(R^*)\le \epsilon+\delta$ in $\left(\frac{\tau k}{\delta}\right)^4$ iteration. 
\end{proof}

\section{Shorter codes with relaxed objective for Polynomial Spectral Components Analysis}\label{sec:shorter_code_appendix}

\paragraph{Notations}For a matrix $A$, let $\sigma_1(A)\ge \sigma_2(A)\ge ..$ be its singular values. Then the Schatten p-norm, denoted by $\|\cdot \|_{S_p}$, for $p\ge 1$ is defined as $\|A\|_{S_p} = \left(\sum_i \sigma_i(A)^p\right)^{1/p}$.  For even integer $p$, an equivalent and simple definition is that $\|A\|_{S_p}^p \triangleq \trace((A^{\top}A)^{p/2})$.

In this section we consider the following further relaxation of objective~\eqref{eqn:objective_relaxation}. 
\begin{align}
\textup{minimize}~~f_4(R) & := \Exp\left[\left\|Rx\tp{2} - x\tp{2}\right\|_{S_p}^2\right] \label{eqn:objective_relaxation_schatten}\\
\textup{s.t.}~& \norm{R}_{S_1} \le \tau k\nonumber
\end{align}
Since $\|A\|_F \ge \|A\|_{S_4}\ge \|A\|_{S_{\infty}} = \|A\|$, this is a relaxation of the objective~\eqref{eqn:objective_relaxation}, and it interpolates between kernal PCA and spectral decoding. Our assumption is weaker than kernal PCA but stronger than spectral decodable.  
\begin{definition}[Extension of definition~\ref{def:regularly_decodable}]\label{def:regularly_decodable_Sp}
	We say a data distribution $\D$ is $(k,\epsilon)$-\textit{regularly} spectral decodable with $\|\cdot\|_{S_p}$ norm if the error $E$ in equation~\eqref{eqn:1} is bounded by $\|E\|_{S_p}\le \epsilon$.  				\end{definition}

We can reduce the length of the code from $O(k^4)$ to $O(k^2)$ for any constant $p$. 
\begin{theorem}\label{thm:extension}
	Suppose data distribution is $(k,\epsilon)$-spectral decodable with norm $\|\cdot\|_{S_p}$ for $p = O(1)$, then solving ~\eqref{eqn:objective_relaxation_schatten} using (usual) Frank-Wolfe algorithm gives a solution $\hat{R}$ of 
		$k' = O(k^2\tau^2/\epsilon^2)$ with $f(R)\le \epsilon+ \delta$.  
	As a direct consequence, we obtain encoding and decoding pair $(g_A,h_B)\in \cH^{\spd}_{k'}$ with $k' = O(k^2\tau^2/\epsilon^2)$ and reconstruction error $\epsilon+\delta$. \end{theorem}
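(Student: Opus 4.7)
The plan is to follow the template of the proof of Theorem~\ref{thm:main-sdp}, but replace the non-smooth Frank--Wolfe step by its smooth counterpart, exploiting the fact that $\|\cdot\|_{S_p}^2$ is smooth for constant $p\ge 2$ whereas the operator-norm objective of~\eqref{eqn:objective_relaxation} is only Lipschitz. This change improves the convergence rate from $O(1/T^{1/4})$ (as in Lemma~\ref{thn:spectral_decoding}) to $O(1/T)$, which is precisely what takes the rank bound from $O(k^4\tau^4/\epsilon^4)$ down to $O(k^2\tau^2/\epsilon^2)$.

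First I would exhibit a good feasible point. By $(k,\epsilon)$-regular spectral decodability in Schatten-$p$ norm (Definition~\ref{def:regularly_decodable_Sp}), the matrix $R^{\star}=BA$ has rank at most $k$ and operator norm at most $\tau$, so $\|R^{\star}\|_{S_1}\le k\cdot \|R^{\star}\|_{\op}\le \tau k$, while Jensen gives $f_4(R^{\star})\le \epsilon^2$. Next, the analytic heart of the argument is to show that the map $R\mapsto \|Rx\tp{2}-x\tp{2}\|_{S_p}^2$ is $O(p)$-smooth in Frobenius norm, uniformly over $\|x\|=1$. For even $p$ this follows from the classical fact that the squared Schatten-$p$ norm is $2$-uniformly smooth with modulus of order $p$ (Ball--Carlen--Lieb type inequalities); equivalently, $\|A\|_{S_p}^p=\trace((A^{\top}A)^{p/2})$ is a polynomial whose second derivative in any direction $H$ is bounded by $O(p)\,\|H\|_F^2$ on the relevant Frobenius ball. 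Taking expectation over $x$ preserves this smoothness.

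With smoothness in hand, I would invoke the standard smooth Frank--Wolfe guarantee. The constraint set has Frobenius diameter $D\le 2\tau k$ (since $\|R\|_F\le \|R\|_{S_1}\le \tau k$), and each Frank--Wolfe iterate is the top singular direction of the gradient, hence rank-one. The standard bound therefore yields $f_4(\hat R)-f_4(R^{\star}) = O(L D^2/T) = O(p\tau^2 k^2/T)$ after $T$ steps, with $\rank(\hat R)\le T$. Taking $T = O(\tau^2 k^2/\delta^2)$ gives $f_4(\hat R)\le \epsilon^2+\delta^2$. Since $\|\cdot\|_{\op}\le \|\cdot\|_{S_p}$, a second application of Jensen yields $\E\|\hat Rx\tp{2}-x\tp{2}\|_{\op}\le \epsilon+\delta$; Wedin's theorem then converts this into an $O(\epsilon+\delta)$ reconstruction bound for $v_{\max}(\mat(\hat R x\tp{2}))$ exactly as in the discussion following Definition~\ref{def:regularly_decodable}. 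Factoring $\hat R=\hat B\hat A$ with $\hat A\in\R^{k'\times d^2}$ and $\hat B\in\R^{d^2\times k'}$ produces the desired pair in $\cH^{\spd}_{k'}$, and generalization from empirical to population objective follows from Theorem~\ref{thm:rademacher-generalization} via a Rademacher bound identical in structure to Lemma~\ref{lem:2}: the class is still a Lipschitz function of a linear map with bounded nuclear norm, with Lipschitz constant controlled by $p$.

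The main obstacle is the smoothness estimate: dimension-independence of the modulus of uniform smoothness of $\|\cdot\|_{S_p}^2$ is essential, since any dimension dependence there would neutralize the gain from $k^4$ to $k^2$ (and in fact reintroduce the dimension into the rank of the encoding). Once that ingredient is in hand, the remainder is a mechanical adaptation of the proof of Theorem~\ref{thm:main-sdp}.
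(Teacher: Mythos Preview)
Your proposal is correct and follows essentially the same route as the paper: exhibit the feasible point $R^\star=BA$ with $\|R^\star\|_{S_1}\le \tau k$, establish that $f_p$ is convex and $O(p)$-smooth (this is exactly the paper's key proposition), and then invoke the standard smooth Frank--Wolfe rate $O(LD^2/T)$ with diameter $D\le 2\tau k$ to obtain the rank bound $k'=O(\tau^2k^2/\delta^2)$. The only cosmetic difference is how the smoothness is justified: the paper proves it by a direct second-order expansion of $\trace((A^\top A)^{p/2})$ combined with Lieb--Thirring, whereas you appeal to the Ball--Carlen--Lieb uniform smoothness of $\|\cdot\|_{S_p}^2$; both yield the same dimension-free $O(p)$ constant, so this is a matter of citation rather than strategy.
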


The main advantage of using relaxed objective is its smoothness. This allows us to optimize over the Schatten 1-norm constraint set much faster using usual Frank-Wolfe algorithm. Therefore the key here is to establish the smoothness of the objective function. Theorem~\ref{thm:extension} follows from the following proposition straightforwardly.

\begin{proposition}
	Objective function $f_p$ (equation~\eqref{eqn:objective_relaxation_schatten}) is convex and $O(p)$-smooth. 
\end{proposition}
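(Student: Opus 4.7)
The plan is to treat convexity and smoothness separately, reducing the smoothness claim to the classical $2$-uniform smoothness of Schatten $p$-spaces.

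Convexity is immediate: for each fixed $x$ in the support of $\cD$, the map $R \mapsto Rx\tp{2} - x\tp{2}$ is affine, the Schatten $p$-norm $\|\cdot\|_{S_p}$ is convex as a norm, and squaring a nonnegative convex function yields a convex function (since $t\mapsto t^2$ is convex and nondecreasing on $[0,\infty)$). Taking expectation over $x\sim\cD$ preserves convexity, so $f_p$ is convex.

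For smoothness, the key analytic input is the Ball--Carlen--Lieb theorem on $2$-uniform smoothness of Schatten classes: for every $p\ge 2$ the function $\phi(A)=\tfrac{1}{2}\|A\|_{S_p}^2$ satisfies
$$\phi(A+H)\;\le\;\phi(A)\;+\;\langle\nabla\phi(A),H\rangle\;+\;\tfrac{p-1}{2}\|H\|_{S_p}^2$$
for all matrices $A,H$. Since for $p\ge 2$ the singular-value inequality $\|H\|_{S_p}\le \|H\|_{S_2}=\|H\|_F$ holds (monotonicity of $\ell^p$-norms), the same quadratic upper bound is valid with $\|H\|_F^2$ on the right-hand side, so $\phi$ is also $(p-1)$-smooth in the Frobenius norm.

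I would then push this smoothness through the affine map $T_x(R)=\mat(Rx\tp{2}-x\tp{2})$ for each $x$ with $\|x\|\le 1$. Its linear part, $R\mapsto \mat(Rx\tp{2})$, has Frobenius-to-Frobenius operator norm at most $\|x\|^2\le 1$ because $\|\mat(Rx\tp{2})\|_F=\|Rx\tp{2}\|_2\le \|R\|_{\op}\|x\|^2\le \|R\|_F\|x\|^2$. The standard chain rule for smoothness under affine composition then shows that $R\mapsto \phi(T_x(R))$ is $(p-1)$-smooth in the Frobenius norm for every fixed $x$, and averaging over $x\sim\cD$ preserves the constant by linearity of the Hessian. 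Hence $f_p(R)=2\,\Exp_{x\sim\cD}[\phi(T_x(R))]$ is $2(p-1)=O(p)$-smooth, as claimed.

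The only nontrivial ingredient is the Ball--Carlen--Lieb inequality; the remaining steps (affine composition, interchange of Hessian and expectation, and convexity through squaring a nonnegative convex function) are routine convex analysis that can be verified in a few lines in the full write-up. If desired, one can avoid citing Ball--Carlen--Lieb by directly computing the Hessian of $\phi$ for even $p$ using $\|A\|_{S_p}^p=\trace((A^{\top}A)^{p/2})$ and bounding the resulting quadratic form; this is the step I would expect to be the most tedious, but it produces the same $O(p)$ constant.
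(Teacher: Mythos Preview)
Your argument is correct, and the convexity part is identical to the paper's. For smoothness, however, you take a different and cleaner route: you invoke the Ball--Carlen--Lieb $2$-uniform smoothness inequality for Schatten classes to obtain directly that $\tfrac{1}{2}\|\cdot\|_{S_p}^2$ is $(p-1)$-smooth in $\|\cdot\|_{S_p}$, then pass to Frobenius via $\|\cdot\|_{S_p}\le\|\cdot\|_F$ and compose with the affine map $R\mapsto \mat(Rx\tp{2}-x\tp{2})$, whose Frobenius operator norm is at most $\|x\|^2\le 1$. The paper instead does exactly the ``tedious'' alternative you mention at the end: it expands $\|A+E\|_{S_p}^p=\trace((A^\top A+A^\top E+E^\top A+E^\top E)^{p/2})$ for even $p$, bounds the second-order trace terms using the Lieb--Thirring inequality to get $T_2\le O(p^2)\|A\|_{S_\infty}^{p-2}\|E\|_F^2$, and then takes the $2/p$-th power to conclude $O(p)$-smoothness of $\|A\|_{S_p}^2$. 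Your approach buys a sharper explicit constant ($2(p-1)$) and a two-line argument at the cost of citing a nontrivial external result; the paper's approach is more self-contained (modulo Lieb--Thirring) but only works as written for even $p$ and involves more bookkeeping.
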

\begin{proof}
	Since $\|\cdot\|_{S_p}$ is convex and composition of convex function with linear function gives convex function. Therefore, $\left\|Rx\tp{2} - x\tp{2}\right\|_{S_p}$ is a convex function. The square of an non-negative convex function is also convex, and therefore we proved that $f_p$ is convex. We prove the smoothness by first showing that $\|A\|_{S_p}^2$ is a smooth function with respect to $A$. We use the definition $\|A\|_{S_p}^p = \trace((A^{\top}A)^{p/2})$. Let $E$ be a small matrix that goes to 0, we have 
	\begin{align}
	\|A+E\|_{S_p}^p & = \trace((A^{\top}A)^{p/2}) + T_1 + T_2  + o(\|E\|_F^2)
	\end{align}
	where $T_1,T_2$ denote the first order term and second order term respectively. Let $U = A^{\top}E+E^{\top}A$ and $V = A^{\top}A$. We note that $T_2$ is a sum of the traces of matrices like $UVUVU^{p/2-2}$. By Lieb-Thirring inequality, we have that all these term can be bounded by $\trace(U^{p/2-2}V^2) = 2\trace((A^{\top}A)^{p/2-2}A^{\top}EE^{\top}A) + 2\trace((A^{\top}A)^{p/2-2}A^{\top}EA^{\top}E^{\top})$. 
	For the first term, we have that 
	\begin{align}
	\trace((A^{\top}A)^{p/2-2}A^{\top}EE^{\top}A)  & \le \|(AA^{\top})^{(p-2)/4}E\|^2 \le\|(AA^{\top})^{(p-2)/4}\|_{S_\infty}^2\|E\|_F^2 = \|A\|_{S_{\infty}}^{p-2}\|E\|_F^2\nonumber
	\end{align}
	where in the first inequality we use Cauchy-Schwarz. 
	Then for the second term we have
	\begin{align}
	\trace((A^{\top}A)^{p/2-2}A^{\top}EA^{\top}E^{\top}) &\le \|(A^{\top}A)^{(p-2)/4}E\|_F\|AE(A^{\top}A)^{(p-4)/4}\|_F \nonumber\\
	& \le  \|(A^{\top}A)^{(p-2)/4}E\|_F^2 \tag{by Lieb-Thirring inequality}\\
	&\le\|(AA^{\top})^{(p-2)/4}\|^2\|E\|_F^2 = \|A\|_{S_{\infty}}^{p-2}\|E\|_F^2\label{eqn:eqn13}
	\end{align}
	Therefore, finally we got 
	\begin{align}
		T_2 \le O(p^2) \cdot \|A\|_{S_{p-2}}^{p-2}\|E\|_F^2
	\end{align}
	Therefore, we complete the proof by having, 
	\begin{align}
	\|A+E\|_{S_p}^2& \le (\|A\|_{S_p}^p + T_1 + T_2 + o(\|E\|^2))^{2/p}\le \|A\|_{S_p}(1  + T_1'  +  \frac{2}{p\|A\|_{S_p}^{p/2}} T_2) + o(\|E\|^2) \tag{by $(1+x)^{p/2} \le 1+2x/p + o(\|x\|^2)$} \\
	& \le \|A\|_{S_p}^2 + T_1'' + O(p) \|E\|^2 + o(\|E\|^2)\tag{by equation~\eqref{eqn:eqn13}}
	\end{align}
	
	\end{proof}

\section{Toolbox}

\begin{lemma}\label{lem:sos_holder}
	Let $p\ge 2$ be a power of 2 and $u = [u_1,\dots, u_n]$ and $v = [v_1,\dots, v_n]$ be indeterminants. Then there exists SoS proof, 
		\begin{equation}
			\vdash \left(\sum_{j}u_iv_j^{p-1}\right)^p \le \left(\sum_i u_i^p\right)\left(\sum v_i^p\right)^{p-1}
		\end{equation}
\end{lemma}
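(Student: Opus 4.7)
The plan is to induct on $k$ where $p = 2^k$, using the SoS version of Cauchy--Schwarz as both the base case and the single-step tool that relates the $p$-th case to the $(p/2)$-th case.

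For the base case $p=2$, the inequality reads $(\sum_j u_j v_j)^2 \le (\sum_i u_i^2)(\sum_i v_i^2)$, and the standard Lagrange identity
\[ \Bigl(\sum_i u_i^2\Bigr)\Bigl(\sum_i v_i^2\Bigr) - \Bigl(\sum_j u_j v_j\Bigr)^2 = \sum_{i<j}(u_i v_j - u_j v_i)^2 \]
gives an explicit SoS proof of degree $4$.

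For the inductive step, assume the statement for $p/2$. Applying Cauchy--Schwarz to $a_j := u_j v_j^{(p-2)/2}$ and $b_j := v_j^{p/2}$ (legitimate polynomials since $p\ge 4$ is a power of two, so $p-2$ is even) gives
\[ \vdash \Bigl(\sum_j u_j v_j^{p-1}\Bigr)^2 \le \Bigl(\sum_j u_j^2 v_j^{p-2}\Bigr)\Bigl(\sum_j v_j^p\Bigr). \]
Both sides are themselves SoS polynomials---the left is a square, and on the right each $u_j^2 v_j^{p-2} = (u_j v_j^{(p-2)/2})^2$ and each $v_j^p = (v_j^{p/2})^2$ is a square. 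Hence I can raise both sides to the $(p/2)$-th power while preserving SoS-provability, yielding
\[ \vdash \Bigl(\sum_j u_j v_j^{p-1}\Bigr)^p \le \Bigl(\sum_j u_j^2 v_j^{p-2}\Bigr)^{p/2}\Bigl(\sum_j v_j^p\Bigr)^{p/2}. \]
Now I apply the inductive hypothesis with the substitution $u_j \mapsto u_j^2$, $v_j \mapsto v_j^2$, $p \mapsto p/2$ to obtain
\[ \vdash \Bigl(\sum_j u_j^2 v_j^{p-2}\Bigr)^{p/2} \le \Bigl(\sum_j u_j^p\Bigr)\Bigl(\sum_j v_j^p\Bigr)^{p/2-1}, \]
and multiply through by the SoS polynomial $(\sum_j v_j^p)^{p/2}$ to reach the desired inequality.

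The only closure rules of SoS proofs I use are standard: (i) products of SoS polynomials are SoS; (ii) if $A,B$ are SoS and $B-A$ has an SoS proof, then so does $B^k - A^k = (B-A)\sum_{i=0}^{k-1} B^i A^{k-1-i}$, because each summand $B^i A^{k-1-i}$ is SoS; and (iii) multiplying an SoS-provable inequality by an SoS polynomial preserves SoS-provability. I do not anticipate any genuine obstacle here; the only subtlety is the bookkeeping of parity---$p-2$ must be even so that $v_j^{(p-2)/2}$ is a polynomial---which is exactly why the lemma restricts $p$ to be a power of two.
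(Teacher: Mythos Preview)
Your proof is correct and follows essentially the same approach as the paper: induction on $\log_2 p$, with each step reducing to the SoS Cauchy--Schwarz inequality. The paper's sketch for $p=4$ applies Cauchy--Schwarz twice (first to $(\sum u_i^4)(\sum v_i^4)$, then to $(\sum u_i^2 v_i^2)(\sum v_i^4)$), which is exactly your inductive step unrolled; your writeup is simply more explicit about the SoS closure rules (raising both sides to a power, multiplying by an SoS factor, chaining inequalities), which the paper leaves implicit.
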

\begin{proof}[Proof Sketch]
	The inequality follows from repeated application of Cauchy-Schwarz. For example, for $p = 4$ we have 
	\begin{align}
\vdash 	\left(\sum_i u_i^4\right)\left(\sum v_i^4\right)^{3} & \ge \left(\sum_i u_i^2v_i^2\right)^2\left(\sum v_i^4\right)^{2} \tag{by Cauchy-Schwarz}\\
& \ge \left(\sum_i u_iv_i^3\right)^4 \tag{by Cauchy-Schwarz again}
\end{align}
For $p = 2^s$ with $s >2$, the statement can be proved inductively.
\end{proof}

\end{document}